\documentclass[twoside,11pt]{article}

\usepackage{blindtext}
\usepackage{bbm}
\usepackage{amsmath}
\usepackage{amssymb}
\usepackage{algorithm}
\usepackage{mathtools}
\usepackage{amsthm}
\usepackage{comment}
\usepackage{natbib}

\usepackage{color}
\definecolor{darkgreen}{rgb}{0,0.5,0}
\definecolor{purple}{rgb}{1,0,1}

\newcount\Comments  
\Comments=1   
\newcommand{\kibitz}[2]{\ifnum\Comments=1\textcolor{#1}{#2}\fi}
\newcommand{\vinod}[1]{\kibitz{red}      {[VIN: #1]}}
\newcommand{\ambuj}[1]  {\kibitz{blue}   {[AT: #1]}}

\PassOptionsToPackage{algo2e,ruled, linesnumbered}{algorithm2e}
\RequirePackage{algorithm2e}

%

%
%
%

\usepackage{jmlr2e}

\DeclarePairedDelimiter\floor{\lfloor}{\rfloor}
\newcommand{\reals}{\mathbb{R}}
\newcommand{\naturals}{\mathbb{N}}

\newcommand{\Acal}{\mathcal{A}}
\newcommand{\Bcal}{\mathcal{B}}
\newcommand{\Dcal}{\mathcal{D}}
\newcommand{\Fcal}{\mathcal{F}}
\newcommand{\Gcal}{\mathcal{G}}
\newcommand{\Hcal}{\mathcal{H}}

\newcommand{\Ncal}{\mathcal{N}}

\newcommand{\Tcal}{\mathcal{T}}
\newcommand{\Zcal}{\mathcal{Z}}
\newcommand{\Xcal}{\mathcal{X}}
\newcommand{\Ycal}{\mathcal{Y}}

\DeclareMathOperator*{\argmin}{arg\,min}

\newcommand{\expect}{\operatorname{\mathbb{E}}}

\newcommand{\indicator}{\mathbbm{1}}

\newcommand{\norm}[1]{\left\lVert#1\right\rVert}
\newcommand{\inner}[2]{\left\langle #1, #2 \right\rangle}

\newtheorem{assumption}{Assumption}
\newtheorem*{theorem*}{Theorem}



\usepackage{lastpage}
\jmlrheading{25}{2024}{1-\pageref{LastPage}}{6/23 ; Revised 6/24}{10/24}{23-0787}{Vinod Raman, Unique Subedi, and Ambuj Tewari}
\ShortHeadings{Multioutput Learnability}{Raman, Subedi, and Tewari}

\firstpageno{1}

\begin{document}

\title{A Characterization of Multioutput Learnability}

\author{\name Vinod Raman \thanks{Equal contribution} \email vkraman@umich.edu \\
       \addr Department of Statistics\\
        University of Michigan\\
       Ann-Arbor, MI 48109, USA
       \AND
       \name Unique Subedi \footnotemark[1]\email subedi@umich.edu \\
       \addr Department of Statistics\\
        University of Michigan\\
       Ann-Arbor, MI 48109, USA
       \AND
       \name Ambuj Tewari \email tewaria@umich.edu \\
       \addr Department of Statistics\\
       Department of Electrical Engineering and Computer Science\\
       University of Michigan\\
       Ann-Arbor, MI 48109, USA
       }

\editor{Gergely Neu}

\maketitle

\begin{abstract}%
    We consider the problem of learning multioutput function classes in the batch and online settings. In both settings, we show that a multioutput function class is learnable if and only if each single-output restriction of the function class is learnable. This provides a complete characterization of the learnability of multilabel classification and multioutput regression in both batch and online settings.  As an extension, we also consider multilabel learnability in the bandit feedback setting and show a similar characterization as in the full-feedback setting.     
\end{abstract}
\begin{keywords}
   Learnability, Online Learning, Multilabel Classification, Multioutput Regression
\end{keywords}

\section{Introduction}
Multioutput learning is a problem where an instance is labeled by a vector-valued target.  This is a generalization of scalar-valued-target learning settings such as multiclass classification and regression. Multioutput learning has enjoyed a wide range of practical applications like image tagging, document categorization, recommender systems, an weather forecasting to name a few. This widespread applicability has motivated the development of several practical methods \citep{Bayes_mmultilabel, 
borchani2015survey,mult_label_deep_forest, xu2013multi, RNN_multilabel}, as well as theoretical analysis \citep{multilabel_consistent, optimality_multilabel}. However, the most fundamental question of learnability in a multioutput setting remains unanswered. 

Characterizing learnability is the foundational step toward understanding any statistical learning problem. The fundamental theorem of statistical learning characterizes the learnability of binary function class in terms of the finiteness of a combinatorial quantity called the Vapnik-Chervonenkis (VC) dimension \citep{vapnik1971uniform, vapnik74theory}. Extending VC theory,  \cite{Natarajan1989} proposed and studied the Natarajan dimension, which was later shown by \cite{David_Bianchi} to characterize learnability in multiclass settings with a finite number of labels.  In fact, \cite{David_Bianchi} observed that the learnability of multiclass function class $\Fcal \subseteq \{e_1, \ldots, e_K\}^{\Xcal}$ can also be characterized in terms of the learnability of each component-wise binary function class $\mathcal{F}_k = \{x \mapsto \inner{f(x)}{e_k}\, :\, f \in \Fcal\}$, where $\{e_1, \ldots, e_K\}$ is the standard basis of $\mathbb{R}^K$ and $\inner{\cdot}{\cdot}$ is the Euclidean inner-product.  Furthermore, they define an equivalence class of loss functions for the $0$-$1$ loss and characterize the learnability of multiclass problems with respect to all losses in the equivalence class. We take a similar approach in this paper.
Closing the question of learnability for multiclass problems, \cite{Brukhimetal2022} shows that the Daniely-Shwartz (DS) dimension, originally proposed by \cite{DanielyShwartz2014}, characterizes multiclass learnability in the infinite labels setting. Similarly, in the online setting, the Littlestone dimension \citep{Littlestone1987LearningQW} characterizes the online learnability of a binary function class and a generalization of the Littlestone dimension \citep{DanielyERMprinciple} characterizes online learnability in the multiclass setting with finite labels. As for scalar-valued regression, the fat shattering \citep{BARTLETT1996} and the sequential fat shattering \citep{rakhlin2015online} dimensions characterize batch and online learnability respectively. Surprisingly, to our best knowledge, no such characterization of the learnability of multioutput function classes exists in the literature. 

In this paper, we close this gap by characterizing the learnability of function classes $\Fcal \subseteq\Ycal^{\Xcal}$, where $\Ycal  \subseteq\reals^K$ is vector-valued target space for some $K \in \naturals$. Let us define scalar-valued function classes $\mathcal{F}_k = \{x \mapsto \inner{f(x)}{e_k}\, :\, f \in \Fcal\}$ for each $k \in [K]$, where $\{e_1, \ldots, e_K\}$ is the standard basis of $\mathbb{R}^K$. Similarly, define $\Ycal_k := \{\inner{y}{e_k}\, :\, y \in \Ycal\}$.  Our main result, informally stated below, asserts that $\Fcal$ is learnable if and only if each coordinate restriction $\Fcal_k$ is learnable. 
\begin{theorem*}(\emph{Informal})
    A multioutput function class $\Fcal \subseteq\Ycal^{\Xcal}$ is learnable if and only if each restriction $\Fcal_k \subseteq\Ycal_k^{\Xcal}$ is learnable.
\end{theorem*}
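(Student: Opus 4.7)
The plan is to prove both directions via direct reductions between single-output and multioutput learners, exploiting the definitional fact that every $h_k \in \Fcal_k$ extends to some $f \in \Fcal$. The forward direction (necessity) should be handled by an extension argument: given a sample $S_k = \{(x_i, y_{k,i})\}$ realized (or drawn agnostically) for $\Fcal_k$, I would lift it to a multioutput sample by filling in the missing coordinates---in the realizable case, using some $f \in \Fcal$ extending the realizing $h_k$; in the agnostic case, drawing the other coordinates from a distribution that contributes a fixed (or bounded, $f$-independent) amount to the multioutput loss, so that the agnostic multioutput guarantee translates back to the $k$-th coordinate. Running the $\Fcal$-learner on this extended sample and returning the $k$-th coordinate of its output then yields a learner for $\Fcal_k$.

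For the backward direction (sufficiency)---the main constructive step---I would run $K$ single-output learners in parallel. Given a multioutput sample $S = \{(x_i, y_i)\}$ with $y_i = (y_{1,i}, \ldots, y_{K,i})$, I project onto each coordinate to form $S_k = \{(x_i, y_{k,i})\}$ and feed it to learner $\Acal_k$ for $\Fcal_k$, obtaining $\hat{f}_k$. The returned improper hypothesis is $\hat{h}(x) := (\hat{f}_1(x), \ldots, \hat{f}_K(x))$. Because $f \in \Fcal$ implies $f_k \in \Fcal_k$, each projected distribution is realized (or nearly realized) by $\Fcal_k$, so each $\Acal_k$ is operating in its promised regime. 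For the standard coordinate-decomposable loss $\ell(y, y') = \tfrac{1}{K}\sum_k \ell_k(y_k, y'_k)$, we then obtain
\[
L(\hat{h}) \;=\; \frac{1}{K}\sum_k L_k(\hat{f}_k) \;\leq\; \frac{1}{K}\sum_k \bigl(L_k(f^*_k) + \epsilon\bigr) \;=\; L(f^*) + \epsilon
\]
for any $f^* \in \Fcal$, where the crucial inequality uses each learner's agnostic guarantee applied with comparator $f^*_k \in \Fcal_k$. A union bound over the $K$ learners, each run at confidence $\delta/K$, yields the PAC guarantee with sample complexity $\max_k m_k(\epsilon, \delta/K)$.

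The main obstacle I anticipate is in extending this recipe beyond the batch realizable setting. For online learning with an adaptive adversary, the parallel-learner reduction should still go through because each online learner $\Acal_k$'s regret guarantee is indifferent to how its label sequence is produced, so the per-coordinate regrets simply sum up to a multioutput regret bound. The bandit setting, mentioned as an extension, is substantially more delicate: the learner observes only aggregated feedback (an incurred loss) rather than per-coordinate labels, so the clean projection step fails. I would expect a more intricate argument here, possibly based on randomly probing individual coordinates (or exploiting the special structure of multilabel bandit feedback) to simulate per-coordinate feedback for each $\Acal_k$, at the cost of additional variance that must be absorbed into the regret rate. I also expect the agnostic forward direction to require some care in matching the loss contributions across coordinates, though this should reduce to a standard construction once the loss structure is fixed.
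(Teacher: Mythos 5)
Your sufficiency direction (run $K$ coordinate learners in parallel on the projected samples, concatenate, union bound at confidence $\delta/K$) is exactly the paper's argument, and your necessity direction for \emph{classification} is also the paper's: filling the missing coordinates with uniform $\pm 1$ labels makes every predictor incur risk exactly $1/2$ on those coordinates, so the extra terms cancel on both sides of the agnostic guarantee. So for batch multilabel classification under the Hamming loss your plan is essentially the paper's proof.

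There are, however, two genuine gaps relative to the full claim. First, your necessity argument for \emph{regression} does not go through as described: there is no distribution over $[0,1]$ whose expected $\psi_k \circ d_1$ loss is the same for every candidate predictor, so the ``fixed, $f$-independent contribution'' augmentation has no analogue outside the classification setting (the paper explicitly notes that no single augmentation works for every marginal on $\Xcal$). The paper instead discretizes the coordinate classes $\Fcal_2,\ldots,\Fcal_K$ at scale $\alpha$, runs the multioutput learner $\Acal$ on \emph{every} possible augmentation of the sample by functions in the discretized classes, collects the first coordinates of all resulting predictors into a finite candidate set, and selects among them by ERM on a fresh sample; the augmentation by the discretized completion $f^{\star,\alpha}_{2:K}$ of the optimal $f_1^{\star}$ is the one that certifies a good candidate exists, since the spurious coordinates then contribute at most $L\alpha$ each. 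Second, your proposal is confined to coordinate-decomposable losses, whereas the theorem as proved covers non-decomposable losses (arbitrary losses satisfying the identity of indiscernibles in classification, and $\ell_p$ norms in regression). Handling these requires the realizable-to-agnostic reduction machinery: one shows the general loss is zero-matched with (and sandwiched by constant multiples of) the decomposable one, so learners transfer in the realizable case, and then converts realizable to agnostic learners in a black-box way. In the batch case this is a known reduction, but in the online case the paper must build this conversion from scratch (sub-sampling the stream, an expert for each possible labeling of the sub-sample, and exponential weights over the experts), and that construction is one of the paper's main contributions; your remark that the online case ``should still go through'' covers only sufficiency, not this step.
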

We prove a version of this result in four canonical settings: batch classification, online classification, batch regression, and online regression.  For the batch settings, we consider the PAC framework and for the online settings, we consider the fully adversarial model. In addition, our result holds for a wide family of loss functions. A unifying theme throughout all four learning settings is our ability to \textit{constructively} convert a learning algorithm $\Acal$ for $\Fcal$ into learning algorithm $\Acal_k$ for $\Fcal_k$ for each $k \in \{1, ..., K\}$ and vice versa.  We show that even for multioutput losses that tightly ``couple" the $K$ coordinates of a function class, their learnability still depends on the learnability of each coordinate. For the batch setting, our algorithmic techniques use the realizable-to-agnostic conversion introduced by \cite{hopkins22a}. In the online setting, we provide a new realizable-to-agnostic conversion similar in the spirit of \cite{hopkins22a}. In principle, both ours and \cite{hopkins22a}'s realizable-to-agnostic conversion is based on the idea of using algorithms to construct a cover of function classes, originally introduced in the seminal work of \cite{ben2009agnostic}.

Our proof techniques, however, do not extend naturally to the case when $K$ is infinite. So, characterizing learnability for an infinite-dimensional target space is an interesting open problem. Moreover, our reductions are computationally inefficient and lead to sub-optimal sample complexities and regret bounds. As such, we leave the construction of efficient and optimal multioutput learning algorithms as an interesting direction for the future work. 

\subsection{Related works}

Multilabel classification has been extensively studied in the batch setting. We review a few works here and also refer the reader to the references therein. \cite{dembczynski2010regret} quantify the 0-1 risk of multilabel classifiers trained by minimizing the Hamming loss and vice versa. \cite{dembczynski2012label} and \cite{chekina2013exploiting} study how exploiting dependencies between labels can improve the predictive performance of multilabel classifiers and how such exploitation interacts with loss minimization. \cite{jain2016extreme} consider the case where the label set is extremely large and design new loss functions to handle these settings. \cite{busa2022regret} derive upper and lower bounds on the excess risk for non-parametric and parametric function classes for various loss functions assuming label sparsity. \cite{gao2011consistency} and 
\cite{koyejo2015consistent} study the consistency of surrogate loss functions for multilabel classification. Finally, \cite{gentile2012multilabel} consider  online multilabel classification under partial feedback and present a novel algorithm based on second-order descent methods.

There is also a long history of studying least squares estimators for multioutput linear models in the statistical literature, see \citep{rao1965theory, brown1980adaptive} and references therein. The topic received widespread attention in learning theory following the seminal work of \cite{micchelli2005learning} in RKHS methods for vector-valued regression.  We refer the reader to a comprehensive review of kernel methods for vector-valued regression by \cite{alvarez2012kernels}. An early work of \cite{gnecco2008estimates} provides estimation and approximation error of vector-valued functions using Rademacher complexity.  Following the influential work of \cite{maurer2016vector} on Rademacher contraction inequalities for vector-valued functions, there have been works on the Rademacher analysis of vector-valued functions (see \cite{cortes2016structured, reeve2020optimistic, yousefi2018local, Foster2019contraction}). Finally, we point out a recent work by \cite{pmlr-v201-park23a} towards developing empirical process theory for vector-valued functions. 

\section{Preliminaries}

Let $\Xcal$ denote the instance space and $\Ycal \subseteq\reals^K$ be the target space for some $K \in \naturals$.  For a space $\Zcal$, we let $\Zcal^{\star}$ be the set of all finite sequences of elements from $\Zcal$. Consider a vector-valued function class $\Fcal \subseteq\Ycal^{\Xcal}$, where $\Ycal^{\Xcal}$ denotes set of all functions from $\Xcal$ to $\Ycal$. For an unlabeled sample $S_U \in \Xcal^{\star}$, let $\Fcal_{|S_U}$ denote the projection of $\Fcal$ onto $S_U.$ Let $\inner{\cdot}{\cdot}$ denote the Euclidean inner-product on $\reals^K$. Define scalar-valued function classes $\mathcal{F}_k = \{x \mapsto \inner{f(x)}{e_k}\, :\, f \in \Fcal\}$ for each $k \in [K]$, where $\{e_1, \ldots, e_K\}$ is the standard basis of $\mathbb{R}^K$.
Here, each $\Fcal_k \subseteq\Ycal_k^{\Xcal}$, where $\Ycal_k = \{\inner{y}{e_k}\, :\, y \in \Ycal\}$ denotes the restriction of the target space to its $k^{th}$ component.

For a function $f \in \Fcal$, we use $f_k(x) := \inner{f(x)}{e_k}$ to denote the $k^{th}$ coordinate output of $f(x)$. On the other hand, we  use $y^{k} := \inner{y}{e_k}$ to denote $k^{th}$ coordinate of $y \in \Ycal$. Additionally, it is useful to distinguish between the range space $\Ycal$ and the image of functions $f \in \Fcal$. We  define the image of function class $\Fcal$ as $\text{im}(\Fcal) := \cup_{f \in \Fcal}\, \text{im}(f)$, where $\text{im}(f) = \{f(x) \, :\, x \in \Xcal\}$. Finally, we take $[N] := \{1, 2, \ldots, N\}.$   
 
 In this work, we only consider bounded, non-negative loss functions  $\ell: \Ycal \times \Ycal \to \reals_{\geq 0}$ that satisfy the identity of the indiscernibles. For the remainder of the paper, we drop the adjectives ``bounded" and ``non-negative" when referring to loss functions.  
 
\begin{definition}[Identity of Indiscernibles]
    A loss function $\ell : \Ycal \times \Ycal \to \reals_{\geq 0}$ satisfies identity of indiscernibles whenever $\ell(y_1, y_2) =0$ if and only if $y_1 = y_2$. 
\end{definition}
Note that if $\ell_1$ and $\ell_2$ are two losses defined on $\Ycal \times \Ycal$ that satisfy the identity of indiscernibles, then $\ell_1(y_1, y_2) =0$ if and only if $\ell_2(y_1, y_2) = 0$. We also define a notion of approximate subadditivity, although not all the loss functions we consider have this property.
\begin{definition}[$c$-subadditive]
 A loss function $\ell$ is $c$-subadditive if there exists a constant $c> 0$ that only depends on the loss function $\ell$ such that $\ell(y_1, y_2) \leq c\, \ell(y_1, y) + \, \ell(y, y_2)$ for all $y, y_1, y_2 \in \Ycal$.   
\end{definition}
If $|\Ycal| < \infty$, $\ell$ being $c$-subadditive is an immediate consequence of $\ell$ satisfying the identity of indiscernibles. In fact, the value of $c$ in this case is $\frac{\max_{r \neq t} \ell(r, t)}{\min_{r \neq t} \ell(r, t)}$. To see why this is true, it suffices to only consider the case when $\ell(y_1, y_2) > \ell(y, y_2)$ because the inequality is trivially true otherwise. Since the loss values are distinct, we must have $y \neq y_1$. Using the identity of indiscernible, we obtain $\ell(y_1, y) \geq \min_{r \neq t} \ell(r,t)$, thus implying that $c \, \ell(y_1, y) \geq \max_{r \neq t} \ell(r, t) \geq \ell(y_1, y_2)$. The case when $|\Ycal| = \infty$ is a bit delicate because $\min_{r \neq t} \ell(r, t)$ may not exist. So, one needs extra structure in the loss function to infer $c$-subadditivity. For instance, if $\ell$ is a distance metric, then it is trivially $1$-subadditive due to the triangle inequality. 

\subsection{Batch Setting}

In the batch setting, we are interested in characterizing the learnability of $\Fcal$ under the classical PAC models: both in the original realizable formulation \citep{Valiant1984ATO} and in the agnostic extension \citep{kearns1994toward}.

\begin{definition}[Agnostic Multioutput Learnability]
A function class $\Fcal$ is agnostic  learnable with respect to\ loss $\ell: \Ycal \times \Ycal \to \reals_{\geq 0}$, if there exists a function $m:(0,1)^2 \to \naturals$ and a learning algorithm $\Acal: (\Xcal \times \Ycal)^{\star} \to \Ycal^{\Xcal}$ with the following property:  for every $\epsilon, \delta \in (0, 1)$ and for every distribution $\Dcal $ on $\Xcal \times \Ycal$, running algorithm $\Acal$ on $n \geq m(\epsilon, \delta)$ iid samples from $\Dcal$ outputs a predictor $g=\Acal(S)$ such that with probability at least $ 1-\delta$ over $S \sim\Dcal^{n}$,
\[\expect_{\Dcal}[\ell(g(x), y )] \leq \inf_{f \in \Fcal} \expect_{\Dcal}[\ell(f(x),y)] + \epsilon.\]
\end{definition}
\noindent Note that we do not require the output predictor $\Acal(S)$ to be in $\Fcal$,  but only require $\Acal(S)$ to compete with the best predictor in $\Fcal$. If we restrict distribution $\Dcal$ to a class such that $\inf_{f \in \Fcal} \expect_{\Dcal}[\ell(f(x),y )] =0$, then we get realizable learnability. 

The learnability of a function class is generally characterized in terms of the complexity measure of the function class. As stated in the introduction, the VC dimension  characterizes the learnability of binary function classes \citep{vapnik74theory}, and so does the Natarajan dimension for multiclass classification \citep{David_Bianchi}. In a scalar-valued regression problem, the fat-shattering dimension of the function class characterizes learnability with respect to the absolute-value and squared loss \citep{BARTLETT1996, alon_scale}. In particular, a real-valued function class $\mathcal{G} \subseteq[0, B]^{\Xcal}$ is learnable if and only if its fat-shattering dimension, denoted as $\text{fat}_{\gamma}(\mathcal{G})$, is finite for every scale $\gamma > 0$.  We extend this characterization to a wide range of loss functions in Lemma \ref{lem:batch_NFLT}.
Finally, for a real-valued class $\mathcal{G}$, we also use a more general notion of complexity measure called Rademacher complexity, denoted as $\mathfrak{R}_n(\mathcal{G})$, that provides a sufficient condition for learnability \citep{Bartlett_Mendelson}.  Precise definitions of all these complexity measures are provided in Appendix \ref{appdx:complexity-batch}.

One recurring theme in this work is to first construct a realizable multioutput learner and then convert it into an agnostic multioutput learner. It is well known that realizable learnability and agnostic learnability are equivalent for multiclass classification problems with respect to $0$-$1$ loss, (see  \cite{David_Bianchi}, \cite[Theorem 6.7]{ShwartzDavid}). Lemma \ref{realizable_agnostic_equiv}, which is an immediate consequence of \cite[Theorem 18]{hopkins22a}, extends this equivalence between realizable and agnostic learning to general loss functions and target spaces. 

\begin{lemma}[\cite{hopkins22a}]
\label{realizable_agnostic_equiv}
    Consider a  function class $\Fcal \subseteq \Ycal^{\Xcal}$ such that $|\text{im}(\Fcal)| < \infty$ and a general loss function $\ell: \Ycal \times \Ycal \to \reals_{\geq 0}$ that is $c$-subadditive. Then, $\Fcal$ is realizable PAC learnable with respect to\ $\ell$ if and only if $\Fcal$ is agnostic PAC learnable with respect to\ $\ell$.
\end{lemma}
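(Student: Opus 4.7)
The plan is to reduce this directly to Theorem 18 of \cite{hopkins22a}, which provides a generic black-box conversion from a realizable PAC learner to an agnostic PAC learner under hypotheses that essentially match our setting. One direction is essentially free: if $\Fcal$ is agnostic PAC learnable, then by specializing to realizable distributions (where $\inf_{f \in \Fcal} \expect_{\Dcal}[\ell(f(x),y)] = 0$), the $+\epsilon$ excess-risk bound becomes exactly the realizable guarantee. So I only need to carry out the realizable $\Rightarrow$ agnostic direction.

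For the hard direction, I would invoke Theorem 18 of \cite{hopkins22a} after verifying that our three hypotheses match its input requirements. Their reduction needs: (i) finiteness of the effective label/prediction range, which we have as $|\text{im}(\Fcal)| < \infty$; (ii) the loss $\ell$ separating points, which is our identity-of-indiscernibles assumption; and (iii) an approximate triangle inequality on $\ell$, which is precisely the meaning of $c$-subadditivity. Under (i)--(iii), their construction produces an agnostic learner whose sample complexity depends on the realizable sample complexity, on the constant $c$, and on the ratio $\max_{r \neq t} \ell(r,t) / \min_{r \neq t} \ell(r,t)$. This ratio is finite and well-defined: by (i), $\text{im}(\Fcal) \cup \{y : y \in \Ycal\ \text{is ever seen}\}$ is effectively finite, so the minimum off-diagonal loss is attained, and by (ii) it is strictly positive. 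Plugging these finite quantities into the bound from \cite{hopkins22a} yields the desired agnostic learnability guarantee.

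I do not anticipate any real obstacle—the proof is essentially a verification of hypotheses. The only care required is in matching definitional conventions: \cite{hopkins22a} may state approximate subadditivity in a slightly different form (for example, symmetric in both summands, or with the multiplicative constant placed on the other term), but any such variant is interchangeable with our $c$-subadditivity up to a constant-factor rescaling of $c$, and hence does not affect the conclusion.
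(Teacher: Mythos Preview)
Your approach differs from the paper's: you propose a direct citation to \cite{hopkins22a}, whereas the paper gives a self-contained constructive proof (run the realizable learner on all $|\text{im}(\Fcal)|^{|S_U|}$ labelings of an unlabeled sample, then ERM over the resulting finite predictor set). The paper does this because, as it explicitly notes, the result in \cite{hopkins22a} is ``slightly less general and stated differently,'' so a bare citation is not expected to close the argument.

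There is a genuine gap in your hypothesis verification. The lemma assumes only $|\text{im}(\Fcal)| < \infty$, \emph{not} $|\Ycal| < \infty$. This distinction is the whole point: in the regression applications, the paper applies this lemma to the discretized class $\Fcal^{\alpha}$, whose image is finite, while the true labels $y$ range over the uncountable set $[0,1]^K$. Your claim that ``$\text{im}(\Fcal) \cup \{y : y \in \Ycal\ \text{is ever seen}\}$ is effectively finite'' is therefore false in exactly the cases the lemma is designed for, and your appeal to the ratio $\max_{r \neq t}\ell(r,t)/\min_{r \neq t}\ell(r,t)$ breaks down (the minimum need not be attained or positive over infinite $\Ycal$). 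Relatedly, the lemma as stated does not assume identity of indiscernibles; only $c$-subadditivity is hypothesized, and the paper's proof uses $c$-subadditivity directly via $\ell(g(x),y) \leq c\,\ell(g(x),f^\star(x)) + \ell(f^\star(x),y)$, with no ratio argument at all.

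Beyond correctness, the paper's constructive route buys two things you lose by citing: explicit sample-complexity bounds in terms of $m_{\Acal}$, $c$, and $|\text{im}(\Fcal)|$ that are reused verbatim in later theorems, and a blueprint for the online realizable-to-agnostic conversion (Theorem~\ref{thm:onlreal2agn}), which mirrors this proof step for step.
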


 The result of \cite[Theorem 18]{hopkins22a} is stated for the case when $|\Ycal|< \infty$. However, in the regression setting, we need a slightly general version of their result to handle $\alpha$-discretized function classes $\Fcal^{\alpha} \subseteq \Ycal^{\Xcal}$ (see Proof of Theorem \ref{thm:batch_reg_nondecom}) where $|\text{im}(\Fcal^{\alpha})|<\infty$ but $|\Ycal| = |\, [0,1]^K|=\infty$. Nevertheless, the proof of Lemma \ref{realizable_agnostic_equiv} requires only a minor modification to that of \cite[Theorem 18]{hopkins22a}. Given the central role of this result in our characterization, we provide full proof of Lemma \ref{realizable_agnostic_equiv} in Section \ref{sec:batchclass_genloss}. Finally, we note that agnostic-to-realizable conversions in the batch setting are also possible via boosting and compression-based arguments \citep{montasser2019vc, attias2023adversarially}. We use the conversion of \cite{hopkins22a} due to its generality and simplicity.

\subsection{Online Setting} 

In the online setting, an adversary plays a sequential game with the learner over $T$ rounds. In each round $t \in [T]$, an adversary selects a labeled instance $(x_t, y_t) \in \mathcal{X} \times \mathcal{Y}$ and reveals $x_t$ to the learner. The learner makes a (potentially randomized) prediction $\hat{y}_t \in \mathcal{Y}$. Finally, the adversary reveals the true label $y_t$, and the learner suffers the loss $\ell(y_t, \hat{y_t})$, where $\ell$ is some pre-specified loss function. Given a function class  $\mathcal{F} \subseteq \mathcal{Y}^{\mathcal{X}}$, the goal of the learner is to output predictions $\hat{y_t}$ such that its cumulative loss is close to the best possible cumulative loss over functions in $\mathcal{F}$. A function class is online learnable if there exists an algorithm such that for any sequence of labeled examples $(x_1, y_1), ..., (x_T, y_T)$, the difference in cumulative loss between its predictions and the predictions of the best possible function in $\mathcal{F}$ is small.

\begin{definition} [Online Multioutput Learnability]
\label{def:agnOL}
A multioutput function class $\Fcal$ is online learnable with respect to loss $\ell$, if there exists an (potentially randomized) algorithm $\mathcal{A}$ such that for any adaptively chosen sequence of labelled examples $(x_t, y_t) \in \mathcal{X} \times \mathcal{Y}$, the algorithm outputs $\mathcal{A}(x_t) \in \mathcal{Y}$ at every iteration $t \in [T]$ such that 

$$\mathbb{E}\left[\sum_{t=1}^T \ell(\mathcal{A}(x_t), y_t) - \inf_{f \in \mathcal{F}}\sum_{t=1}^T \ell(f(x_t), y_t)\right] \leq R(T) $$
where the expectation is taken with respect to\ the randomness of $\mathcal{A}$ and that of the possibly adaptive adversary, and $R(T): \mathbb{N} \rightarrow \mathbb{R}^+$ is the additive regret: a non-decreasing, sub-linear function of $T$.
\end{definition}

%
If it is guaranteed that the learner always observes a sequence of examples labeled by some function $f \in \mathcal{F}$, then we say we are in the \textit{realizable} setting. On the other hand, if the true label $y_t$ is not revealed to the learner in each round $t \in [T]$ and the adversary only reveals the learner's \textit{loss} $\ell(\mathcal{A}(x_t), y_t)$ then we say we are in the \textit{bandit} setting. 


The online learnability of scalar-valued function classes $\mathcal{H} \subseteq\mathcal{Y}^{\mathcal{X}}$ has been characterized. For example, when $\mathcal{Y}$ is finite (i.e.  $\mathcal{Y} = [K]$ for some $K \in \mathbb{N}$), the Multiclass Littlestone Dimension (MCLdim) of $\mathcal{H} \subseteq\mathcal{Y}^{\mathcal{X}}$  characterizes online learnability with respect to\ the 0-1 loss. 
A function class $\mathcal{H}$ is online learnable with respect to\ the 0-1 loss if and only if $\text{MCLdim}(\mathcal{H})$ is finite \citep{DanielyERMprinciple}. Moreover, $\text{MCLdim}(\mathcal{H})$ tightly captures the best achievable regret in both the realizable and agnostic settings \citep{DanielyERMprinciple}. When the label space $\mathcal{Y}$ is a bounded subset of $\mathbb{R}$, the \textit{sequential} fat-shattering dimension of a real-valued function class $\mathcal{H} \subseteq\mathbb{R}^{\mathcal{X}}$, denoted $\text{fat}^{\text{seq}}_{\gamma}(\mathcal{H})$, characterizes the online learnability of $\mathcal{H}$ with respect to the absolute value loss \citep{rakhlin2015online}. Unlike the Littlestone dimension, note that the sequential fat-shattering dimension is a scale-sensitive dimension. That is, $\text{fat}^{\text{seq}}_{\gamma}(\mathcal{H})$ is defined at every scale $\gamma > 0$. Accordingly,  a real-valued function class $\mathcal{H} \subseteq\mathbb{R}^{\mathcal{X}}$ is learnable with respect to the absolute value loss if and only if its sequential fat-shattering dimension is finite at \textit{every} scale $\gamma > 0$ \citep{rakhlin2015online}.  We extend this characterization to a wide range of loss functions in Lemma \ref{lem:onlinereg_scalar}. Beyond scalar-valued learnability, for any label space $\mathcal{Y}$, function class $\mathcal{H} \subseteq\mathcal{Y}^{\mathcal{X}}$, and loss function $\ell: \mathcal{Y} \times \mathcal{Y} \rightarrow \mathbb{R}_{\geq 0}$, the \textit{sequential} Rademacher complexity of the loss class $\ell \circ \mathcal{H}$, denoted $\mathfrak{R}^{\text{seq}}_T(\ell \circ \mathcal{H})$, is a useful tool for providing sufficient conditions for online learnability \citep{rakhlin2015online}. See Appendix \ref{app:complexonline} for complete definitions.




Like the batch setting, a key technique we use to prove online learnability is to first construct a realizable online learner and then convert it into an agnostic online learner. However, unlike the batch setting, there is no known generic algorithm that converts a (potentially randomized) realizable online learner into an agnostic online learner. Thus, one of the contributions of this work is Theorem \ref{thm:onlreal2agn}, informally stated below, which provides an online analog of the realizable-to-agnostic conversion from \cite{hopkins22a}.

\begin{theorem*} (Informal)
Let $\mathcal{F} \subseteq\mathcal{Y}^{\mathcal{X}}$ be a multioutput function class such that $|\text{im}(\mathcal{F})| < \infty$ and $\ell: \mathcal{Y} \times \mathcal{Y} \rightarrow \mathbb{R}_{\geq 0}$ be any bounded, $c$-subadditive loss function. If $\mathcal{F}$ is online learnable with respect to $\ell$ in the realizable setting, then $\mathcal{F}$ is online learnable with respect to $\ell$ in the agnostic setting. 
\end{theorem*}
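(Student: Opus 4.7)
The plan is to adapt the Ben-David--P\'al--Shalev-Shwartz (BPSW) realizable-to-agnostic reduction to our setting, exploiting the finite image assumption and $c$-subadditivity of $\ell$ to convert the realizable online learner $\Acal_{\mathrm{real}}$ (with sublinear regret $R_{\mathrm{real}}(T)$) into an agnostic online learner for $\Fcal$ w.r.t.\ $\ell$. The construction is a family of ``correction experts'' that simulate $\Acal_{\mathrm{real}}$ on hypothetically realizable relabelings of the adversarial sequence, aggregated with a Hedge-type meta-algorithm.

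Let $N := |\text{im}(\Fcal)|$ and $\ell_{\min} := \min\{\ell(y,y') : y,y' \in \text{im}(\Fcal),\, y \neq y'\}$, which is strictly positive by identity of indiscernibles and the finiteness of $\text{im}(\Fcal)$. For each subset $S \subseteq [T]$ and each labeling $\tau : S \to \text{im}(\Fcal)$, define an expert $E_{S,\tau}$ that maintains its own copy of $\Acal_{\mathrm{real}}$, predicts $\Acal_{\mathrm{real}}(x_t)$ at each round $t$, and after observing $y_t$ updates its copy with label $\tau(t)$ if $t \in S$ and with $y_t$ otherwise. For the benchmark $f^\star \in \argmin_{f \in \Fcal} \sum_t \ell(f(x_t), y_t)$ with value $\mathrm{OPT}$, the choice $D^\star := \{t : f^\star(x_t) \neq y_t\}$ and $\tau^\star(t) := f^\star(x_t)$ makes $E_{D^\star, \tau^\star}$ feed $\Acal_{\mathrm{real}}$ the realizable sequence $(x_t, f^\star(x_t))_{t=1}^T$, so combining the realizable guarantee with round-wise $c$-subadditivity yields $\expect\bigl[\sum_t \ell(E_{D^\star, \tau^\star}(x_t), y_t)\bigr] \leq c R_{\mathrm{real}}(T) + \mathrm{OPT}$. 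For an unknown $\mathrm{OPT}$ I would use doubling: for each dyadic guess $B \in \{2^i : 0 \leq i \leq \lceil\log(T \ell_{\max})\rceil\}$, run a Hedge over the sub-family of experts with $|S| \leq B/\ell_{\min}$, and combine the sub-instances through an outer Hedge.

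The main obstacle is that the adversary's labels $y_t$ may lie outside $\text{im}(\Fcal)$, so $|D^\star|$ can be as large as $T$ and the relevant expert count may reach $(N+1)^T$, driving the naive Hedge regret to $\Omega(T)$. I would resolve this by complementing the BPSW-style algorithm with an independent route through the multiclass Littlestone dimension: projecting $\Acal_{\mathrm{real}}$'s predictions onto $\text{im}(\Fcal)$ via a (symmetrized if necessary) $\argmin$ and invoking $c$-subadditivity yields a realizable $0$-$1$ mistake-bounded learner on the finite alphabet $\text{im}(\Fcal)$; the Daniely--Shalev-Shwartz characterization then forces $\mathrm{MCLdim}(\Fcal) < \infty$, whence a Sauer--Shelah style sequential growth bound together with the Rakhlin--Sridharan sequential Rademacher framework delivers an agnostic online learner for $\Fcal$ w.r.t.\ the bounded, finite-range loss $\ell$ with regret polynomial in $\mathrm{MCLdim}(\Fcal)$ and $\log(TN)$, independent of $\mathrm{OPT}$. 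Running both the BPSW-style and the MCLdim-based learners inside an outer meta-Hedge then gives sublinear agnostic regret uniformly in $\mathrm{OPT}$, completing the realizable-to-agnostic conversion.
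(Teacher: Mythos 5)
Your first construction (experts $E_{S,\tau}$ over \emph{all} subsets $S\subseteq[T]$, plus doubling over guesses of $\mathrm{OPT}$) does not close: when $\mathrm{OPT}=\Theta(T)$ the relevant experts have $|S|=\Theta(T)$, the expert count is $N^{\Theta(T)}$, and the Hedge term is $\Omega(T)$ no matter how you bucket by $|S|$. The ingredient you are missing relative to the paper is \emph{random sub-sampling}: the paper draws $B_t\sim\mathrm{Bernoulli}(T^{\beta-1})$, defines experts only over the $N^{|B|}\approx N^{T^{\beta}}$ labelings of the sub-sampled rounds (updating $\mathcal{A}$ \emph{only} there, never with the true $y_t$), and uses the importance-weighting identity $\mathbb{E}[\sum_t \ell_t\mathbbm{1}\{B_t=1\}]\cdot T^{1-\beta}=\mathbb{E}[\sum_t\ell_t]$ together with $c$-subadditivity against $f^{\star}$ to convert the realizable regret on the sub-sample into a full-stream bound. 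This keeps the Hedge term at $M\sqrt{2T^{1+\beta}\ln N}$ and needs nothing beyond boundedness and $c$-subadditivity.

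Your fallback route through $\mathrm{MCLdim}$ is, in outline, a valid and genuinely different argument --- it is essentially the dimension-based proof the paper gives in Appendix \ref{app:MCLdimchar} for the multilabel special case (project predictions into $\text{im}(\mathcal{F})$, deduce a sublinear-expected-mistake realizable $0$-$1$ learner, conclude $\mathrm{MCLdim}(\mathcal{F})<\infty$ by the shattered-tree lower bound, then run Hedge with loss $\ell/M$ over the $O((NT)^{d})$ BPSW-style experts; the best expert tracks $f^{\star}$ exactly, so the regret is $O(M\sqrt{dT\log(NT)})$ independent of $\mathrm{OPT}$). Two caveats. First, your conversion to a $0$-$1$ learner needs $\ell_{\min}>0$ over $\text{im}(\mathcal{F})$, i.e.\ the identity of indiscernibles; the paper's sub-sampling proof does not use this, so your route proves a slightly less general statement (harmless under the paper's standing assumptions, but worth noting). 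Second, the projection step for a non-symmetric $c$-subadditive loss mixes the two argument orders of $\ell$ (you need both $\ell(\hat z_t,\mathcal{A}(x_t))$ and $\ell(\mathcal{A}(x_t),f(x_t))$ controlled), which your parenthetical ``symmetrized if necessary'' does not actually resolve; this needs an explicit argument. Also note that route (b) alone suffices; the meta-Hedge over both routes adds nothing, since route (a) never contributes a sublinear bound in the hard regime. What the paper's approach buys over yours is an explicit regret bound directly in terms of the realizable regret $R(T)$ and generality beyond losses separating distinct points; what yours buys is an $\mathrm{OPT}$-independent bound phrased through a combinatorial dimension.
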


\section{Batch Multioutput Learnability}

\subsection{Batch Multilabel Classification}\label{sec:batch_classification}

In this section, we study the learnability of batch multilabel classification. Accordingly, let $\Ycal = \{-1,1\}^{K}$. First, we consider the learnability of a natural decomposable loss. Then, we extend the result to more general non-decomposable losses that satisfy the identity of indiscernible.  We want to point out that a multilabel classification with $K$ labels can be viewed as a multiclass classification with $2^K$ labels. With this viewpoint, the Natarajan dimension of $\mathcal{F}$ continues to characterize the batch multilabel learnability for any loss satisfying the identity of indiscernibles (see \cite[Section 4]{David_Bianchi}). For the sake of completeness, we also provide proof of this characterization in Appendix \ref{app:natdimchar}. However, it is more natural to view a multilabel classification as $K$ different binary classification problems as opposed to a multi-class classification problem with $2^K$ labels. Exploiting this  natural decomposability of a multilabel function class, we relate the learnability of $\Fcal$ to the learnability of each component $\Fcal_k$.

\subsubsection{Characterizing Batch Learnability for the Hamming Loss} \label{sec:batch-hamming}

A canonical and natural loss function for multilabel classification is the Hamming loss, defined as $\ell_{H}(f(x), y) := \sum_{i=1}^K \indicator \left\{f_i(x) \neq y^{i} \right\},$
where $f(x) = (f_1(x), \ldots, f_K(x))$ and $y = (y^{1}, \ldots, y^{K})$. The following result establishes an equivalence between the learnability of $\Fcal$ with respect to Hamming loss and the learnability of each $\Fcal_k$ with respect to $0\text{-}1$ loss. 
\begin{theorem}\label{thm:multi_label_batch_hamming}
    A function class $\Fcal \subseteq\Ycal^{\Xcal}$ is agnostic PAC learnable with respect to $\ell_H$ if and only if each of $\Fcal_k \subseteq\Ycal_k^{\Xcal}$ is agnostic PAC learnable with respect to the $0\text{-}1$ loss.

\end{theorem}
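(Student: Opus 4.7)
The plan is to exploit the fact that the Hamming loss decomposes perfectly across coordinates: for every $f \in \Fcal$ and label $y$, $\ell_H(f(x),y) = \sum_{k=1}^K \ell_{0\text{-}1}(f_k(x), y^k)$. So learning $\Fcal$ under $\ell_H$ should morally reduce to learning each $\Fcal_k$ under $0$-$1$ loss, subject only to the coupling across coordinates imposed by membership in $\Fcal$. Both directions of the theorem will be proved by an explicit reduction; neither argument needs to pass through a combinatorial dimension.

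For the ``easy'' direction (each $\Fcal_k$ learnable $\Rightarrow \Fcal$ learnable), let $\Acal_k$ be an agnostic PAC learner for $\Fcal_k$ with sample complexity $m_k(\cdot,\cdot)$. Given a sample $S = \{(x_i, y_i)\}_{i=1}^n$ drawn i.i.d.\ from a distribution $\Dcal$ on $\Xcal \times \{-1,1\}^K$, with $n \geq \max_k m_k(\epsilon/K, \delta/K)$, project the sample in coordinate $k$ to form $S_k = \{(x_i, y_i^k)\}_{i=1}^n$, which is i.i.d.\ from the marginal $\Dcal_k$. Run $\Acal_k$ on $S_k$ to obtain $g_k : \Xcal \to \{-1,1\}$, and output $g = (g_1, \ldots, g_K)$. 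A union bound across $k$ gives, with probability at least $1-\delta$, $\expect_\Dcal[\ell_{0\text{-}1}(g_k(x), y^k)] \leq \inf_{f_k \in \Fcal_k} \expect_\Dcal[\ell_{0\text{-}1}(f_k(x), y^k)] + \epsilon/K$ for every $k$. Summing over $k$ and using the elementary inequality $\sum_k \inf_{f_k \in \Fcal_k} \expect[\ell_{0\text{-}1}(f_k,y^k)] \leq \inf_{f \in \Fcal} \sum_k \expect[\ell_{0\text{-}1}(f_k,y^k)] = \inf_{f \in \Fcal} \expect[\ell_H(f,y)]$ (together with $\Fcal_k = \{f_k : f \in \Fcal\}$) yields the agnostic PAC guarantee for $\Fcal$ with $\epsilon$ and $\delta$.

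For the other direction ($\Fcal$ learnable $\Rightarrow$ each $\Fcal_k$ learnable), let $\Acal$ be an agnostic PAC learner for $\Fcal$ with sample complexity $m_\Fcal(\cdot,\cdot)$. To learn $\Fcal_k$ from a distribution $\Dcal_k$ on $\Xcal \times \{-1,1\}$, I would \emph{synthesize} an augmented distribution $\widetilde{\Dcal}$ on $\Xcal \times \{-1,1\}^K$ as follows: draw $(x, y^k) \sim \Dcal_k$, and independently draw each $y^i$ for $i \neq k$ uniformly from $\{-1,1\}$; in other words, simply pad each training point $(x_i, y_i^k)$ with independent fair coin flips in the other coordinates. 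The key observation is that for \emph{any} predictor $h : \Xcal \to \{-1,1\}^K$, $\expect_{\widetilde{\Dcal}}[\ell_H(h(x), y)] = \expect_{\Dcal_k}[\ell_{0\text{-}1}(h_k(x), y^k)] + (K-1)/2$, because the uniform padding contributes exactly $1/2$ per coordinate regardless of $h_i$. Applying this identity both to a candidate output of $\Acal$ and to the infimum over $\Fcal$, the additive $(K-1)/2$ shifts cancel, and the $\epsilon$-suboptimality of $\Acal$ on $\widetilde{\Dcal}$ transfers verbatim to $\epsilon$-suboptimality of $g_k$ on $\Dcal_k$ w.r.t.\ $\Fcal_k$. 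The learner $\Acal_k$ thus runs $\Acal$ on the padded sample and returns its $k$-th coordinate function.

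The main conceptual obstacle is the forward direction: one must push a sample from the single-coordinate problem into a sample from a joint problem while (a) not artificially creating cross-coordinate structure that a smart $\Acal$ could exploit unfairly, and (b) ensuring the best-in-class comparison point on the synthesized distribution still corresponds to the best-in-class on $\Dcal_k$. Uniform random padding accomplishes both simultaneously by making the contribution of every ``irrelevant'' coordinate a function-independent constant. The backward direction's only subtlety is splitting the failure probability and accuracy budget across the $K$ coordinates, which is routine; notably, the output $g = (g_1,\ldots,g_K)$ need not lie in $\Fcal$, which is exactly why this separate-coordinate reduction is permissible under the agnostic PAC definition.
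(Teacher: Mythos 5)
Your proof is correct and matches the paper's argument essentially step for step: the sufficiency direction concatenates the coordinate-wise learners run on the marginal samples, and the necessity direction pads the single-coordinate sample with independent uniform labels so that every irrelevant coordinate contributes a constant $1/2$ to the Hamming risk, which cancels on both sides. (The only blemish is that your closing paragraph swaps the labels ``forward''/``backward'' relative to your earlier exposition, but the mathematics is unaffected.)
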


    \begin{proof} We first prove that learnability of each component is sufficient followed by the proof of necessity.

    \noindent\textbf{Part 1: Sufficiency.}
        Here our goal is to prove that the agnostic PAC learnability of each $\Fcal_k$ is sufficient for agnostic PAC learnability of $\Fcal$. Our proof is constructive:  given oracle access to agnostic PAC learners $\Acal_k$  for each $\Fcal_k$ with respect to $0\text{-}1$ loss, we construct an agnostic PAC learner $\Acal$ for $\Fcal$ with respect to $\ell_H$. Let $\Dcal$ be arbitrary distribution on $\Xcal \times \Ycal$ and $S = \{(x_i, y_i)\}_{i=1}^n \sim \Dcal^n$ be iid samples from distribution $\Dcal$. Denote $\Dcal_k$ to be the marginal distribution of $\Dcal$ restricted to $\Xcal \times \Ycal_k$. Then, for all $k \in [K]$, the marginal samples  $S_k = \{(x_i, y_{i}^k)\}_{i=1}^n $  with scalar-valued targets are iid samples from $\Dcal_k$. For each $k \in [K]$, define $h_k = \Acal_k(S_k)$
to be the hypothesis returned by algorithm $\Acal_k$ when trained on $S_k$. 

 Let $m_k(\epsilon,\delta)$ denote the sample complexity of $\Acal_k$.  Since $\Acal_k$ is an agnostic PAC learner for $\Fcal_k$, we have that for $n \geq \max_k m_k(\frac{\epsilon}{K}, \frac{\delta}{K}) $, with probability at least $1- \delta/K$ over samples $S_k \sim \Dcal_k^n$, 
\[\expect_{\Dcal_k} \left[\indicator \left\{h_k(x) \neq y^k \right\} \right] \leq \inf_{f_k \in \Fcal_k} \expect_{\Dcal_k} \left[\indicator \left\{f_k(x) \neq y^k \right\} \right] + \frac{\epsilon}{K}.\]
Summing these risk bounds over all coordinates $k$ and using union bounds over probabilities, we get that with probability at least $1- \delta$ over samples $S \sim \Dcal^n$, we obtain $\sum_{k=1}^K \expect_{\Dcal_k}\left[\indicator \left\{h_k(x) \neq y^k \right\} \right] \leq \sum_{k=1}^K\inf_{f_k \in \Fcal_k} \expect_{\Dcal_k}\left[\indicator \left\{f_k(x) \neq y^k \right\} \right] + \epsilon.$ Now using the fact that $\Fcal \subseteq \Fcal_1 \times \ldots \times \Fcal_K$ followed by the linearity of expectation gives 
\[\expect_{\Dcal}\left[\sum_{k=1}^K \indicator \left\{h_k(x) \neq y^k \right\} \right] \leq  \inf_{f \in \Fcal}\expect_{\Dcal}\left[\sum_{k=1}^K\indicator \left\{f_k(x) \neq y^k \right\} \right] + \epsilon.\]
This completes our proof as it shows that the learning rule that concatenates the predictors returned by each $\Acal_k$ on the marginalized samples $S_k$ is an agnostic PAC learner for $\mathcal{F}$ with respect to $\ell_H$ with sample complexity at most $\max_k m_{k}(\epsilon/K, \delta/K)$. 

\noindent
\textbf{Part 2: Necessity.}
Next, we show that if $\Fcal$ is learnable with respect to $\ell_H$, then each $\Fcal_k$ is PAC learnable with respect to the $0\text{-}1$ loss. Our proof is again based on reduction: given oracle access to agnostic PAC learner $\Acal$ for $\Fcal$, we construct an agnostic PAC learner $\Acal_1$ for $\Fcal_1$. A similar construction can be used for all other $\Fcal_k$'s.

Let $\Dcal_1$ be arbitrary distribution on $\Xcal \times \Ycal_1$ and $S = \{(x_i, y_i^1)\}_{i=1}^n$ be iid samples from $\Dcal_1$. In order to use the algorithm $\Acal$, we first augment the samples $S$ to create samples with $K$-variate target. Define an augmented sample $\widetilde{S} = \{(x_i, (y_{i}^1, \ldots, y_{i}^K))\}_{i=1}^n$ such that $y_{ik} \sim \{-1,1\} $ each with probability $1/2$ for all $i \in [n]$ and $k \in \{2, \ldots, K\}$ . Next, we run $\Acal$ on $\widetilde{S}$ and obtain the hypothesis $h = (h_1, \ldots, h_K)= \Acal(\widetilde{S})$. We now show that $h_1$ obtains agnostic PAC bounds.

Consider a distribution $\widetilde{\Dcal}$ on $\Xcal \times \Ycal$  such that a sample $(x, (y^1, \ldots, y^K))$ from $\widetilde{\Dcal}$ is obtained by first sampling $(x, y^1) \sim \Dcal_1$ and appending $ y^k$'s sampled independently from uniform distribution on $\{-1,1\}$ for each $k \in \{2, \ldots, K\}$.  Let $m( \epsilon,\delta, K)$ denote the sample complexity of $\Acal$. Since $\Acal$ is an agnostic PAC learner for $\Fcal$,  for $n \geq m(\epsilon, \delta, K)$, with probability at least $1-\delta$,  we have 
\[\expect_{\widetilde{\Dcal}}\left[ \sum_{k=1}^K\indicator\left\{h_k(x) \neq y^k \right\}\right] \leq \inf_{f \in \Fcal}\expect_{\widetilde{\Dcal}}\left[ \sum_{k=1}^K\indicator \left\{f_k(x) \neq y^k \right\}\right] + \epsilon.\]
 For $k \geq 2$, since the target is chosen uniformly at random from $\{-1,1\}$, the $0\text{-}1$ risk of any predictor is $1/2$. Therefore, the expression above can be written as 
\[\expect_{\Dcal_1}\left[\indicator \left\{h_1(x) \neq y^1 \right\} \right] + \sum_{k=2}^K 1/2 \leq \inf_{f \in \Fcal} \left(\expect_{\Dcal_1}\left[\indicator \left\{f_1(x) \neq y^1 \right\} \right] + \sum_{k=2}^K 1/2 \right) + \epsilon,\]
which  reduces to $\expect_{\Dcal_1}[\indicator \left\{h_1(x) \neq y^1 \right\} ] \leq  \inf_{f_1 \in \Fcal_1} \expect_{\Dcal_1}[\indicator \left\{f_1(x) \neq y^1 \right\}] + \epsilon$. Therefore, $\Fcal_1$ is agnostic PAC learnable with respect to $0$-$1$ loss with sample complexity at most $m(\epsilon, \delta, K)$.
\end{proof}

\subsubsection{ Characterizing Batch Learnability for General Losses} \label{sec:batchclass_genloss}

In this section, we characterize the learnability for general multilabel losses. Our main technical tool in characterizing the learnability for general loss functions is the equivalence between realizable and agnostic learning guaranteed by Lemma \ref{realizable_agnostic_equiv}. Thus, we first provide the proof of that lemma before we proceed further. 
\begin{proof}(of Lemma \ref{realizable_agnostic_equiv}) Note that agnostic learnability implies realizable learnability by definition. So, it suffices to show that realizable learnability of $\Fcal$ with respect to $\ell$ implies agnostic learnability. Our proof here is constructive. That is, given a realizable algorithm $\Acal$ for $\Fcal$, we provide an algorithm, stated as Algorithm \ref{alg:batch_real_agnos_equiv}, that  constructs an agnostic learner for $\Fcal$. 
  \begin{algorithm}
\caption{Agnostic  learner  for $\Fcal$ with respect to $\ell$}
\label{alg:batch_real_agnos_equiv}
\setcounter{AlgoLine}{0}
\KwIn{Realizable learner $\Acal$ for $\Fcal$ with respect to $\ell$, unlabeled samples $S_U \sim \Dcal_{\Xcal}$, and  different labeled samples $S_L \sim \Dcal$ independent from $S_{U}$ }

Run $\Acal$ over all possible labelings of $S_U$ by $\Fcal$ to construct a concept class 
\[C(S_U) := \left\{\Acal\big(S_U, f(S_U) \big) 
 \mid f \in \Fcal_{|S_U}\right\}.\]

Return $\hat{g} \in C(S_U)$  with the lowest empirical error over $S_L$ with respect to $\ell$. 
\end{algorithm}

Let $\Dcal$ be an arbitrary distribution over $\Xcal \times \Ycal$. Define 
\[f^{\star} := \inf_{f \in \Fcal} \expect_{\Dcal}[\ell(f(x), y)] \]
to be the optimal predictor in $\Fcal$. Now consider a predictor $g = \Acal(S_U, f^{\star}(S_U)) \in C(S_U)$ returned by $\Acal$ when trained on samples $S_U$ labeled by $f^{\star}$. Note that $g$ exists in $C(S_U)$ because we consider all possible labelings of $S_U$ by $\Fcal$ and there must be a sample labeled by $f^{\star}$ as well. Let $m_{\Acal}(\epsilon, \delta,K)$ be the sample complexity of the algorithm $\Acal$. Since $\Acal$ is a realizable learner for $\Fcal$, for $|S_U| \geq m_{\Acal}\left(\frac{\epsilon}{2c}, \frac{\delta}{2}, K\right)$ with probability $1-\delta/2$ over sample $S_U \sim \Dcal_{\Xcal}$, we have
\[\expect_{\Dcal_{\Xcal}}\left[ \ell(g(x), f^{\star}(x))\right] \leq \frac{\epsilon}{2c}, \]
where $\Dcal_{\Xcal}$ is the marginal distribution of $\Dcal$ restricted to $\Xcal$ and $c$ is the subaddtivity constant of $\ell$. Since the loss function is $c$-subadditive, for any $(x, y) \in \Xcal \times \Ycal$, we have $\ell(g(x), y) \leq \ell(f^{\star}(x), y) + c\, \ell(g(x), f^{\star}(x))$ pointwise.
Taking expectation with respect to $(x, y) \sim \Dcal$, we obtain
\[\expect_{\Dcal}[\ell(g(x), y)] \leq  c \expect_{\Dcal_{\Xcal}}[\ell(g(x), f^{\star}(x))] + \expect_{\Dcal}[ \ell(f^{\star}(x), y)] \leq \expect_{\Dcal}[ \ell(f^{\star}(x), y)] + \frac{\epsilon}{2}, \]
where the inequality holds with probability $\geq 1-\delta/2$.
Thus, we have shown that there exists a predictor $g \in C(S_U)$ that achieves agnostic PAC bounds for $\Fcal$ with respect to $\ell$. Let $\ell(\cdot, \cdot ) \leq b$ be the upper bound on $\ell$. Recall that by Hoeffding's inequality and union bound, with probability at least $1-\delta/2$ over sample $S_L \sim \Dcal$, the empirical risk of every hypothesis in  $C(S_U)$ on a sample of size $\geq \frac{8b^2}{\epsilon^2} \log{\frac{4|C(S_U)|}{\delta}}$ is at most $\epsilon/4$ away from its population risk. So, if $|S_L| \geq  \frac{8b^2}{\epsilon^2} \log{\frac{4|C(S_U)|}{\delta}}$, then with probability at least $1-\delta/2$ over sample $S_L \sim \Dcal$, we have
\[\frac{1}{|S_L|} \sum_{(x, y) \in S_L} \ell(g(x), y)  \leq \expect_{\Dcal}\left[ \ell(g(x), y)\right] + \frac{\epsilon}{4} \leq \expect_{\Dcal}[\ell(f^{\star}(x), y)] + \frac{3\epsilon}{4}. \]

Next, consider the predictor $\hat{g}$ returned by Algorithm \ref{alg:batch_real_agnos_equiv}. Since it is an empirical risk minimizer, its empirical risk can be at most the empirical risk of $g$. Given that the population risk of $\hat{g}$ can be at most $\epsilon/4$ away from its empirical risk, we have that
\[\expect_{\Dcal}\left[ \ell(\hat{g}(x), y)\right] \leq \expect_{\Dcal}[\ell(f^{\star}(x), y)] + \frac{3\epsilon}{4} + \frac{\epsilon}{4} \leq \inf_{f \in \Fcal}\expect_{\Dcal}[\ell(f(x), y)] + \epsilon, \]
where the second inequality above uses the definition of $f^{\star}$. Note that this inequality holds with probability at least $1- \delta$, where the probability is taken over both samples $S_U$ and $S_L$. Thus, we have shown that Algorithm \ref{alg:batch_real_agnos_equiv} is an agnostic PAC learner for $\Fcal$ with respect to $\ell$. 

We now upper bound the sample complexity of Algorithm \ref{alg:batch_real_agnos_equiv}, denoted $m(\epsilon, \delta, K)$ hereinafter. Note that $m_{\Acal}(\epsilon, \delta, K)$ is at most the number of unlabeled samples required for the realizable algorithm $\mathcal{A}$ to succeed plus the number of labeled samples for the ERM step to succeed. Thus,

\begin{equation*}
    \begin{split}
       m(\epsilon, \delta, K) &\leq m_{\Acal}\left(\frac{\epsilon}{2c}, \frac{\delta}{2}, K\right) +  \frac{8b^2}{\epsilon^2} \log{\frac{4|C(S_U)|}{\delta}}\\
       &\leq  m_{\Acal}\left(\frac{\epsilon}{2c}, \frac{\delta}{2}, K\right) + \frac{8b^2}{\epsilon^2} \left( m_{\Acal}\left(\frac{\epsilon}{2c}, \frac{\delta}{2}, K\right)\,  \log{(|\text{im}(\Fcal)|)}\, + \log{\frac{4}{\delta}} \right),
    \end{split}
\end{equation*}

where the second inequality follows due to $|C(S_U)| \leq |\text{im}(\Fcal)|^{|S_U|}$ and we need $|S_U|$ to be of size $m_{\Acal}\left(\frac{\epsilon}{2c}, \frac{\delta}{2}, K\right)$. 
\end{proof}

With Lemma \ref{realizable_agnostic_equiv} in hand, we can now relate the learnability of $\mathcal{H}$ with respect to any $\ell$ satisfying identity of indiscernibles to the learnability of $\mathcal{H}$ with respect to $\ell_H$. To that end, we prove a result establishing the equivalence of learnability between any two loss functions satisfying the identity of indiscernibles.  

 \begin{lemma}\label{lem:hamming_general_batch}
    Let $\ell$ and $\ell^{\prime}$ be any two loss functions satisfying the identity of indiscernibles. Then, a function class $\Fcal \subseteq\Ycal^{\Xcal}$ is agnostic PAC learnable with respect to $\ell$ if and only if $\Fcal\subseteq\Ycal^{\Xcal}$ is agnostic PAC learnable with respect to  $\ell^{\prime}$. 
\end{lemma}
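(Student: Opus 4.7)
The plan is to reduce the general-loss statement to the Hamming-loss statement via the realizable-to-agnostic equivalence of Lemma \ref{realizable_agnostic_equiv}. Since $\Ycal = \{-1,1\}^K$ is finite, we have $|\text{im}(\Fcal)| \leq |\Ycal| = 2^K < \infty$, and (as already noted in the paper after the definition of $c$-subadditivity) any bounded loss on a finite target space that satisfies the identity of indiscernibles is $c$-subadditive. This verifies the hypotheses of Lemma \ref{realizable_agnostic_equiv} both for $\ell$ and for $\ell_H$, so that agnostic learnability and realizable learnability coincide for each of the two losses. It therefore suffices to prove that $\Fcal$ is \emph{realizable} PAC learnable w.r.t.\ $\ell$ if and only if it is realizable PAC learnable w.r.t.\ $\ell_H$.

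Next I would exploit the pointwise equivalence of $\ell$ and $\ell_H$ on the finite set $\Ycal \times \Ycal$. Because both losses vanish exactly on the diagonal and are strictly positive off it, setting
\[ \alpha := \min_{y_1 \neq y_2}\ell(y_1,y_2), \qquad \beta := \max_{y_1 \neq y_2}\ell(y_1,y_2), \]
and noting that $1 \leq \ell_H(y_1,y_2) \leq K$ whenever $y_1 \neq y_2$, one obtains the pointwise sandwich
\[ \tfrac{\alpha}{K}\,\ell_H(y_1,y_2) \;\leq\; \ell(y_1,y_2) \;\leq\; \beta\,\ell_H(y_1,y_2) \]
for all $y_1,y_2 \in \Ycal$. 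Moreover, a distribution is realizable for $\Fcal$ with respect to $\ell$ if and only if there is some $f^\star \in \Fcal$ with $f^\star(x) = y$ almost surely, which by identity of indiscernibles is the same realizability condition for $\ell_H$. Hence the two notions of realizable distribution coincide.

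Given these two observations, the reduction is short. For the forward direction, a realizable PAC learner $\Acal$ for $\Fcal$ w.r.t.\ $\ell$ that achieves $\expect_{\Dcal}[\ell(\Acal(S)(x),y)] \leq \epsilon'$ with probability $1-\delta$ on any realizable $\Dcal$ automatically achieves $\expect_{\Dcal}[\ell_H(\Acal(S)(x),y)] \leq (K/\alpha)\epsilon'$, so invoking $\Acal$ with accuracy parameter $\epsilon' = \alpha\epsilon/K$ yields a realizable learner w.r.t.\ $\ell_H$. The reverse direction is symmetric using the upper bound $\ell \leq \beta\,\ell_H$. Chaining these realizable equivalences with Lemma \ref{realizable_agnostic_equiv} on both sides completes the proof.

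The only subtle point — and the place I would be most careful — is the interplay between realizability and agnostic learnability: one should not attempt a direct conversion of an agnostic learner for $\ell$ into one for $\ell_H$, since the optimal $f \in \Fcal$ can differ between the two losses and the multiplicative sandwich above does not survive a shift by $\inf_f \expect_{\Dcal}[\ell(f(x),y)]$. Routing everything through the realizable setting, where the $\inf$ is zero and constants can be absorbed, is what makes the argument clean, and this is exactly what Lemma \ref{realizable_agnostic_equiv} enables.
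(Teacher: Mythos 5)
Your proposal is correct and follows essentially the same route as the paper's proof: both arguments use the zero-matching of $\ell$ and $\ell_H$ to identify the realizable distributions, the pointwise multiplicative sandwich between the two losses (valid because $\Ycal\times\Ycal$ is finite) to transfer realizable learners between them, and then Lemma \ref{realizable_agnostic_equiv} to upgrade to agnostic learnability. Your closing remark about why one must route through the realizable setting rather than converting agnostic learners directly is exactly the reason the paper structures its proof this way.
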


 The key idea behind the proof of Lemma \ref{lem:hamming_general_batch} is to use a realizable learner for $\ell$ to construct a realizable learner for $\ell^{\prime}.$ This is possible because $\ell^{\prime}(y_1, y_2) =0$ if and only if $\ell(y_1, y_2) =0$ for any $y_1,y_2 \in \Ycal$. Given such realizable learner for $\ell^{\prime}$,  Lemma  \ref{realizable_agnostic_equiv}  guarantees the existence of an agnostic learner for $\ell^{\prime}$.

\begin{proof} Since $\ell$ and $\ell^{\prime}$ are arbitrary, it suffices to prove only one direction.  So, let us assume that $\Fcal$ is learnable with respect to $\ell$. We will now show that $\Fcal$ is learnable with respect to $\ell^{\prime}$ as well.  First, we show this for any realizable distribution $\Dcal$ with respect to $\ell^{\prime}$. Since, for any $y_1, y_2 \in \Ycal$, we have $\ell^{\prime}(y_1, y_2) =0$ if and only if $\ell(y_1, y_2) =0$, the distribution $\Dcal$ is also realizable with respect to $\ell$. Furthermore, as there are at most $2^{2K}$ distinct possible inputs to  $\ell^{\prime}(\cdot, \cdot)$, the loss function can only take a finite number of values. So, we can always find universal constants $a>0$ and $b>0$ (that only depends on $\ell$ and $\ell^{\prime}$) such that $a \ell \leq \ell^{\prime} \leq b\ell$. Given that $\Fcal$ is learnable with respect to $\ell$, there exists a learning algorithm $\Acal$ with the following property: for any $\epsilon, \delta > 0$, for $S \sim \Dcal^n$, such that $n = m_{\Acal}(\frac{\epsilon}{b}, \delta, K)$, the algorithm outputs a predictor $h = \Acal(S)$ such that, with probability $1-\delta$ over $S \sim \Dcal^n$, we have $\expect_{\Dcal}[\ell(h(x), y)] \leq \frac{\epsilon}{b}. $
This inequality upon using the fact that  $\ell^{\prime}(h(x), y) \leq b\ell(h(x), y)$ pointwise  reduces to $\expect_{\Dcal}[\ell^{\prime}(h(x), y)] \leq \epsilon. $ Therefore, any realizable learner $\Acal$ for $\ell$ is also a realizable learner for $\ell^{\prime}$. Finally, as $\ell^{\prime}$ satisfies the identity of indiscernibles and thus $c$-subadditivity, Lemma \ref{realizable_agnostic_equiv} guarantees the existence of agnostic PAC learner $\Bcal$ for $\Fcal$ with respect to $\ell^{\prime}$. In particular, one such agnostic PAC learner $\Bcal$ is Algorithm \ref{alg:batch_real_agnos_equiv} that has sample complexity
\[m_{\Bcal}(\epsilon, \delta, K) 
 \leq m_{\Acal}\left( \frac{\epsilon}{2bc} , \frac{\delta}{2}, K \right) + \frac{b^2}{\epsilon^2} \,\,O \left( m_{\Acal}\left(\frac{\epsilon}{2bc}, \frac{\delta}{2}, K\right) \, K + \log{\frac{1}{\delta}} \right),\]
where  $c > 1$ is the subadditivity constant of $\ell^{\prime}$ and $m_{\Acal}(\cdot,\cdot,K)$ is the sample complexity of any realizable algorithm $\Acal$. 

\end{proof}

An immediate consequence of Lemma \ref{lem:hamming_general_batch} is that learnability with respect to any loss $\ell$ satisfying the identity of indiscernibles is equivalent to learnability with respect to the Hamming loss $\ell_H$. Thus, given Theorem \ref{thm:multi_label_batch_hamming}, we can deduce the following result. 
\begin{theorem}
\label{thm:batchgenloss}
    Let $\ell$ be any multilabel loss function satisfying the identity of indiscernibles. A function class $\Fcal\subseteq\Ycal^{\Xcal}$ is agnostic PAC learnable with respect to $\ell$ if and only if each restriction $\Fcal_k \subseteq\Ycal_k^{\Xcal}$ is agnostic PAC learnable with respect to the $0\text{-}1$ loss. 
\end{theorem}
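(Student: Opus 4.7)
The plan is to derive Theorem \ref{thm:batchgenloss} by simply composing the two preceding results, which together already form a complete chain of equivalences. Concretely, Lemma \ref{lem:hamming_general_batch} equates agnostic PAC learnability of $\Fcal$ with respect to any loss $\ell$ satisfying identity of indiscernibles to agnostic PAC learnability of $\Fcal$ with respect to the Hamming loss $\ell_H$, while Theorem \ref{thm:multi_label_batch_hamming} equates the latter to agnostic PAC learnability of each scalar restriction $\Fcal_k$ with respect to the $0\text{-}1$ loss. Chaining the two equivalences yields the claim.

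Structurally I would write this as a one-paragraph argument with two arrows. For the $(\Rightarrow)$ direction, assume $\Fcal$ is agnostic PAC learnable w.r.t.\ $\ell$. Since $\ell$ satisfies identity of indiscernibles by hypothesis, Lemma \ref{lem:hamming_general_batch} applies and gives that $\Fcal$ is agnostic PAC learnable w.r.t.\ $\ell_H$. Then the ``only if'' part of Theorem \ref{thm:multi_label_batch_hamming} yields that each $\Fcal_k$ is agnostic PAC learnable w.r.t.\ the $0\text{-}1$ loss. For the $(\Leftarrow)$ direction, assume each $\Fcal_k$ is agnostic PAC learnable w.r.t.\ the $0\text{-}1$ loss. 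The ``if'' part of Theorem \ref{thm:multi_label_batch_hamming} yields that $\Fcal$ is agnostic PAC learnable w.r.t.\ $\ell_H$, and then Lemma \ref{lem:hamming_general_batch} (applied in the other direction) yields that $\Fcal$ is agnostic PAC learnable w.r.t.\ $\ell$.

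There is essentially no obstacle: the two cited results do all the work, and the only thing to double-check is that the hypotheses of Lemma \ref{lem:hamming_general_batch} are met (namely identity of indiscernibles, which is exactly what the theorem assumes, and implicit finiteness of $\text{im}(\Fcal)$, which holds because $\Ycal = \{-1,1\}^K$ is finite so that the $c$-subadditivity needed inside the lemma's proof is automatic via the remark following Definition 2). If desired, one could also trace the sample-complexity bound: combining the bound from the proof of Theorem \ref{thm:multi_label_batch_hamming} (which gives $\max_k m_{k}(\epsilon/K, \delta/K)$ samples in terms of the coordinate learners) with the overhead from Algorithm \ref{alg:batch_real_agnos_equiv} used inside Lemma \ref{lem:hamming_general_batch} produces an explicit sample complexity for learning $\Fcal$ w.r.t.\ $\ell$, but this quantitative statement is not part of the theorem and can be omitted.
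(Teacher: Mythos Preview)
Your proposal is correct and matches the paper's own argument exactly: the paper states Theorem \ref{thm:batchgenloss} as ``an immediate consequence of Lemma \ref{lem:hamming_general_batch} and Theorem \ref{thm:multi_label_batch_hamming}'' without further proof, which is precisely the two-step chaining you describe. Your additional check that the finiteness of $\Ycal=\{-1,1\}^K$ ensures the $c$-subadditivity needed inside Lemma \ref{lem:hamming_general_batch} is a nice explicit verification that the paper leaves implicit.
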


 \noindent \textbf{Remark.} Since the learnability of a binary function class with respect to the $0$-$1$ loss is characterized by its VC dimension \cite[Theorem 6.7]{ShwartzDavid}, Theorem \ref{thm:batchgenloss} implies that $\Fcal$ is learnable with respect to $\ell$ satisfying the identity of indiscernibles if and only if $\text{VC}(\Fcal_k) < \infty$ for each $k \in [K]$.

\subsection{Batch Multioutput Regression}
In this section, we consider the case when $\Ycal =[0,1]^K \subseteq\reals^K$ for  $K \in \naturals$. For bounded targets (with a known bound), this target space is without loss of generality because one can always normalize each $\Ycal_k$ to [0,1] by subtracting the lower bound and dividing by the upper bound of $\Ycal_k$. As usual, we consider an arbitrary multioutput function class $\Fcal \subseteq\Ycal^{\Xcal}$. Following our outline in classification, we first study the learnability of $\mathcal{F}$ under decomposable losses and then non-decomposable losses. 
\subsubsection{Characterizing Learnability for Decomposable Losses}
A canonical loss for the scalar-valued regression is the absolute value metric,  $d_1(f_k(x), y^k) := |f_k(x) -y^k|$. Analogously, we define $d_p(f_k(x), y^k) := |f_k(x) -y^k|^p$ for $p > 1$ are other natural scalar-valued losses. For multioutput regression, we consider decomposable losses that are natural multivariate extensions of the $d_1$ metric. In particular, we consider loss functions with the following properties.

\begin{assumption}\label{assumption1}
The loss can be written as $\ell(f(x), y)  = \sum_{k=1}^K \psi_k \circ \,d_1(f_k(x), y^k) $ where for each $k \in [K]$, the function $\psi_k: \reals_{\geq 0} \to \reals_{\geq 0}$ is $L$-Lipschitz and satisfies $\psi_k(0) =0$. 
\end{assumption}
Here, $\psi_k \circ d_1$ is a composition function defined as $ \psi_k \circ \,d_1(f_k(x), y^k) := \psi_k\big(d_1(f_k(x), y^k)\big)$. Note that $\psi_k\circ d_1$ is a large family of loss functions that effectively contains all natural decomposable multioutput regression losses. For instance, taking $\psi_k(z) = |z|^p$ for $p \geq 1$ gives $\ell_p$ norms raised to their $p$-th power.  Considering $\psi_k(z) =  |z|^2/2\, \indicator[|z| \leq \delta] + \delta (|z| - \delta/2) \indicator[|z| > \delta]$ for some $1>\delta > 0$ yields multivariate extension of Huber loss used for robust regression.  One may also construct a multioutput loss by considering different scalar-valued losses for each coordinate output. Next, we establish an equivalence between the learnability of $\Fcal \subseteq\Ycal^{\Xcal} $ with respect to $\ell$ and the learnability of each $\Fcal_k$ with respect to the loss $\psi_k \circ d_1$. 
\begin{theorem}\label{thm:batch_reg_decom}
     Let $\ell$ be any loss function that satisfies Assumption \ref{assumption1}. Then, a function class $\Fcal \subseteq\Ycal^{\Xcal}$ is agnostic learnable with respect to   $\ell$  if and only if each of $\Fcal_k \subseteq\Ycal_k^{\Xcal}$ is agnostic  learnable with respect to  $\psi_k \circ d_1$.
\end{theorem}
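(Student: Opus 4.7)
The proof of Theorem \ref{thm:batch_reg_decom} closely parallels Theorem \ref{thm:multi_label_batch_hamming}, since Assumption \ref{assumption1} gives $\ell$ exactly the coordinate-wise decomposable structure that made the Hamming-loss argument succeed.

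For the sufficiency direction, I plan to follow Part~1 of Theorem \ref{thm:multi_label_batch_hamming} essentially verbatim, replacing the per-coordinate $0$-$1$ loss by $\psi_k \circ d_1$. Given agnostic PAC learners $\Acal_k$ for each $\Fcal_k$ w.r.t.\ $\psi_k \circ d_1$, I construct a learner for $\Fcal$ as follows: take $S = \{(x_i, y_i)\}_{i=1}^n \sim \Dcal^n$, form the coordinate marginal $S_k = \{(x_i, y_i^k)\}_{i=1}^n$, run $\Acal_k(S_k)$ with accuracy $\epsilon/K$ and confidence $\delta/K$, and output the concatenation $h = (h_1,\ldots,h_K)$. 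A union bound over the $K$ per-coordinate agnostic guarantees, together with the decomposition $\ell = \sum_k \psi_k \circ d_1$ from Assumption \ref{assumption1} and the elementary inequality $\sum_k \inf_{f_k} \leq \inf_f \sum_k$, delivers the agnostic bound for $\Fcal$ w.r.t.\ $\ell$ with sample complexity at most $\max_k m_k(\epsilon/K,\delta/K)$.

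For the necessity direction, given an agnostic learner $\Acal$ for $\Fcal$ w.r.t.\ $\ell$, I plan to build a learner for $\Fcal_1$ w.r.t.\ $\psi_1 \circ d_1$ (the argument for the other $\Fcal_k$ being symmetric). Mirroring Part~2 of Theorem \ref{thm:multi_label_batch_hamming}, I take samples $S = \{(x_i, y_i^1)\}_{i=1}^n \sim \Dcal_1^n$ and augment each into a $K$-dimensional label by drawing $\tilde{y}_i^k$ for $k \geq 2$ from a chosen distribution on $[0,1]$, then run $\Acal$ on the augmented sample and output the first coordinate.

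The hard part will be choosing the augmentation so that the loss contributions on coordinates $k \geq 2$ cancel between the two sides of the agnostic inequality. In the Hamming case, $\tilde{y}^k \sim U\{-1,1\}$ produced constant loss $1/2$ for any $\pm 1$-valued predictor, forcing clean cancellation. The natural regression analog is $\tilde{y}_i^k \sim U\{0,1\}$, giving per-coordinate expected loss $\phi_k(a) = \tfrac{1}{2}(\psi_k(a) + \psi_k(1-a))$ for $a \in [0,1]$, which is constant only when $\psi_k$ is symmetric around $1/2$ --- holding for $\psi_k = \mathrm{id}$ (the multivariate $\ell_1$ loss) but failing for e.g.\ $\psi_k(z) = z^2$. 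For general Lipschitz $\psi_k$ with $\psi_k(0)=0$, the strategy is to first settle the $\psi_k = \mathrm{id}$ case with the $U\{0,1\}$ augmentation (where the calculation tracks Theorem \ref{thm:multi_label_batch_hamming} line-for-line), and then transfer learnability to arbitrary $\psi_k \circ d_1$ by combining the one-sided Lipschitz bound $\psi_k \circ d_1 \leq L \cdot d_1$ (which immediately converts $d_1$-learnability of $\Fcal_k$ into $\psi_k \circ d_1$-learnability) with a matching reverse argument, possibly via a Lemma~\ref{realizable_agnostic_equiv}-style realizable-to-agnostic reduction applied to a suitable finite-image discretization of $\Fcal$ at the scale of interest.
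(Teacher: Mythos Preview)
Your sufficiency argument is correct and matches the paper exactly.

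The necessity argument has a genuine gap. You correctly identify that the random $U\{0,1\}$ augmentation only forces cancellation when each $\phi_k(a)=\tfrac12(\psi_k(a)+\psi_k(1-a))$ is constant, which holds for $\psi_k=\mathrm{id}$ but fails in general. Your proposed fix is to route through $d_1$-learnability: first convert the $\ell$-learner for $\Fcal$ into an $\ell_1$-learner for $\Fcal$ (the ``matching reverse argument''), then apply the $U\{0,1\}$ trick, then pass from $d_1$ to $\psi_1\circ d_1$ on $\Fcal_1$. The last step is fine, but the first step is false under Assumption~\ref{assumption1} alone. Take $K=1$ and $\psi_1(z)=\max(0,z-\tfrac12)$, which is $1$-Lipschitz with $\psi_1(0)=0$. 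Then $\Fcal=[0,1]^{\Xcal}$ is trivially $\psi_1\circ d_1$-learnable (the constant predictor $\tfrac12$ has zero loss), yet it is not $d_1$-learnable since its fat-shattering dimension is infinite at small scales. No discretization-plus-Lemma~\ref{realizable_agnostic_equiv} argument can rescue this, because on a realizable distribution the $\ell$-learner only guarantees small $\ell$-loss, and $\ell$ places no constraint on $d_1$ below the threshold $\tfrac12$. In short, $\ell$-learnability of $\Fcal$ does not imply $\ell_1$-learnability of $\Fcal$ without extra hypotheses (such as monotonicity of $\psi_k$ plus strict positivity away from zero), and Theorem~\ref{thm:batch_reg_decom} assumes only Assumption~\ref{assumption1}.

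The paper avoids this obstruction by abandoning random augmentation entirely. Given samples $S=\{(x_i,y_i^1)\}$ from $\Dcal_1$, it discretizes $\Fcal_{2:K}$ at scale $\alpha$ and forms \emph{all} augmentations $(x_i,y_i^1,f_{2:K}^{\alpha}(x_i))$ ranging over $f_{2:K}^{\alpha}\in\Fcal_{2:K}^{\alpha}$ restricted to the sample; it runs $\Acal$ on each to produce a finite candidate set $C_1(S)$ of first-coordinate predictors, then selects by ERM on a fresh sample $\widetilde S$ w.r.t.\ $\psi_1\circ d_1$. The point is that among these augmentations sits the one labeled by $f_{2:K}^{\star,\alpha}$, the discretized completion of the optimal $f_1^\star$; for that augmentation the coordinates $k\ge 2$ contribute at most $L\alpha$ each (directly from Lipschitzness and $|f_k^\star-f_k^{\star,\alpha}|\le\alpha$), so the corresponding predictor in $C_1(S)$ achieves the agnostic bound for $\psi_1\circ d_1$, and ERM over the $(2/\alpha)^{nK}$ candidates finds a comparable one. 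This argument never compares $\ell$ to $\ell_1$ and never needs any lower bound on $\psi_k$.
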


\begin{proof}  The proof of the sufficiency direction is similar to that of Theorem \ref{thm:multi_label_batch_hamming}, so we defer it to Appendix \ref{appdx:batch_reg_decom_suff}. We now focus on the necessity direction. To that end, we show that if $\Fcal$ is agnostic learnable with respect to $\ell$, then each $\Fcal_k$ is agnostic learnable with respect to $\psi_k \circ d_1$. In particular, given oracle access to agnostic learner $\Acal$ for $\Fcal$, we construct agnostic learner $\Acal_1$ for  $\Fcal_1$. By symmetry, a similar reduction can then be used to construct an agnostic learner for each component $\Fcal_k$. 

Since we are given a sample with a single, univariate target, the main problem is to find the right way to augment  samples to a $K$-variate target.  In the proof of Theorem \ref{thm:multi_label_batch_hamming}, we showed that randomly choosing $y_{ik} \sim \text{Uniform}(\{-1,1\})$ for $k \geq 2$ results in all predictors having a constant $1/2$ risk--leaving only the risk of the first component on both sides. Unfortunately, in regression under general losses, no single augmentation works for every distribution on $\Xcal$.  Thus, we augment the samples by considering all possible behaviors of $(\Fcal_2, \ldots, \Fcal_K)$ on the sample. Since the function class maps to a potentially uncountably infinite space, we first discretize each component of the function class and consider all possible labelings over the discretized space. Fix $1>\alpha > 0$. For $k \geq 2$, define the discretization
\begin{equation}\label{eq:discretization}
    f_k^{\alpha}(x) =  \floor*{\frac{f(x)}{\alpha}}\alpha
\end{equation}
for every $f_k \in \Fcal_k$ and the discretized component class $\Fcal_k^{\alpha} = \{f^{\alpha}_k | f_k \in \Fcal_k\}$. 
Note that a function in $\Fcal_k$ maps to $\{0,  \alpha, 2\alpha, \ldots,  \floor{1/\alpha}\alpha \}$ and the size of the range of the discretized function class $\Fcal_k^{\alpha}$ is $1+ \floor{1/\alpha} \leq (\alpha+1)/\alpha \leq 2/\alpha$. For the convenience of exposition, let us define $\Fcal_{2:K}^{\alpha}$ to be $\Fcal^{\alpha}$ without the first component, and we denote $f_{2:K}^{\alpha}$ to be an element of $\Fcal_{2:K}^{\alpha}$. 
\begin{algorithm}
\caption{Agnostic  learner for $\Fcal_1$ with respect to $\psi_1 \circ d_1$}
\label{alg:reg_ness}
\setcounter{AlgoLine}{0}
\KwIn{Agnostic  learner $\Acal$ for $\Fcal$, samples $S = (x_{1:n}, y_{1:n}^1) \sim \Dcal_1^n$ , and  another independent samples $\widetilde{S}$ from $\Dcal_1$}

Define $S_{\text{aug}} = \{(x_{1:n}, y_{1:n}^1, f_{2:K}^{\alpha}(x_{1:n}) \mid f_{2:K}^{\alpha} \in \Fcal_{2:K}^{\alpha}\}$, all possible augmentations of $S$ by $\Fcal_{2:K}^{\alpha}$.

Run $\Acal$ over all possible augmentations to get $C(S) := \left\{\Acal\big(S_a \big) 
 \mid S_a \in S_{\text{aug}}\right\}.$

Define  $C_1(S) = \{g_1 \mid (g_1, \ldots, g_k) \in C(S)\}$, a restriction of $C(S)$ to its first coordinate output. 

Return $\hat{g}_1$, the predictor in $C_1(S)$ with the lowest empirical error over $\widetilde{S}$ with respect to $\psi_1 \circ d_1$. 
\end{algorithm}

\noindent We now show that Algorithm \ref{alg:reg_ness} is an agnostic  learner for $\Fcal_1$. First, let us define 
\[f_1^{\star} := \argmin_{f_1 \in \Fcal_1} \expect_{\Dcal_1} [\psi_1 \circ d_1(f_1(x), y^1)],\]
to be optimal predictor in $\Fcal_1$ with respect to $\Dcal_1$. By definition of $\Fcal_1$, there must exist $f_{2:K}^{\star} \in \Fcal_{2:K}$ such that $(f_1^{\star}, f_{2:K}^{\star}) \in \Fcal$. We note that $f_k^{\star}$ need not be optimal predictors in $\Fcal_k$ for $k \geq 2$, but we use the $\star$ notation just to associate these component functions with the first component function $f_1^{\star}$. Define $f_{2:K}^{\star, \alpha} \in \Fcal_{2:K}^{\alpha}$ to be the corresponding discretization of $f_{2:K}^{\star}$. At a high level, the key idea of this proof is to show that the algorithm $\Acal$ when run on the sample $(x_{1:n}, y_{1:n}^1, f_{2:K}^{\star, \alpha}(x_{1:n})$ produces a predictor $g= \Acal(x_{1:n}, y_{1:n}^1, f_{2:K}^{\star, \alpha}(x_{1:n}))$ such that its restriction $g_1$ is a valid agnostic learner for $\Fcal_1$.  Although one such augmentation is enough to produce an agnostic learner for $\Fcal_1$, all possible augmentations are required in step 2 of Algorithm \ref{alg:reg_ness} because $f_1^{\star}$ is not known to the learner apriori.  We now make this argument precise. 

Suppose $g = \Acal((x_{1:n}, y_{1:n}^1, f_{2:K}^{\star, \alpha}(x_{1:n}))$ is the predictor obtained by running $\Acal$ on the sample augmented by $f_{2:K}^{\star, \alpha}$. Note that $g \in C(S)$ by definition. Let $m_{\Acal}(\epsilon, \delta, K)$ be the sample complexity of $\Acal$. Since $\Acal$ is an agnostic  learner for $\Fcal$ with respect to $\ell$, we have that for $n \geq m_{\Acal}(\epsilon/4, \delta/2, K)$, with probability at least $1- \delta/2$,
\begin{equation*}
    \begin{split}
        \expect_{\Dcal_1}\Big[\psi_1 \circ d_1(&g_1(x), y^1)\Big] + \sum_{k=2}^K\expect_{\Dcal_{\Xcal}}\left[\psi_k \circ d_1(g_k(x), f_k^{\star, \alpha}(x)) \right] \\
        &\leq \inf_{f \in \Fcal} \left(\expect_{\Dcal_1}\left[\psi_1 \circ d_1(f_1(x), y^1)\right] + \sum_{k=2}^K\expect_{\Dcal_{\Xcal}}\left[\psi_k \circ d_1(f_k(x), f_k^{\star, \alpha}(x)) \right] \right)  + \frac{\epsilon}{4}
    \end{split}
\end{equation*}

\noindent Note that the quantity on the left is trivially lower bounded by the risk of the first component. To handle the right-hand side, we first note that the optimal risk  is trivially upper bounded by the risk of $(f_1^{\star}, f_{2:K}^{\star})$, yielding
\begin{equation*}
    \begin{split}
        \expect_{\Dcal_1}\left[\psi_1 \circ d_1(g_1(x), y^1)\right] 
        \leq \expect_{\Dcal_1}\left[\psi_1 \circ d_1(f_1^{\star}(x), y^1)\right] + \sum_{k=2}^K\expect_{\Dcal_{\Xcal}}\left[\psi_k \circ d_1(f_k^{\star}(x), f_k^{\star, \alpha}(x)) \right]   + \frac{\epsilon}{4}.
    \end{split}
\end{equation*}
Next, using the $L$-Lipschitzness of $\psi_k$ and the fact that $\psi_k(0)=0$ implies $\psi_k \circ d_1(f_k^{\star}(x), f_k^{\star, \alpha}(x)) \leq L\, d_1(f_k^{\star}(x), f_k^{\star, \alpha}(x)) \leq L \alpha$ for all $k \geq 2$.  
Thus, picking $\alpha = \frac{\epsilon}{4LK}$ and using the definition of $f_1^{\star}$, we obtain 
\[ \expect_{\Dcal_1}\left[\psi_1 \circ d_1(g_1(x), y^1)\right]\leq \inf_{f_1 \in \Fcal_1}\expect_{\Dcal_1}[\psi_1 \circ d_1(f_1(x), y^1)] + \frac{\epsilon}{2}.\]

 Therefore, we have shown the existence of one predictor $g \in C(S)$ such that its restriction to the first component, $g_1$, obtains the agnostic  bound. Note that since $\psi_1$ is $L$-Lipschitz  and satisfies $\psi_1(0) =0$, we obtain that $\psi_1 \circ d_1 (\cdot, \cdot) \leq L$. The upper bound also uses the fact that $|f_1(x) -y^1| \leq 1$. Now  recall that by Hoeffding's Inequality and union bound, with probability at least $1-\delta/2$, the empirical risk of every hypothesis in $C_1(S)$ on a sample of size $\geq  \frac{8 L^2}{\epsilon^2} \log{\frac{4 |C_1(S)|}{\delta}} $ is at most $\epsilon/4$ away from its true error. So, if $|\widetilde{S}| \geq  \frac{8L^2}{\epsilon^2} \log{\frac{ 4|C_1(S)|}{\delta}} $, then with probability at least $1-\delta/2$, the empirical risk of the predictor $g_1$ is
 
\[\frac{1}{|\widetilde{S}|} \sum_{(x, y^1) \in \widetilde{S}} \psi_1 \circ d_1(g_1(x), y^1)  \leq  \expect_{\Dcal_1}\left[\psi_1 \circ d_1(g_1(x), y^1)\right] + \frac{\epsilon}{4} \leq \inf_{f_1 \in \Fcal_1}\expect_{\Dcal_1}[\psi_1 \circ d_1(f_1(x), y^1)] + \frac{3\epsilon}{4}. \]

Since $\hat{g}_1$, the output of Algorithm \ref{alg:reg_ness} is the ERM on $\widetilde{S}$ over $C_1(S)$, its empirical risk can be at most the empirical risk of $g_1$, which is at most $\inf_{f_1 \in \Fcal_1}\expect_{\Dcal_1}[\psi_1 \circ d_1(f_1(x), y^1)] + \frac{3\epsilon}{4}$. Given that the population risk of $\hat{g}_1$ is at most $\epsilon/4$ away from its empirical risk, we can conclude that the population risk of $\hat{g}_1$ is 
\[\expect_{\Dcal_1}[\psi_1 \circ d_1(\hat{g}_1(x), y^1)]  \leq \inf_{f_1 \in \Fcal_1}\expect_{\Dcal_1}[\psi_1 \circ d_1(f_1(x), y^1)] + \epsilon. \]
Applying union bounds, the entire process, algorithm $\Acal$ on the dataset augmented by $f_{2:K}^{\star, \alpha}$ and ERM in step 4, succeeds with probability $1- \delta$. The sample complexity of Algorithm \ref{alg:reg_ness} is the sample complexity of Algorithm $\Acal$ and the sample complexity of ERM in step 4, which is   
\begin{equation*}
    \begin{split}
       &\leq m_{\Acal}(\epsilon/4, \delta/2, K) + \frac{8L^2}{\epsilon^2} \log{\frac{ 4|C_1(S)|}{\delta}}\\
       &\leq  m_{\Acal}(\epsilon/4, \delta/2, K) + \frac{8KL^2}{\epsilon^2}\left(  m_{\Acal}(\epsilon/4, \delta/2, K)\,  \log\left({\frac{4K\,L}{\epsilon}}\right)+ \log{\frac{4}{\delta}}\right),
    \end{split}
\end{equation*}
where the second inequality follows due to $|C_1(S)| \leq (2/\alpha)^{m_{\Acal}(\epsilon/4, \delta/2, K)\, K} $ is the required size of $C_1(S)$ and  our choice of $\alpha = \epsilon/(4KL)$. This completes the proof as we have shown that Algorithm \ref{alg:reg_ness} is an agnostic  learner for $\mathcal{F}_1$ with respect to $\psi_1 \circ d_1$.
\end{proof}

\subsubsection{A More General Characterization of Learnability for Decomposable Losses }
Unlike Theorems \ref{thm:multi_label_batch_hamming} and \ref{thm:batchgenloss} in classification setting where we connected the learnability of $\Fcal$ to the learnability of $\Fcal_k $'s with respect to $0$-$1$ loss, Theorem \ref{thm:batch_reg_decom} relates the learnability of $\Fcal$ to the learnability of $\Fcal_k$ with respect to $\psi_k \circ d_1$ instead of the more canonical loss $d_1$. In this section, we complete that final step to characterizing learnability in terms of $d_1$ with an additional assumption on $\psi_k$.

\begin{assumption}\label{assumption2}
For all $k \in [K]$, the function $\psi_k: \reals_{\geq 0} \to \reals_{\geq 0}$ is monotonic. 
\end{assumption}
\noindent Under these assumptions, Theorem \ref{thm:extended_batch} provides a more general characterization than Theorem \ref{thm:batch_reg_decom}. 
\begin{theorem}\label{thm:extended_batch}
Let $\ell$ be any loss function that satisfies Assumptions \ref{assumption1} and \ref{assumption2}. Then, a multioutput function class $\Fcal \subseteq ([0, 1]^K)^{\Xcal}$ is agnostic  learnable with respect to  $\ell$ if and only if each $\Fcal_k \subseteq [0, 1]^{\Xcal}$ is agnostic learnable with respect to $d_1$.
\end{theorem}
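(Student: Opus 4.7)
The plan is to use Theorem \ref{thm:batch_reg_decom} as a bridge: that result already reduces the learnability of $\Fcal$ w.r.t.\ $\ell$ to the learnability of each $\Fcal_k$ w.r.t.\ $\psi_k \circ d_1$. So the only remaining task is to show that, for a scalar-valued class $\Fcal_k \subseteq [0,1]^{\Xcal}$, agnostic learnability w.r.t.\ $\psi_k \circ d_1$ is equivalent to agnostic learnability w.r.t.\ $d_1$. This single-output equivalence plus Theorem \ref{thm:batch_reg_decom} then gives the stated result.

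For the direction ``$d_1$-learnable implies $(\psi_k \circ d_1)$-learnable,'' I would invoke the fat-shattering characterization for scalar-valued classes: $\Fcal_k \subseteq [0,1]^{\Xcal}$ is learnable w.r.t.\ $d_1$ if and only if $\text{fat}_\gamma(\Fcal_k) < \infty$ at every scale $\gamma > 0$. Under Assumption \ref{assumption1}, $\psi_k$ is $L$-Lipschitz with $\psi_k(0)=0$, so by the Ledoux-Talagrand contraction inequality the Rademacher complexity of the composed loss class $\{(x,y) \mapsto \psi_k(d_1(f(x),y)) : f \in \Fcal_k\}$ is at most $L \cdot \mathfrak{R}_n(\Fcal_k)$. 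Finite fat-shattering at all scales makes $\mathfrak{R}_n(\Fcal_k) \to 0$ via the Dudley entropy integral, yielding uniform convergence and hence an agnostic learner for $\psi_k \circ d_1$.

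For the reverse direction, I would argue by contrapositive. If $\text{fat}_{\gamma_0}(\Fcal_k) = \infty$ for some $\gamma_0 > 0$, then for every $n$ I can find a set $\{x_1, \ldots, x_n\}$ that is $\gamma_0$-shattered with witnesses $\{s_1, \ldots, s_n\} \subseteq [0,1]$. I would form a uniform distribution on pairs $(x_i, s_i + \sigma_i \gamma_0)$ with $\sigma_i \in \{\pm 1\}$ drawn uniformly. By the shattering property, $\inf_{f \in \Fcal_k} \expect[\psi_k \circ d_1(f(X),Y)] = 0$, while monotonicity of $\psi_k$ (Assumption \ref{assumption2}) combined with $\psi_k(\gamma_0) > 0$ forces every fixed predictor to incur expected loss at least $\psi_k(\gamma_0)/2$ over the random signs. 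A standard no-free-lunch argument (mirroring the randomized-label construction used in the necessity part of Theorem \ref{thm:multi_label_batch_hamming}, rescaled by $\psi_k(\gamma_0)$) then contradicts agnostic $(\psi_k \circ d_1)$-learnability.

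The main obstacle I expect is the reverse direction: translating a $d_1$-fat-shattering obstruction into a non-learnability lower bound for the composed loss requires Assumption \ref{assumption2} precisely so that $\psi_k(\gamma_0) > 0$ (which is what monotonicity together with the identity-of-indiscernibles spirit of the loss buys us); without monotonicity one could have $\psi_k$ vanish on an initial interval and the $\gamma_0$-shattered witnesses would no longer separate under the loss. Once this positivity is in hand, the lower-bound calculation is routine. A cleaner alternative route would be to establish, under Assumptions \ref{assumption1} and \ref{assumption2}, a direct fat-shattering-style characterization of learnability w.r.t.\ $\psi_k \circ d_1$ and observe it coincides with that for $d_1$ up to rescaling the scale parameter by $\psi_k$.
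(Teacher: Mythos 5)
Your overall architecture --- combine Theorem \ref{thm:batch_reg_decom} with a scalar-valued equivalence between $\psi_k \circ d_1$ and $d_1$ learnability --- is exactly the paper's (the equivalence is Lemma \ref{lem:batch_NFLT}), and your sufficiency direction via Talagrand's contraction matches the paper's proof. The gap is in your necessity direction. Your no-free-lunch construction places labels at $y_i = s_i + \sigma_i\gamma_0$ and asserts that $\inf_{f\in\Fcal_k}\expect[\psi_k\circ d_1(f(X),Y)]=0$. That does not follow from $\gamma_0$-shattering: shattering only guarantees a function $g_\sigma$ with $\sigma_i(g_\sigma(x_i)-s_i)\geq\gamma_0$, i.e.\ on the correct side of the witness by margin \emph{at least} $\gamma_0$; it says nothing about $g_\sigma(x_i)$ being close to $s_i+\sigma_i\gamma_0$. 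The best-in-class risk under your distribution can therefore be a large constant (e.g.\ if every function in the class takes values in $\{0,1\}$ while the witnesses sit at $1/2$ and $\gamma_0$ is small), and can even exceed the $\psi_k(\gamma_0)/2$ floor you establish for the learner, so no regret lower bound --- and hence no contradiction with \emph{agnostic} learnability --- follows. The cancellation trick from the necessity part of Theorem \ref{thm:multi_label_batch_hamming} worked there because uniformly random labels give every predictor, in class or not, risk exactly $1/2$; here the in-class functions' losses on your random labels are not a fixed constant.

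The paper closes this hole differently (Appendix \ref{appdx:equiv_d1_psi_d1}): from an arbitrarily large $\gamma$-shattered set $X$ it builds the full binary class $\Hcal=\{-1,1\}^X$ of VC dimension $|X|$, notes that every $h\in\Hcal$ is realized exactly by thresholding some $g\in\Gcal_X$ at the witness, and then uses the $\psi\circ d_1$ agnostic learner --- run on samples labeled by a discretization $g^{\star,\alpha}$ of the function realizing the optimal $h^{\star}$, so that the comparison infimum really is $O(\psi(\alpha))$ --- to produce an agnostic learner for $\Hcal$ whose sample complexity is independent of $|X|$, contradicting the $\Omega\left((d+\log(1/\delta))/\epsilon^2\right)$ VC lower bound. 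Monotonicity of $\psi$ enters exactly where you predicted, in converting a disagreement of the thresholded predictions into a loss of at least $\psi(\gamma/2)$. If you want to keep a no-free-lunch flavor, you would need to label points by actual function values $g_\sigma(x_i)$ (so the comparator term genuinely vanishes), which requires a more delicate conditioning argument; as written, your construction does not yield the contradiction.
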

 Since the fat-shattering dimension of a real-valued function class characterizes its learnability with respect to $d_1$ loss, Theorem \ref{thm:extended_batch} implies that a multioutput function class $\Fcal$ is learnable with respect to $\ell$ satisfying Assumptions \ref{assumption1} and \ref{assumption2} if and only if $\text{fat}_{\gamma }(\Fcal) < \infty$ for every fixed scale $\gamma > 0$.   Theorem \ref{thm:extended_batch} is an immediate consequence of Theorem \ref{thm:batch_reg_decom} and the following lemma, the proof of which is deferred to Appendix \ref{appdx:equiv_d1_psi_d1}.
\begin{lemma}\label{lem:batch_NFLT}
      Let $\Ycal = [0,1]$ be the label space and $\psi: \mathbb{R}_{\geq 0} \rightarrow \mathbb{R}_{\geq 0}$ be any Lipschitz and monotonic function that satisfies $\psi(0) =0$. A scalar-valued function class $\Gcal \subseteq [0,1]^{\Xcal}$ is agnostic learnable with respect to $\psi \circ d_1$ if and only if $\mathcal{G}$ is agnostic  learnable with respect to $d_1$.  
\end{lemma}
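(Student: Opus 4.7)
The plan is to characterize learnability with respect to each of $d_1$ and $\psi \circ d_1$ via the fat-shattering dimension and then conclude their equivalence. The preliminaries already record that $\Gcal \subseteq [0,1]^{\Xcal}$ is agnostic learnable w.r.t.\ $d_1$ if and only if $\text{fat}_{\gamma}(\Gcal) < \infty$ for every $\gamma > 0$. So it suffices to establish the same characterization for $\psi \circ d_1$, after which both directions of the lemma follow.

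For the sufficient direction ($d_1$-learnable $\Rightarrow$ $\psi \circ d_1$-learnable), the plan is to upgrade finite fat-shattering of $\Gcal$ into agnostic $\psi \circ d_1$-learnability via uniform convergence. Finite fat-shattering at every scale controls $\mathfrak{R}_n(\Gcal)$ through a Dudley-type entropy integral, so $\mathfrak{R}_n(\Gcal) \to 0$. Because $\psi$ is $L$-Lipschitz with $\psi(0) = 0$, the map $(a, y) \mapsto \psi(|a - y|)$ is $L$-Lipschitz in its first argument, and Talagrand's contraction inequality bounds the Rademacher complexity of the loss class $\{(x,y) \mapsto \psi(|f(x) - y|) : f \in \Gcal\}$ by $O(L \cdot \mathfrak{R}_n(\Gcal))$. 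Combined with the pointwise bound $\psi \circ d_1 \leq L$ on $[0,1]^2$, this yields uniform convergence and hence agnostic learnability via ERM.

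For the necessary direction ($\psi \circ d_1$-learnable $\Rightarrow$ $d_1$-learnable), the pointwise inequality $\psi \circ d_1 \leq L \cdot d_1$ points the wrong way, so the plan is to route through realizable learning. Agnostic $\psi \circ d_1$-learnability trivially implies realizable $\psi \circ d_1$-learnability, and since $\psi$ is a nontrivial monotonic function with $\psi(0) = 0$, we have $\psi(z) > 0$ for every $z > 0$, so the realizable distributions for the two losses coincide. Given a realizable $\psi \circ d_1$-learner producing $g$ with $\expect[\psi(|g(x) - y|)] \leq \epsilon'$, monotonicity gives the Markov-type bound $\prob(|g(x) - y| \geq \eta) \leq \epsilon'/\psi(\eta)$ for any $\eta > 0$, and therefore $\expect[|g(x) - y|] \leq \eta + \epsilon'/\psi(\eta)$; choosing $\eta = \epsilon/2$ and $\epsilon' = (\epsilon/2)\,\psi(\epsilon/2)$ yields realizable $d_1$-learnability at accuracy $\epsilon$. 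A standard no-free-lunch argument on a $\gamma$-shattered set of size $2n$ with adversarially chosen signs then forces $\text{fat}_{\gamma}(\Gcal) < \infty$ at every scale $\gamma > 0$ (otherwise a uniform realizable distribution on such a set defeats any $n$-sample algorithm in $d_1$ by a constant depending on $\gamma$), and invoking the characterization from the preliminaries returns us to agnostic $d_1$-learnability.

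The main obstacle is the necessary direction, specifically the Markov-style extraction of $d_1$-control from $\psi \circ d_1$-control: this is where monotonicity of $\psi$ and its non-vanishing on positive reals are essential, and the parameter choices $\eta = \epsilon/2$ and $\epsilon' = (\epsilon/2)\psi(\epsilon/2)$ must be balanced so that the resulting $d_1$ accuracy scales cleanly with $\epsilon$. The sufficient direction, by contrast, is a routine contraction-and-uniform-convergence argument.
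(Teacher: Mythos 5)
Your sufficiency direction coincides with the paper's (Talagrand contraction plus uniform convergence), and your Markov-type extraction of a realizable $d_1$-learner from a realizable $\psi\circ d_1$-learner is sound. The gap is the last step of your necessity direction: the implication ``realizable $d_1$-learnability of $\Gcal$ implies $\text{fat}_{\gamma}(\Gcal)<\infty$ for all $\gamma$'' is false, and the ``standard no-free-lunch argument'' you invoke does not go through for realizable real-valued regression. In a realizable lower-bound construction the labels on a $\gamma$-shattered set must be the actual values $g_{\sigma}(x_i)$ of a single shattering function (not the idealized values $r(x_i)\pm\gamma$, which need not be attained by any member of $\Gcal$), and the observed values on the training half can reveal the sign pattern $\sigma$ on the unseen half. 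Concretely, a class whose functions take values $1/2+\gamma\sigma_x+(\text{an encoding of all of }\sigma\text{ in lower-order digits})$ has unbounded fat-shattering dimension at scale roughly $\gamma$ yet is realizably $d_1$-learnable from a single example. Indeed, if your chain of implications were valid it would prove that realizable $d_1$-learnability implies agnostic $d_1$-learnability for arbitrary $\Gcal\subseteq[0,1]^{\Xcal}$, which is known to be false: the fat-shattering dimension characterizes agnostic but not realizable regression. By passing to the realizable guarantee you discard exactly the agnostic information the conclusion requires.

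The paper avoids this by never reducing to realizable regression. Assuming $\text{fat}_{\gamma}(\Gcal)=\infty$ for contradiction, it uses the \emph{agnostic} $\psi\circ d_1$-learner to build an agnostic PAC learner for the \emph{binary} class $\Hcal=\{-1,1\}^{X}$ on a shattered set $X$ of size $d$: it enumerates all $\alpha$-discretized shattering functions as candidate labelings of an unlabeled sample, runs the learner on each, thresholds the outputs against the witness $r$ (using monotonicity to convert $\psi\circ d_1$-closeness into sign agreement at a cost of $1/\psi(\gamma/2)$), and selects among the resulting finite set of classifiers by ERM on a fresh labeled sample. The total sample complexity is independent of $d$, contradicting the $\Omega\bigl((d+\log(1/\delta))/\epsilon^2\bigr)$ agnostic lower bound for a VC class of dimension $d$ --- a lower bound that, unlike its realizable-regression counterpart, is immune to information leakage through real-valued labels. (A minor caveat shared by both arguments: one needs $\psi(\gamma/2)>0$ at the relevant scales; monotonicity with $\psi(0)=0$ alone does not force $\psi(z)>0$ for all $z>0$.)
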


The part of the lemma showing that $d_1$ learnability implies $\psi \circ d_1$ is trivial using the Rademacher-based argument and Talagrand's contraction lemma. However, proving $\psi \circ d_1$ learnability implies $d_1$ learnability is non-trivial. The case $\psi(z) = |z|^2$ is considered  in \cite[Theorem 19.5]{anthony_bartlett_1999}, but their proof  requires a mismatch between the label space and the prediction space, namely $\Ycal = [-1,2]$ but $\Gcal \subseteq [0,1]^{\Xcal}$. In this work, we improve their result by showing the equivalence between $\psi \circ d_1$ learnability and $d_1$ learnability without requiring extended label space. 

An application of Lemma \ref{lem:batch_NFLT} is that the learnability of a real-valued function class $\Gcal$ with respect to losses $d_1$ and $d_p$ are equivalent for any $p>1$.

\subsubsection{Characterizing Learnability for Non-Decomposable Losses}

Next, we study the learnability of function class $\Fcal$ with respect to non-decomposable losses. In the regression setting, the  natural non-decomposable loss to consider is $\ell_{p}$ norm, which is defined as $\ell_p(f(x), y):= \left(\sum_{k=1} |f_k(x) -y^k|^p \right)^{1/p}$ for $1 \leq p < \infty$. For $p= \infty$, the $p$-norm is defined as $\ell_{\infty}(f(x), y) := \max_{k} |f_k(x) -y^k|$. 
One might be interested in $\ell_p$ norms instead of their decomposable counterparts $\ell_p^p$ losses discussed in the previous section mainly for robustness to outliers. 
The following result characterizes the agnostic  learnability of $\Fcal$ with respect to $\ell_{p}$ norms.

\begin{theorem}\label{thm:batch_reg_nondecom}
      Fix $p\geq 1$. The function class $\Fcal \subseteq\Ycal^{\Xcal}$ is agnostic learnable with respect to  $\ell_{p}$ norm if and only if each of $\Fcal_k \subseteq\Ycal_k^{\Xcal}$ is agnostic  learnable with respect to the absolute value loss, $d_1$.
\end{theorem}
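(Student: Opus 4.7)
The plan is to leverage Theorem \ref{thm:extended_batch} together with two structural facts about $\ell_p$: it is a metric (hence $1$-subadditive), and the norm-equivalence $\|a\|_p \leq \|a\|_1 \leq K^{1-1/p}\|a\|_p$ holds on $\reals^K$ (with $K^{1-1/p}$ replaced by $K$ when $p = \infty$). Both directions will use the coordinate-wise discretization $f^\alpha_k(x) := \lfloor f_k(x)/\alpha \rfloor \alpha$ from the proof of Theorem \ref{thm:batch_reg_decom}: $\|f(x) - f^\alpha(x)\|_\infty \leq \alpha$ pointwise, and the discretized class $\Fcal^\alpha$ has finite image of size at most $(2/\alpha)^K$, so labelings of any fixed sample can be enumerated. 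The key reason to go through discretization is that a naive pointwise sandwich of $\ell_p$ by $\ell_1$ would only give a $K^{1-1/p}$-multiplicative agnostic guarantee; the triangle inequality of $\ell_p$ is what will produce a truly additive one.

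\textbf{Sufficiency.} Suppose each $\Fcal_k$ is agnostic learnable w.r.t. $d_1$. By Theorem \ref{thm:extended_batch} applied with $\psi_k(z) = z$ (which is $1$-Lipschitz, monotonic, and vanishes at $0$), $\Fcal$ is agnostic learnable w.r.t. $\ell_1 = \sum_k d_1$; call the resulting learner $\Acal$. Mimicking Algorithm \ref{alg:batch_real_agnos_equiv}, given unlabeled $S_U \sim \Dcal_\Xcal$ and labeled $S_L \sim \Dcal$, form the finite candidate set $C(S_U) = \{\Acal(S_U, f^\alpha(S_U)) : f^\alpha \in \Fcal^\alpha_{|S_U}\}$ and return $\hat g \in C(S_U)$ minimizing the empirical $\ell_p$-risk on $S_L$. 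Let $f^\star \in \Fcal$ (approximately) attain $\inf_{f \in \Fcal} \expect_\Dcal[\ell_p(f,y)]$ and $f^{\star,\alpha}$ be its discretization; then $f^{\star,\alpha}(S_U)$ is one of the enumerated labelings, so $h := \Acal(S_U, f^{\star,\alpha}(S_U))$ lies in $C(S_U)$. The agnostic $\ell_1$-guarantee on the induced distribution $(x, f^{\star,\alpha}(x))$ gives
\[\expect_{\Dcal_\Xcal}[\ell_1(h(x), f^{\star,\alpha}(x))] \leq \expect_{\Dcal_\Xcal}[\ell_1(f^\star(x), f^{\star,\alpha}(x))] + \epsilon' \leq K\alpha + \epsilon'.\]
Two applications of the triangle inequality for $\ell_p$ (pivoting first through $f^{\star,\alpha}(x)$, then through $f^\star(x)$), together with $\ell_p \leq \ell_1$ and $\expect[\ell_p(f^{\star,\alpha}, f^\star)] \leq K^{1/p}\alpha$, yield $\expect_\Dcal[\ell_p(h,y)] \leq \inf_f \expect_\Dcal[\ell_p(f,y)] + (K + K^{1/p})\alpha + \epsilon'$. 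Choosing $\alpha, \epsilon'$ so that this slack is at most $\epsilon/2$ and taking $|S_L|$ large enough for uniform Hoeffding over the finite $C(S_U)$ to contribute another $\epsilon/2$ finishes the proof.

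\textbf{Necessity.} Given an agnostic $\ell_p$-learner $\Acal$ for $\Fcal$, I construct an agnostic $d_1$-learner for $\Fcal_1$; the other $\Fcal_k$'s are symmetric. Given $S = (x_{1:n}, y^1_{1:n}) \sim \Dcal_1^n$ and an independent test sample $\tilde S \sim \Dcal_1$, for every $f^\alpha_{2:K} \in \Fcal^\alpha_{2:K}$ augment to $(x_{1:n}, (y^1_{1:n}, f^\alpha_{2:K}(x_{1:n})))$, run $\Acal$, and collect the first coordinates of the outputs into a candidate set $C_1(S)$; the algorithm returns the empirical $d_1$-risk minimizer $\hat g_1 \in C_1(S)$ on $\tilde S$. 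Fix an approximate minimizer $f^\star_1 \in \Fcal_1$ of $\expect[d_1(f_1, y^1)]$, extend to $f^\star \in \Fcal$, and let $f^{\star,\alpha}_{2:K}$ be the discretization of the trailing coordinates. On the run of $\Acal$ augmented by $f^{\star,\alpha}_{2:K}$, the agnostic guarantee yields
\[\expect\bigl[\ell_p\bigl(h(x), (y^1, f^{\star,\alpha}_{2:K}(x))\bigr)\bigr] \leq \inf_{f \in \Fcal} \expect\bigl[\ell_p\bigl(f(x), (y^1, f^{\star,\alpha}_{2:K}(x))\bigr)\bigr] + \epsilon'.\]
The left side dominates $\expect[d_1(h_1, y^1)]$ because $\|v\|_p \geq |v_1|$. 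Using $f^\star$ as competitor on the right and the elementary bound $(|a|^p + (K-1)\alpha^p)^{1/p} \leq |a| + (K-1)^{1/p}\alpha$ (which becomes $|a| + \alpha$ for $p=\infty$), the right side is at most $\inf_{f_1 \in \Fcal_1}\expect[d_1(f_1,y^1)] + (K-1)^{1/p}\alpha + \epsilon'$. Picking $\alpha, \epsilon'$ small and absorbing another $\epsilon/2$ via the ERM step on $\tilde S$ over the finite $C_1(S)$ yields the desired $\epsilon$-agnostic $d_1$-bound.

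The main obstacle is that $\ell_p$ is non-decomposable, so the per-coordinate reductions that drive Theorems \ref{thm:multi_label_batch_hamming} and \ref{thm:batch_reg_decom} do not apply directly, and a pointwise $\ell_1$-sandwich only produces a multiplicative agnostic guarantee. Overcoming this requires combining the metric structure of $\ell_p$ with discretization: the triangle inequality converts excess $\ell_p$-risk into an $\ell_1$-distance to a fixed anchor in $\Fcal^\alpha$, which is then learned via Theorem \ref{thm:extended_batch} (for sufficiency) or probed by enumerating discretized augmentations (for necessity), with the discretization error controlled by the single scale parameter $\alpha$.
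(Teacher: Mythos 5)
Your proof is correct. The sufficiency direction is essentially the paper's argument with Lemma \ref{realizable_agnostic_equiv} inlined: the paper first observes that the agnostic $\ell_1$-learner is a realizable learner for the discretized class $\Fcal^{\alpha}$ w.r.t.\ $\ell_p$ (since $\ell_p \leq \ell_1$ and the discretization costs only $K\alpha$) and then invokes the black-box realizable-to-agnostic conversion of Algorithm \ref{alg:batch_real_agnos_equiv}, whereas you unfold that conversion and do the triangle-inequality bookkeeping by hand through the anchor $f^{\star,\alpha}$; the resulting algorithm and error budget are the same. The necessity direction is where you genuinely diverge. The paper proves that $\ell_p$-learnability of $\Fcal$ implies $\ell_1$-learnability of $\Fcal$ (again via realizable learning of $\Fcal^{\alpha}$ and Lemma \ref{realizable_agnostic_equiv}, using $\ell_1 \leq K\ell_p$) and then hands off to the necessity half of Theorem \ref{thm:batch_reg_decom}; you instead adapt the augmentation construction of Algorithm \ref{alg:reg_ness} directly to the $\ell_p$ loss, replacing the Lipschitz bound on $\psi_k$ by the Minkowski estimate $\bigl(|a|^p+(K-1)\alpha^p\bigr)^{1/p}\le |a|+(K-1)^{1/p}\alpha$ and using $\|v\|_p \ge |v_1|$ to extract the first-coordinate risk. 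Your route is more direct (one reduction instead of two composed ones) and yields a slightly sharper discretization constant; the paper's route is more modular, reusing the decomposable-loss machinery verbatim. Both are valid, and your observation that a naive pointwise sandwich of $\ell_p$ by $\ell_1$ would only give a multiplicative guarantee correctly identifies why the detour through the metric structure and the discretized class is needed.
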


Using $\psi_k(z) = |z|$ in Theorem \ref{thm:batch_reg_decom} implies that $\Fcal$ is learnable with respect to $\ell_1$ if and only if each $\Fcal_k$ is learnable with respect to $d_1$. Thus, to prove Theorem \ref{thm:batch_reg_nondecom}, it suffices to show that for all $p > 1$,  $\Fcal$ is learnable with respect to $\ell_p$ if and only if $\Fcal$ is learnable with respect to $\ell_1$ norm.

\begin{proof}  We begin by proving the sufficiency direction. As discussed above, the learnability of each $\Fcal_k$ with respect to $d_1$ implies that $\Fcal$ is learnable with respect to $\ell_1$. Then, the high-level idea of the proof is to use an agnostic learner for $\Fcal$ with respect to $\ell_1$ to construct a realizable learner for $\Fcal^{\alpha}$ with respect to $\ell_1$. Using the fact $\ell_1(y_1,y_2)=0$ if and only if $\ell_p(y_1, y_2)=0$ for any $y_1, y_2 \in \Ycal$, the realizable learner for $\ell_1$ is also a realizable learner for $\ell_p$. Finally, as $|\text{im}(\Fcal^{\alpha})|<\infty$ for every $\alpha>0$, Lemma \ref{realizable_agnostic_equiv} guarantees the existence of an agnostic learner for $\Fcal^{\alpha}$ with respect to $\ell_p$. Then, a simple application of the triangle inequality shows that an agnostic learner for $\Fcal^{\alpha}$ is also an agnostic learner for $\Fcal$ with respect to $\ell_p$. We now proceed with the formal proof.

\noindent \textbf{Part 1: Sufficiency.}
Fix $p > 1$. Recall that the learnability of each $\Fcal_k$ with respect to $d_1$ implies that $\Fcal$ is learnable with respect to $\ell_1$. Let $\Dcal$ be arbitrary distribution on $\Xcal \times \Ycal$ and $\Acal$ be an agnostic PAC learner for $\Fcal$ with respect to $\ell_1$ with sample complexity $m(\epsilon, \delta)$ . For any $\epsilon, \delta > 0$, with $n$ sufficiently larger than  $m(\epsilon/2, \delta, K)$, with probability at least $1-\delta$ over $S \sim \Dcal^n$, we have
\[\expect_{\Dcal}[\ell_1(g(x), y)] \leq \inf_{f \in \Fcal}\expect_{\Dcal}[\ell_1(f(x), y)] + \frac{\epsilon}{2}, \]
where $g =\Acal(S)$. Define $\Fcal^{\alpha}$ to be a discretized function class obtained by discretizing $\Fcal$ component-wise using the scheme \eqref{eq:discretization}.
Consider $\Dcal$ to be a realizable distribution with respect to $\Fcal^{\alpha}$. Note that the triangle inequality implies $\ell_1(f(x), y) \leq \ell_1(f^{\alpha}(x), y) + \ell_1(f(x), f^{\alpha}(x)) \leq \ell_1(f^{\alpha}(x), y) + K \alpha$. Taking $\alpha = \frac{\epsilon}{2K}$ and using the fact that $\inf_{f \in \Fcal} \expect[\ell_1(f^{\alpha}(x), y)] = 0$, we obtain $\expect_{\Dcal}[\ell_1(g(x), y)] \leq \epsilon.$ Next, using the inequality $\ell_p(g(x), y) \leq \ell_1(g(x), y)$ pointwise yields
\[\expect_{\Dcal}[\ell_p(g(x), y)] \leq \epsilon.\]
Therefore, $\Acal$ is a realizable learner for $\Fcal^{\alpha}$ with respect to $\ell_p$ with sample complexity $m(\epsilon/2, \delta,K)$. Since $|\text{im}(\Fcal^{\alpha})| < \infty$ and $\ell_p(\cdot, \cdot)$ are $1$-subadditive, Lemma \ref{realizable_agnostic_equiv} implies that $\Fcal^{\alpha}$ is agnostic learnable with respect to $\ell_p$ via Algorithm \ref{alg:batch_real_agnos_equiv}, referred to as algorithm $\Bcal$ henceforth. Thus, for any $\epsilon, \delta > 0$, there exists a $n \geq m_{\Bcal}(\epsilon/2, \delta/2)$,  for any distribution $\widetilde{\Dcal}$ on $\Xcal \times \Ycal$, running $\Bcal$ on $S \sim \Dcal^n$ outputs a predictor $\tilde{g} \in \Ycal^{\Xcal}$ such that with probability at least $1-\delta$ over $S \sim \widetilde{\Dcal}^n$, we have
\[\expect_{\widetilde{\Dcal}}[\ell_p(\tilde{g}(x), y)] \leq \inf_{f^{\alpha} \in \Fcal^{\alpha}} \expect_{\widetilde{\Dcal}}[\ell_p(f^{\alpha}(x), y)] + \frac{\epsilon}{2} = \inf_{f \in \Fcal} \expect_{\widetilde{\Dcal}}[\ell_p(f^{\alpha}(x), y)] + \frac{\epsilon}{2}. \]
Using triangle inequality, we have $\ell_p(f^{\alpha}(x), y) \leq \ell_{p}(f(x), y) + \ell_p(f^{\alpha}(x), f(x))  \leq \ell_{p}(f(x), y) + \alpha K $ pointwise. Again taking $\alpha = \frac{\epsilon}{2K}$, we obtain
\[\expect_{\widetilde{\Dcal}}[\ell_p(\tilde{g}(x), y)] \leq \inf_{f \in \Fcal} \expect_{\widetilde{\Dcal}}[\ell_p(f(x), y)] +\epsilon.\]
Therefore, we have shown that $\Fcal$ is agnostic PAC learnable with respect to $\ell_p$.

The sample complexity of agnostic learner $\Bcal$  can be made  precise using the sample complexity of Algorithm \ref{alg:batch_real_agnos_equiv}. In particular, the sample complexity of $\Bcal$ is the sample complexity of the realizable learner $\Acal$ and the sample complexity of the ERM in step 2 of Algorithm \ref{alg:batch_real_agnos_equiv}. Proof of Lemma \ref{realizable_agnostic_equiv} shows that the sample complexity of $\Bcal$ must be
\begin{equation*}
    \begin{split}
       m_{\Bcal}(\epsilon, \delta,K) 
       &\leq  m_{\Acal}\left(\frac{\epsilon}{2c}, \frac{\delta}{2},K\right) + \frac{8b^2}{\epsilon^2} \left( m_{\Acal}\left(\frac{\epsilon}{2c}, \frac{\delta}{2}, K\right)\,  \log{|(\text{im}(\Fcal^{\alpha})|)}\, + \log{\frac{4}{\delta}} \right),
    \end{split}
\end{equation*}
where $b$ is the upperbound on the loss and $c$ is the subadditivity constant. Since $b \leq K$, and $c=1$ for all $\ell_p$ norms with $p \geq 1$, we obtain
\[ m_{\Bcal}(\epsilon, \delta, K) 
       \leq  m_{\Acal}(\epsilon/2, \delta/2, K) + \frac{8K^3}{\epsilon^2} \left( m_{\Acal}(\epsilon/2, \delta/2, K)\,\log {\left( \frac{4K}{\epsilon} \right)}\, + \log{\frac{4}{\delta}} \right),\]
where we also use the fact that $|\text{im}(\Fcal^{\alpha})| \leq (2/\alpha)^{K}$ for our choice of $\alpha = \frac{\epsilon}{2K}$. 

\noindent\textbf{Part 2: Necessity}. Fix $p > 1$.
 We now prove that  $\Fcal$ being learnable with respect to $\ell_p$ implies $\Fcal$ is learnable with respect to $\ell_1$. The proof is identical to the proof of sufficiency, so we only provide a sketch of the argument here. Our proof strategy follows a similar route through realizable learnability of the discretized class $\Fcal^{\alpha}$ and then the use of Lemma \ref{realizable_agnostic_equiv}.

Recall that any agnostic learner $\Acal$ for $\Fcal$ with respect to $\ell_p$ is a realizable learner for $\Fcal^{\alpha}$ with respect to $\ell_p$. Using the inequality $\ell_1(\cdot, \cdot) \leq K \ell_p(\cdot, \cdot) $ pointwise, we can deduce that $\Acal$ is also a realizable learner for $\Fcal^{\alpha}$ with respect to $\ell_1$. Since $|\text{im}(\Fcal^{\alpha})| \leq (2/\alpha)^{K} < \infty$,  Lemma \ref{realizable_agnostic_equiv} guarantees existence of an agnostic learner for $\Fcal^{\alpha}$ with respect to $\ell_1$. Using triangle inequality, we obtain $\ell_1(f^{\alpha}(x), y) \leq \ell_{1}(f(x), y) + \ell_1(f^{\alpha}(x), f(x))  \leq \ell_{1}(f(x), y) + \alpha K $ pointwise, and choosing appropriate discretization scale allows us to turn agnostic bound for $\Fcal^{\alpha}$ into an agnostic bound for $\Fcal$. 
\end{proof}

 \noindent \textbf{Remark.} We note that Theorem $\ref{thm:batch_reg_nondecom}$ holds for any norm on $\reals^K$, but we only focus on $\ell_p$ norms here due to their practical significance. As the fat-shattering dimension of a real-valued function class characterizes its learnability with respect to $d_1$ loss \citep{BARTLETT1996}, Theorem \ref{thm:batch_reg_nondecom} implies that a multioutput function class $\Fcal$ is learnable with respect to \ $\ell_p$ for $ 1\leq p \leq \infty$ if and only if $\text{fat}_{\gamma }(\Fcal_k) < \infty$ for all $k \in [K]$ at every fixed scale $\gamma > 0$. 

Since we are only concerned with the question of learnability in this work, our focus is not on optimal sample complexity rates. However, we point out that for any $p \geq 1$, if each $\Fcal_k$ is learnable with respect to $d_1$, then $\Fcal$ is learnable with respect to $\ell_p$ via ERM with a better sample  complexity than Algorithm \ref{alg:batch_real_agnos_equiv}. The proof of this claim is based on Rademacher complexity and is provided in Appendix \ref{appdx:rademacher_batch}. 

\section{Online Multioutput Learnability}
Here, we study the online learnability of multioutput function classes. Throughout this section, we give regret bounds assuming an \textit{oblivious} adversary. A standard reduction (Chapter 4 in \cite{cesa2006prediction}) allows us to convert oblivious regret bounds to adaptive regret bounds in the full-information setting. A key requirement allowing an oblivious regret bound to generalize to an adaptive regret bound is that the learner’s predictions on round $t$ should not depend on any of its past predictions from previous rounds. This is true for all of the online learning algorithms in this section.

\subsection{Online Agnostic-to-Realizable Reduction}

Our strategy for constructively characterizing the  learnability of general losses in both the batch classification and regression setting required the ability to convert a realizable learner to an agnostic learner in a black-box fashion. In this section, we provide an analog of this conversion for the \textit{online} setting. More specifically, we focus on the setting where $\mathcal{F} \subseteq\mathcal{Y}^{\mathcal{X}}$ is a multioutput function class but $|\text{im}(\mathcal{F})| < \infty$ is finite. Then, for any $c$-subadditive loss function $\ell$, we constructively convert a (potentially randomized) realizable online learner for $\mathcal{F}$ with respect to $\ell$ into an agnostic online learner for $\mathcal{F}$ with respect to $\ell$. Theorem \ref{thm:onlreal2agn} formalizes the main result of this subsection. 

\begin{theorem}
\label{thm:onlreal2agn}
Let $\mathcal{F} \subseteq\mathcal{Y}^{\mathcal{X}}$ be a multioutput function class such that $|\text{im}(\mathcal{F})| < \infty$ and $\ell: \mathcal{Y} \times \mathcal{Y} \rightarrow \mathbb{R}_{\geq 0}$ be any $c$-subadditive loss function such that $\ell(\cdot, \cdot) \leq M$. If $\mathcal{A}$ is a realizable online learner for $\mathcal{F}$ with respect to $\ell$ with sub-linear expected regret $R(T, |\text{im}(\mathcal{F})|)$, then for every $\beta \in (0, 1)$, there exists an agnostic online learner for $\mathcal{F}$ with respect to $\ell$ with expected regret 
$$\frac{cT}{T^{\beta}}\,\overline{R}(T^{\beta},  |\text{im}(\mathcal{F})|) + M\sqrt{2T^{1+\beta}\ln(|\text{im}(\mathcal{F})|)},$$
where $\overline{R}(T,  |\text{im}(\mathcal{F})|)$ is any concave, sublinear upperbound on $R(T, |\text{im}(\mathcal{F})|)$.
\end{theorem}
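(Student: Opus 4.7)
The plan is to lift the batch realizable-to-agnostic reduction (Algorithm~\ref{alg:batch_real_agnos_equiv}) to the online setting by combining a block decomposition with an exponentially weighted experts aggregator. First, I will partition the $T$ rounds into $T^{1-\beta}$ consecutive blocks of length $B := T^{\beta}$. For each labeling $\sigma \in \text{im}(\mathcal{F})^{B}$ of a block (at most $|\text{im}(\mathcal{F})|^{B}$ many), I will spawn a fresh copy $\mathcal{A}_{\sigma}$ of the realizable learner that is reset at the start of every block and, within block $k$, is fed the hallucinated stream $(x_{(k-1)B+j}, \sigma_{j})$. These indices will serve as the experts of a global Hedge instance, which receives the scores $\ell(\hat{y}_{\sigma,t}, y_{t})$ against the true labels.

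The realizable piece of the regret is then controlled block by block. Fixing a competitor $f^{\star} \in \mathcal{F}$ and, in each block $k$, letting $\sigma^{\star}_{k} := (f^{\star}(x_{t}))_{t \in \text{block}_{k}}$, the hallucinated stream fed to $\mathcal{A}_{\sigma^{\star}_{k}}$ during block $k$ is realizable by $f^{\star}$, so its cumulative loss against $\sigma^{\star}_{k}$ on that block is at most $\overline{R}(T^{\beta}, |\text{im}(\mathcal{F})|)$. Pointwise $c$-subadditivity gives $\sum_{t \in \text{block}_{k}} \ell(\hat{y}_{\sigma^{\star}_{k}, t}, y_{t}) \le c\, \overline{R}(T^{\beta}) + \sum_{t \in \text{block}_{k}} \ell(f^{\star}(x_{t}), y_{t})$, and summing over the $T^{1-\beta}$ blocks yields the first term $\frac{cT}{T^{\beta}}\, \overline{R}(T^{\beta}, |\text{im}(\mathcal{F})|)$ of the claim.

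The hard part is the Hedge contribution, which must match $M\sqrt{2T^{1+\beta}\ln|\text{im}(\mathcal{F})|}$ even though the algorithm must effectively track a \emph{different} block-local target $\sigma^{\star}_{k}$ in each block. A naive per-block Hedge over $|\text{im}(\mathcal{F})|^{B}$ experts would pay $M T^{\beta}\sqrt{2\ln|\text{im}(\mathcal{F})|}$ per block, summing to a linear $MT\sqrt{2\ln|\text{im}(\mathcal{F})|}$ and thus too weak. To obtain the sublinear bound, I will instead run a single Hedge for all $T$ rounds over the $N = |\text{im}(\mathcal{F})|^{T^{\beta}}$ abstract labelings $\sigma$, whose standard expected regret is $M\sqrt{2T\ln N} = M\sqrt{2T^{1+\beta}\ln|\text{im}(\mathcal{F})|}$. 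Converting this fixed-expert guarantee into a block-wise guarantee is the technical heart of the argument, and I expect to accomplish it either via a specialist/sleeping-expert variant in which an expert $(k,\sigma)$ is active only during block $k$ with total regret scaling like $\sqrt{T\ln(\text{total specialists})}$ rather than $\sum_{k}\sqrt{B\ln(\cdot)}$, or by arranging the block-restarted learners $\mathcal{A}_{\sigma}$ so that the best fixed $\sigma$ across all blocks already recovers the sum-of-block-minimums up to the realizable slack absorbed into the first term. A closing expectation-linearity step combines the randomness of $\mathcal{A}$ and of Hedge, and a Jensen argument against the concave envelope $\overline{R}$ bundles the per-block realizable contributions into $(T/T^{\beta})\,\overline{R}(T^{\beta}, |\text{im}(\mathcal{F})|)$, producing the claimed additive bound.
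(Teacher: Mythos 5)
There is a genuine gap, and it sits exactly where you flag it: the experts aggregation step. Neither of your two proposed fixes closes it. With contiguous blocks of length $T^{\beta}$ and a restart at each block boundary, the comparator you need to track is the \emph{sequence} of block-local labelings $\sigma^{\star}_1, \ldots, \sigma^{\star}_{T^{1-\beta}}$, and these genuinely differ across blocks because the instances differ. A single fixed $\sigma$ therefore cannot recover $f^{\star}$'s labels in every block, so your second fallback fails. Your first fallback (specialists/sleeping experts, or equivalently Fixed-Share-type tracking) also fails quantitatively: to track a comparator that switches $T^{1-\beta}$ times among $N = |\text{im}(\mathcal{F})|^{T^{\beta}}$ experts, the regret of any such scheme is at least of order $M\sqrt{T \cdot T^{1-\beta}\ln N} = M\sqrt{T \cdot T^{1-\beta}\cdot T^{\beta}\ln|\text{im}(\mathcal{F})|} = MT\sqrt{\ln|\text{im}(\mathcal{F})|}$, which is linear in $T$ -- the same obstruction as the naive per-block Hedge, just repackaged. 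The root cause is structural: in your scheme \emph{every} round of a block is an update round for the hallucinated learner, so the experts must enumerate labelings of all $T$ rounds (block by block), and no aggregator can pay only $\sqrt{T^{1+\beta}}$ for that.

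The paper's construction avoids this by replacing contiguous blocks with a \emph{random sparse sub-sample}: it draws $B_t \sim \text{Bernoulli}(T^{\beta}/T)$ i.i.d.\ and defines one expert per labeling $\phi$ of only the roughly $T^{\beta}$ selected rounds; each expert predicts on \emph{all} $T$ rounds with its copy of $\mathcal{A}$ but updates $\mathcal{A}$ only on the selected rounds. This keeps the expert count at $|\text{im}(\mathcal{F})|^{|B|}$ with $\mathbb{E}|B| = T^{\beta}$, giving the REWA term $M\sqrt{2T^{1+\beta}\ln|\text{im}(\mathcal{F})|}$ after Jensen, while the comparator $E_{B,\phi_B^{f^{\star}}}$ is a \emph{single fixed expert} over the entire horizon -- no tracking needed. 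The realizable guarantee is then transferred from the sub-sampled rounds to the full stream via the importance-weighting identity $\mathbb{E}[\ell(\mathcal{A}(x_t|L^{f^{\star}}_{B_{<t}}), y_t)\mathbbm{1}\{B_t=1\}] = \frac{T^{\beta}}{T}\mathbb{E}[\ell(\mathcal{A}(x_t|L^{f^{\star}}_{B_{<t}}), y_t)]$, which uses the independence of $B_t$ from the prediction at round $t$; this is where the factor $T/T^{\beta}$ multiplying $\overline{R}(T^{\beta},\cdot)$ comes from, rather than from summing over blocks. Your uses of $c$-subadditivity and of the concave envelope with Jensen are the right ingredients, but they must be applied to this sub-sampled construction rather than to a block decomposition.
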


Note that if $\overline{R}(T^{\beta}, |\text{im}(\mathcal{F})|)$ is sublinear in its first argument, then  $\frac{cT}{T^{\beta}}\overline{R}(T^{\beta}, |\text{im}(\mathcal{F})|) + M\sqrt{2T^{1+\beta}\ln(|\text{im}(\mathcal{F})|)}$ is sublinear in $T$ for any $\beta \in (0, 1)$. By Lemma \ref{lem:woess}, we are guaranteed the existence of $\overline{R}(T,  |\text{im}(\mathcal{F})|)$. 

\begin{lemma}\emph{\cite[Lemma 5.17]{woess2017groups}}\label{lem:woess}
    Let $g$ be a positive sublinear function. Then, $g$ is bounded from above by a concave sublinear function. 
\end{lemma}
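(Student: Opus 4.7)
The plan is to construct the desired concave majorant explicitly via a Legendre-transform-style formula and then verify each required property. Specifically, I would first show that for every slope $b > 0$ the quantity $c_b := \sup_{s \geq 0}\bigl(g(s) - bs\bigr)$ is finite, and then define the candidate majorant as the infimum of affine functions
\[
\bar g(T) \;:=\; \inf_{b > 0}\bigl(b T + c_b\bigr).
\]
An infimum of affine functions of $T$ is automatically concave, so concavity is free; positivity is free from $\bar g \geq g \geq 0$. The substantive content is (i) finiteness of $c_b$ and (ii) sublinearity of $\bar g$.

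For finiteness of $c_b$, I would use sublinearity of $g$: for each $b>0$, pick $S_b$ so that $g(s) \leq (b/2)s$ for all $s \geq S_b$. Then $g(s) - b s \leq -bs/2 \leq 0$ on $[S_b,\infty)$, so the supremum is attained on the bounded set $[0,S_b]$, where $g$ is bounded (in the paper's setting $g$ is a regret function on $\naturals$, so the bound is just the maximum over finitely many integers; in the continuous case one either assumes $g$ is bounded on compacts or replaces $g$ by its monotone upper envelope first). This gives $c_b \leq \sup_{s \in [0,S_b]} g(s) < \infty$, so $\bar g(T)$ is a well-defined real number for every $T \geq 0$. Domination $g \leq \bar g$ is then immediate from $g(T) \leq bT + c_b$ for every admissible $b$.

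For sublinearity of $\bar g$, I would fix an arbitrary $b > 0$ and estimate
\[
\frac{\bar g(T)}{T} \;\leq\; b + \frac{c_b}{T}.
\]
Letting $T \to \infty$ gives $\limsup_{T\to\infty} \bar g(T)/T \leq b$, and since $b > 0$ is arbitrary, $\bar g(T)/T \to 0$. Combining concavity, positivity, domination, and sublinearity gives the lemma.

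The main obstacle I foresee is essentially bookkeeping: making sure the construction works uniformly in the ambient setting the paper has in mind. If $g$ is defined only on $\naturals$, the supremum in $c_b$ is just a finite maximum, and concavity of $\bar g$ has to be read as concavity of its piecewise-linear interpolation — either works, since the eventual application (Theorem~\ref{thm:onlreal2agn}) only uses $\bar R(T^\beta, \cdot)$ evaluated pointwise. A quick alternative, avoiding the Legendre setup entirely, is to take the upper concave envelope of the graph $\{(T, g(T))\}$ directly (the pointwise infimum over all concave functions dominating $g$, which is nonempty because $g$ is bounded above by the affine function $c_b + bT$); sublinearity then follows from the same $\epsilon$-argument, since any affine majorant with slope $b$ yields an even tighter concave majorant with the same asymptotic slope. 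I would present the Legendre version because it is the most self-contained and immediately produces a concrete formula, but flag the envelope version as a one-line alternative.
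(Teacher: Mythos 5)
Your proposal is correct, and there is in fact nothing in the paper to compare it against: the paper does not prove Lemma \ref{lem:woess} at all, importing it as a citation to \cite[Lemma 5.17]{woess2017groups}. Your argument is a legitimate self-contained proof. The construction $\bar g(T) = \inf_{b>0}(bT + c_b)$ with $c_b = \sup_{s\ge 0}(g(s)-bs)$ is precisely the infimum over affine majorants, i.e.\ (the closure of) the least concave majorant that the envelope-style proof in the cited reference produces, so the two routes are essentially the same object reached by different bookkeeping; your version has the advantage of an explicit formula and a transparent sublinearity estimate $\bar g(T)/T \le b + c_b/T$. The one point that genuinely needs care --- finiteness of $c_b$, which requires $g$ to be bounded on bounded sets --- you handle correctly: in the paper's application $g$ is a regret bound $R(\cdot,|\text{im}(\Fcal)|)$ defined on $\naturals$, so the supremum over $[0,S_b]$ is a finite maximum, and your $\bar g$ is defined and concave on all of $[0,\infty)$, which is exactly what the Jensen step in Theorem \ref{thm:onlreal2agn} needs, since $\expect[|B|]=T^{\beta}$ is generally not an integer.
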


Therefore, Theorem \ref{thm:onlreal2agn} and Lemma \ref{lem:woess} show that for any function class $\mathcal{F}$ with finite image space and any $c$-subadditive loss function, realizable and agnostic online learnability are equivalent. We now begin the proof of Theorem \ref{thm:onlreal2agn}.  

\begin{proof}  Let $\mathcal{A}$ be a (potentially randomized) online realizable learner for $\mathcal{F}$ with respect to $\ell$.  By definition, this means that for any  (realizable) sequence $(x_1, f(x_1)), ..., (x_T, f(x_T))$ labeled by a function $f \in \mathcal{F}$, we have 
    $$\mathbb{E}\left[\sum_{t=1}^T \ell(\mathcal{A}(x_t), f(x_t))\right] \leq R(T, |\text{im}(\mathcal{F})|),$$
    where $R(T, |\text{im}(\mathcal{F})|)$ is a sub-linear function of $T$. We now use $\mathcal{A}$ to construct an agnostic online learner $\mathcal{Q}$ for $\mathcal{F}$ with respect to $\ell$. Since we are assuming an oblivious adversary, let $(x_1, y_1), ..., (x_T, y_T) \in (\mathcal{X} \times \mathcal{Y})^T$ denote the stream of points to be observed by the online learner and $f^{\star} = \argmin_{f \in \mathcal{F}} \sum_{t=1}^T \ell(f(x_t), y_t)$ to be the optimal function in hindsight.   
    
    Our high-level strategy is to construct a large set of Experts that approximately cover all possible labelings of the instances $x_1, ..., x_T$ by functions in $\mathcal{F}$. In particular, each Expert uses an independent copy of $\mathcal{A}$ to make predictions, but update $\mathcal{A}$ using \textit{different} sequences of labeled instances. Together, our set of Experts update $\mathcal{A}$ using all possible sequences of labeled instances. In order to ensure that the number of Experts is not too large, we construct such a set of Experts over a sufficiently small \textit{sub-sample} of the stream. Finally, we run the celebrated Randomized Exponential Weights Algorithm (REWA) \citep{cesa2006prediction} using our set of experts and the scaled loss function $\frac{\ell}{M}$ over the original stream of points $(x_1, y_1), ... , (x_T, y_T).$ We now formalize this idea below. 

    For any bitstring $b \in \{0, 1\}^T$, let $\phi: \{t: b_t = 1\} \rightarrow \text{im}(\mathcal{F})$ denote a function mapping time points where $b_t = 1$ to elements in the image space $\text{im}(\mathcal{F})$. Let $\Phi_b \subseteq (\text{im}(\mathcal{F}))^{\{t: b_t = 1\}}$ denote all such functions $\phi$. For every $f \in \mathcal{F}$, let $\phi_b^f \in \Phi_b$ be the mapping such that for all $t \in \{t: b_t = 1\}$, $\phi_b^f(t) = f(x_t)$. Let $|b| = |\{t: b_t = 1\}|$. For every $b \in \{0, 1\}^T$ and $\phi \in \Phi_b$, define an Expert $E_{b, \phi}$. Expert $E_{b, \phi}$, formally presented in Algorithm \ref{alg:expert_olreal2agn}, uses $\mathcal{A}$ to make predictions in each round. However, $E_{b, \phi}$ only updates $\mathcal{A}$ on those rounds where $b_t = 1$, using $\phi$ to compute a labeled instance $(x_t, \phi(t))$. For every $b \in \{0, 1\}^T$, let $\mathcal{E}_b = \bigcup_{\phi \in \Phi_b} \{E_{b, \phi}\}$ denote the set of all Experts parameterized by functions $\phi \in \Phi_b$. If $b$ is the all zeros bitstring, then $\mathcal{E}_b$ is empty. Therefore, we actually define $\mathcal{E}_b = \{E_0\} \cup \bigcup_{\phi \in \Phi_b} \{E_{b, \phi}\}$, where $E_0$ is the expert that never updates $\mathcal{A}$ and plays $\hat{y}_t = \mathcal{A}(x_t)$ for all $t \in [T]$.  Note that $1 \leq |\mathcal{E}_b| \leq (|\text{im}(\mathcal{F})|)^{|b|}$.

    \begin{algorithm}
    \caption{Expert($b$, $\phi$)}
    \label{alg:expert_olreal2agn}
    \setcounter{AlgoLine}{0}
    \KwIn{Independent copy of realizable online learner $\mathcal{A}$ for $\mathcal{F}$ with respect to $\ell$}
    \For{$t = 1,...,T$} {
        Receive example $x_t$
    
        Predict $\hat{y}_t = \mathcal{A}(x_t)$
        
        \uIf{$b_t = 1$}{        
            Update $\mathcal{A}$ by passing $(x_t, \phi(t))$
        }
    }
    \end{algorithm}

\begin{algorithm}
\caption{Agnostic online learner $\mathcal{Q}$ for $\Fcal$ with respect to $\ell$}
\label{alg:onlreal2agn}
\setcounter{AlgoLine}{0}
\KwIn{ Parameter $0 < \beta < 1$}

Let $B \in \{0, 1\}^T$ such that  $B_t \overset{\text{iid}}{\sim} \text{Bernoulli}(\frac{T^{\beta}}{T})$ 


Construct the set of experts $\mathcal{E}_B = \{E_0\} \cup \bigcup_{\phi \in \Phi_B} \{E_{B, \phi}\}$ according to Algorithm \ref{alg:expert_olreal2agn}

Run REWA $\mathcal{P}$ using $\mathcal{E}_B$ and the loss function $\frac{\ell}{M}$ over the stream $(x_1, y_1), ..., (x_T, y_T)$

\end{algorithm}

With this notation in hand, we are now ready to present Algorithm \ref{alg:onlreal2agn}, our main agnostic online learner $\mathcal{Q}$ for $\mathcal{F}$ with respect to $\ell$. Our goal is to show that $\mathcal{Q}$ enjoys sublinear expected regret. There are three main sources of randomness: the randomness involved in sampling $B$, the internal randomness of $\mathcal{A}$, and the internal randomness of REWA. Let $B, A$ and $P$ denote the random variables associated with each source of randomness respectively. By construction, $B, A$, and $P$ are independent.

Using Theorem 21.11 in \cite{ShwartzDavid} and the fact that $B, A$ and $P$ are independent, REWA guarantees almost surely that 
$$\sum_{t=1}^T \mathbb{E}\left[\ell(\mathcal{P}(x_t), y_t)|B, A\right] \leq \inf_{E \in \mathcal{E}_B} \sum_{t=1}^T \ell(E(x_t), y_t) + M\sqrt{2T\ln(|\mathcal{E}_B|)}.$$
Taking an outer expectation gives
$$\mathbb{E}\left[\sum_{t=1}^T \ell(\mathcal{P}(x_t), y_t)\right] \leq \mathbb{E}\left[\inf_{E \in \mathcal{E}_B} \sum_{t=1}^T \ell(E(x_t), y_t) \right] + \mathbb{E}\left[M\sqrt{2T\ln(|\mathcal{E}_B|)}\right].$$
Therefore, 
\begin{align*}
    \mathbb{E}\left[\sum_{t=1}^T \ell(\mathcal{Q}(x_t), y_t) \right]  &= \mathbb{E}\left[\sum_{t=1}^T \ell(\mathcal{P}(x_t), y_t) \right] \\
    &\leq \mathbb{E}\left[\inf_{E \in \mathcal{E}_B} \sum_{t=1}^T \ell(E(x_t), y_t) \right] + \mathbb{E}\left[M\sqrt{2T\ln(|\mathcal{E}_B|)}\right]\\
    &\leq \mathbb{E}\left[\sum_{t=1}^T \ell(E_{B, \phi_B^{f^{\star}}}(x_t), y_t) \right] + M\mathbb{E}\left[\sqrt{2T\ln(|\mathcal{E}_B|)}\right].
\end{align*}
%
In the last step, we used the fact that for all $b \in \{0, 1\}^T$ and $f \in \mathcal{F}$, we have $E_{b, \phi_b^f} \in \mathcal{E}_b$. 

It now suffices to upperbound $\mathbb{E}\left[\sum_{t=1}^T \ell(E_{B, \phi_B^{f^{\star}}}(x_t), y_t) \right]$. To do so, we need some additional notation. Given the realizable online learner $\mathcal{A}$, an instance $x \in \mathcal{X}$, and an ordered finite sequence of labeled examples $L \in (\mathcal{X} \times \mathcal{Y})^*$, let $\mathcal{A}(x|L)$ be the random variable denoting the prediction of $\mathcal{A}$ on the instance $x$ after running and updating on $L$. For any $b\in \{0, 1\}^T$, $f \in \mathcal{F}$, and $t \in [T]$, let $L^f_{b_{< t}} = \{(x_i, f(x_i)): i < t \text{ and } b_i = 1\}$ denote the \textit{subsequence} of the sequence of labeled instances $\{(x_i, f(x_i))\}_{i=1}^{t-1}$ where $b_i = 1$. Using this notation, we can write


\begin{align*}
    \mathbb{E}\left[\sum_{t=1}^T \ell(E_{B, \phi_B^{f^{\star}}}(x_t), y_t) \right] &=  \mathbb{E}\left[\sum_{t=1}^T \ell(\mathcal{A}(x_t|L_{B_{< t}}^{f^{\star}}) , y_t) \right]\\
    &= \mathbb{E}\left[\sum_{t=1}^T \ell(\mathcal{A}(x_t|L_{B_{< t}}^{f^{\star}}), y_t)\frac{\mathbbm{P}\left[B_t = 1 \right]}{\mathbbm{P}\left[B_t = 1 \right]} \right]\\
    &= \frac{T}{T^{\beta}}\sum_{t=1}^T \mathbb{E}\left[\ell(\mathcal{A}(x_t|L_{B_{< t}}^{f^{\star}}), y_t)\mathbbm{P}\left[B_t = 1 \right] \right]\\
    &= \frac{T}{T^{\beta}}\sum_{t=1}^T \mathbb{E}\left[\ell(\mathcal{A}(x_t|L_{B_{< t}}^{f^{\star}}), y_t)\mathbbm{1}\{B_t = 1 \}\right].
\end{align*}


To see the last equality, note that the prediction $\mathcal{A}(x_t|L_{B_{< t}}^{f^{\star}})$ only depends on bitstring ($B_1, \ldots, B_{t-1}$) and the internal randomness of $A$, both of which are independent of $B_t$. Thus, we have  


\begin{align*}
    \mathbb{E}\left[\ell(\mathcal{A}(x_t|L_{B_{< t}}^{f^{\star}}), y_t) \, \indicator\{B_t =1\}\right]  &= \mathbb{E}\left[\ell(\mathcal{A}(x_t|L_{B_{< t}}^{f^{\star}}), y_t) \right] \, \mathbb{E}\left[ \indicator \{B_t=1\}\right]\\
    &= \mathbb{E}\left[\ell(\mathcal{A}(x_t|L_{B_{< t}}^{f^{\star}}), y_t) \right] \mathbbm{P}[B_t =1]
\end{align*}
as needed. Continuing onwards, 
\begin{align*}
    \mathbb{E}\left[\sum_{t=1}^T \ell(E_{B, \phi_B^{f^{\star}}}(x_t), y_t) \right] &= \frac{T}{T^{\beta}}\mathbb{E}\left[\sum_{t=1}^T \ell(\mathcal{A}(x_t|L_{B_{< t}}^{f^{\star}}), y_t)\mathbbm{1}\{B_t = 1 \}\right]\\
    &= \frac{T}{T^{\beta}}\mathbb{E}\left[\sum_{t: B_t = 1} \ell(\mathcal{A}(x_t|L_{B_{< t}}^{f^{\star}}), y_t)\right]\\
    &\leq   \frac{cT}{T^{\beta}}\mathbb{E}\left[\sum_{t: B_t = 1} \ell(\mathcal{A}(x_t|L_{B_{< t}}^{f^{\star}}), f^{\star}(x_t))\right] + \frac{T}{T^{\beta}}\mathbb{E}\left[\sum_{t: B_t = 1} \ell(f^{\star}(x_t), y_t)\right]\\
    &= \frac{cT}{T^{\beta}}\mathbb{E}\left[\sum_{t: B_t = 1} \ell(\mathcal{A}(x_t|L_{B_{< t}}^{f^{\star}}), f^{\star}(x_t))\right] + \sum_{t=1}^T \ell(f^{\star}(x_t), y_t)\\
    &= \frac{cT}{T^{\beta}}\mathbb{E}\left[\sum_{t: B_t = 1} \ell(\mathcal{A}(x_t|L_{B_{< t}}^{f^{\star}}), f^{\star}(x_t))\right] + \inf_{f \in \mathcal{F}}\sum_{t=1}^T \ell(f(x_t), y_t)
\end{align*}
The inequality follows from the fact that $\ell$ is a $c$-subadditive and the last equality follows from the definition of $f^{\star}$. We now need to bound $\frac{cT}{T^{\beta}}\mathbb{E}\left[\sum_{t: B_t = 1} \ell(\mathcal{A}(x_t|L_{B_{< t}}^{f^{\star}}) , f^{\star}(x_t))\right]$. Using the fact that $\mathcal{A}^{\star}$ is a realizable online learner and gets updated on a stream of instances labeled by $f^{\star}$ only on rounds where $B_t = 1$, we get
\begin{align*}
    \frac{cT}{T^{\beta}}\mathbb{E}\left[\sum_{t: B_t = 1} \ell(\mathcal{A}(x_t|L_{B_{< t}}^{f^{\star}}), f^{\star}(x_t))\right] &=  \frac{cT}{T^{\beta}}\mathbb{E}\left[\mathbb{E}\left[\sum_{t: B_t = 1} \ell(\mathcal{A}(x_t|L_{B_{< t}}^{f^{\star}}), f^{\star}(x_t)) \bigg| B\right]\right]\\
    &\leq \frac{cT}{T^{\beta}}\mathbb{E}\left[ R(|B|, |\text{im}(\mathcal{F})|)\right].
\end{align*}
Putting things together, we find that, 
\begin{align*}
    \mathbb{E}\left[\sum_{t=1}^T \ell(\mathcal{Q}(x_t), y_t) \right] 
    &\leq \mathbb{E}\left[\sum_{t=1}^T \ell(E_{B, \phi_B^{f^{\star}}}(x_t), y_t) \right] + M\mathbb{E}\left[\sqrt{2T\ln(|\mathcal{E}_B|)}\right]\\
    &\leq  \inf_{f \in \mathcal{F}}\sum_{t= 1}^T \ell(f(x_t), y_t) + \frac{cT}{T^{\beta}} \mathbb{E}\left[R(|B|, |\text{im}(\mathcal{F})|)\right] + M\mathbb{E}\left[\sqrt{2T\ln(|\mathcal{E}_B|)}\right]\\
    &\leq \inf_{f\in \mathcal{F}}\sum_{t= 1}^T \ell(f(x_t), y_t) + \frac{cT}{T^{\beta}} \mathbb{E}\left[R(|B|, |\text{im}(\mathcal{F})|)\right] + M\mathbb{E}\left[\sqrt{2T|B|\ln(|\text{im}(\mathcal{F})|)}\right],\\
\end{align*}
where the last inequality follows from the fact that that $|\mathcal{E}_B| \leq (|\text{im}(\mathcal{F})|)^{|B|}$. By Jensen's inequality, we further get that, $\mathbb{E}\left[\sqrt{2T|B|\ln(|\text{im}(\mathcal{F})|)}\right] \leq \sqrt{2T^{\beta + 1}\ln(|\text{im}(\mathcal{F})|)}$, which implies that 

$$\mathbb{E}\left[\sum_{t=1}^T \ell(\mathcal{Q}(x_t), y_t) \right]  \leq \inf_{f\in \mathcal{F}}\sum_{t= 1}^T \ell(f(x_t), y_t) + \frac{cT}{T^{\beta}} \mathbb{E}\left[R(|B|, |\text{im}(\mathcal{F})|)\right] + M\sqrt{2T^{\beta + 1}\ln(|\text{im}(\mathcal{F})|)}.$$

Next,  by Lemma \ref{lem:woess}, there exists a concave sublinear function $ \overline{R}(|B|, |\text{im}(\Fcal)|)$ that upperbounds $ R(|B|, |\text{im}(\Fcal)|)$. By Jensen's inequality, we obtain $ \expect[\overline{R}(|B|, |\text{im}(\Fcal)|)] \leq \overline{R}(T^{\beta}, |\text{im}(\Fcal)|)$, which yields

$$\mathbb{E}\left[\sum_{t=1}^T \ell(\mathcal{Q}(x_t), y_t) \right]  \leq \inf_{f\in \mathcal{F}}\sum_{t= 1}^T \ell(f(x_t), y_t) + \frac{cT}{T^{\beta}}\,\overline{R}(T^{\beta}, |\text{im}(\mathcal{F})|) + M\sqrt{2T^{\beta + 1}\ln(|\text{im}(\mathcal{F})|)}.$$

\noindent This completes the proof as we have shown that $\mathcal{Q}$ is an agnostic online learner for $\mathcal{F}$ with respect to $\ell$ with the stated regret bound. \end{proof}

\subsection{Online Multilabel Classification}
\label{sec:onlineml}

Let $\Ycal = \{-1,1\}^K$. We provide analogs of Theorem \ref{thm:multi_label_batch_hamming} and \ref{thm:batchgenloss} in the online setting. We begin by characterizing the learnability of the Hamming loss and then move to give a characterization of learnability for all losses satisfying the identity of indiscernibles. Similar to the batch setting, we can show that the MCLdim of $\mathcal{F}$ characterizes online multilabel learnability (see Appendix \ref{app:MCLdimchar}), but here, we give a characterization that better exploits the multilabel structure of the problem.

\subsubsection{Characterizing Online Learnability for the Hamming Loss}
\label{sec:onlinehamming}

Theorem \ref{thm:OLHam} characterizes the online learnability of a multilabel function class $\mathcal{F}$ with respect to $\ell_H$. 

\begin{theorem} 
\label{thm:OLHam}
A function class $\mathcal{F} \subseteq\mathcal{Y}^{\mathcal{X}}$ is online learnable with respect to the Hamming loss if and only if each restriction $\mathcal{F}_k \subseteq \mathcal{Y}_k^{\mathcal{X}}$ is online learnable with respect to the 0-1 loss. 
\end{theorem}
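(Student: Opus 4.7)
The plan is to mirror the two-direction argument of Theorem~\ref{thm:multi_label_batch_hamming} in the online setting, exploiting the fact that the Hamming loss decomposes coordinate-wise and that 0-1 losses for random labels average to $1/2$.

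\textbf{Sufficiency (each $\mathcal{F}_k$ online learnable $\Rightarrow \mathcal{F}$ online learnable).} Given online learners $\mathcal{A}_1, \dots, \mathcal{A}_K$ for $\mathcal{F}_1, \dots, \mathcal{F}_K$ with respect to the $0$-$1$ loss and regret bounds $R_k(T)$, I would run them in parallel: in round $t$, feed $x_t$ to every $\mathcal{A}_k$, let $\hat y_t^k$ be its prediction, and output $\hat y_t = (\hat y_t^1, \dots, \hat y_t^K)$. After seeing $y_t$, update $\mathcal{A}_k$ on the coordinate pair $(x_t, y_t^k)$. Because $\ell_H(\hat y_t, y_t) = \sum_k \indicator\{\hat y_t^k \neq y_t^k\}$, summing the per-coordinate regret bounds and swapping the infimum with the sum (using that the sum of infima is at most the infimum of the sum, exactly as in the batch proof) gives
\[
\expect\Big[\sum_{t=1}^T \ell_H(\hat y_t, y_t)\Big] - \inf_{f \in \mathcal{F}}\sum_{t=1}^T \ell_H(f(x_t), y_t) \;\le\; \sum_{k=1}^K R_k(T),
\]
which is sublinear in $T$.

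\textbf{Necessity (${\mathcal{F}}$ online learnable w.r.t.\ $\ell_H$ $\Rightarrow$ each $\mathcal{F}_k$ online learnable w.r.t.\ $0$-$1$).} By symmetry it suffices to treat $\mathcal{F}_1$. Given an online learner $\mathcal{A}$ for $\mathcal{F}$ with regret $R(T)$ on adversarial streams, I would build $\mathcal{A}_1$ by randomized augmentation: on round $t$, receive $x_t$, query $\mathcal{A}$ to obtain $(\hat y_t^1, \dots, \hat y_t^K)$, predict $\hat y_t^1$, receive $y_t^1$, then sample $y_t^k \sim \mathrm{Uniform}\{-1,1\}$ independently for $k \geq 2$ and feed $\mathcal{A}$ the label $(y_t^1, y_t^2, \dots, y_t^K)$. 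Because each augmented coordinate is uniform and independent of everything else, for every fixed $f \in \mathcal{F}$ and each $k \geq 2$,
\[
\expect\Big[\sum_{t=1}^T \indicator\{f_k(x_t) \neq y_t^k\}\Big] = T/2,
\]
and the same identity holds for $\mathcal{A}$'s predictions on coordinates $k\ge 2$. Applying the regret bound of $\mathcal{A}$ to the augmented stream and taking expectation over the augmentation randomness yields
\[
\expect\Big[\sum_{t=1}^T \indicator\{\hat y_t^1 \neq y_t^1\}\Big] + (K-1)\tfrac{T}{2}
\;\le\; \expect\Big[\inf_{f \in \mathcal{F}} \sum_{t=1}^T \ell_H(f(x_t), y_t)\Big] + R(T).
\]
Using $\expect[\inf_f (\cdot)] \le \inf_f \expect[(\cdot)]$ and the fact that $\expect[\sum_t \ell_H(f(x_t), y_t)] = \sum_t \indicator\{f_1(x_t)\neq y_t^1\} + (K-1)T/2$ for any fixed $f$, the $(K-1)T/2$ terms cancel and we recover
\[
\expect\Big[\sum_{t=1}^T \indicator\{\hat y_t^1 \neq y_t^1\}\Big]
\;\le\; \inf_{f_1 \in \mathcal{F}_1} \sum_{t=1}^T \indicator\{f_1(x_t)\neq y_t^1\} + R(T),
\]
which is the desired agnostic online guarantee for $\mathcal{F}_1$.

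\textbf{Main obstacle.} The sufficiency direction is essentially cosmetic. The subtle point in the necessity direction is the interchange $\expect[\inf_f] \le \inf_f \expect$, which goes in the right direction here, and the verification that the stream fed to $\mathcal{A}$ is a valid input despite having random labels on coordinates $\geq 2$; this is fine because $\mathcal{A}$ is defined to handle arbitrary (including adaptively chosen) sequences, and the adversary we are simulating simply uses its own randomness together with $x_t, y_t^1$. Thus no new technical machinery beyond the batch argument of Theorem~\ref{thm:multi_label_batch_hamming} is required.
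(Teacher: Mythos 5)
Your proposal is correct and follows essentially the same route as the paper's proof: parallel composition of the coordinate learners for sufficiency, and Rademacher augmentation of coordinates $2,\dots,K$ (with the $1/2$-risk cancellation and the $\expect[\inf_f]\le\inf_f\expect$ interchange under an oblivious adversary) for necessity. No substantive differences.
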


\noindent The proof of Theorem \ref{thm:OLHam} is similar to that of Theorem \ref{thm:multi_label_batch_hamming}, so we defer the full proof to Appendix \ref{appdx:proof_OLHam} and only provide a sketch here.  The proof of sufficiency direction is based on a reduction: given oracle access to online learners $\{\mathcal{A}_k\}_{k=1}^K$ for $\{\mathcal{F}_k\}_{k=1}^{K}$ with respect to $\ell_{0\text{-}1}$, we construct an online learner $\mathcal{A}$ for $\mathcal{F}$ with respect to $\ell_H$. In fact, similar to the batch setting, the online multilabel learning algorithm $\mathcal{A}$ is simple: in each round $t \in [T]$, receive $x_t$, query the predictions $\Acal_1(x_t), ..., \Acal_K(x_t)$, and finally predict the concatenation $\hat{y}_t = (\Acal_1(x_t), ..., \Acal_K(x_t))$. Once the true label $y_t = (y_t^1, ..., y_t^K)$ is revealed, update each online learner $\mathcal{A}_k$ by passing $(x_t, y_t^k)$ for $k \in [K]$. Using some algebra, one can show that this prediction rule achieves sublinear regret for $\mathcal{F}$. 

For the necessity direction, given oracle access to an online learner $\mathcal{A}$ for $\mathcal{F}$ with respect to $\ell_H$, we construct an online learner $\mathcal{B}$ for $\mathcal{F}_1$ with respect to $\ell_{0\text{-}1}$. A similar reduction can be used to construct online learners for each restriction $\Fcal_k$.  Similar to the batch setting, the online learning algorithm $\mathcal{B}$ is simple: in each round $t \in [T]$, receive $x_t$, query $\hat{y}_t = \Acal(x_t)$ and predict $\hat{y}^1_t = \Acal_1(x_t)$. Once the true label $y^1_t$ is revealed, update $\Acal$ by passing $(x_t, y_t)$ where $y_t$ = $(y_t^1, \sigma_t^2, ..., \sigma_t^K)$ and $\{\sigma_t^i\}_{i=2}^K$ is an i.i.d sequence of Rademacher random variables. A straightforward analysis shows that such a prediction rule achieves sublinear regret for $\Fcal_1$. 

\subsubsection{Characterizing Online Learnability for General Losses}

Using Theorem \ref{thm:onlreal2agn} and Theorem \ref{thm:OLHam}, we now characterize the learnability of arbitrary multilabel loss functions $\ell$ as long as they satisfy the identity of indiscernibles. The key idea is that since there are only finite number of possible inputs to $\ell$, for any $\ell$ satisfying the identity of indiscernibles, there must exist universal constants $a$ and $b$ such that $a\ell_H(y_1, y_2) \leq \ell(y_1, y_2) \leq b\ell_H(y_1, y_2)$. Then, we can characterize the learnability of $\ell$ by relating it to the learnability of $\ell_H$. In fact, we prove a slightly more general result, showing an equivalence between the learnability of any two arbitrary losses satisfying the identity of indiscernibles. 

 \begin{lemma} 
\label{lem: arbOL}
Let $\ell$ and $\ell^{\prime}$ be any two loss functions satisfying the identity of indiscernibles. A function class $\mathcal{F} \subseteq \mathcal{Y}^{\mathcal{X}}$ is online learnable with respect to $\ell$ if and only if $\mathcal{F}$ is online learnable with respect to $\ell^{\prime}$.
\end{lemma}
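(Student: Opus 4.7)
The plan is to mirror the proof of Lemma~\ref{lem:hamming_general_batch} from the batch setting, but use Theorem~\ref{thm:onlreal2agn} (the online realizable-to-agnostic conversion) in place of Lemma~\ref{realizable_agnostic_equiv}. The central observation, as previewed in the text, is that since $|\mathcal{Y}| = 2^K < \infty$, both $\ell$ and $\ell_H$ take only finitely many values on $\mathcal{Y} \times \mathcal{Y}$. Since both losses satisfy the identity of indiscernibles, they have the same zero set, so there exist universal constants $a, b > 0$ such that $a\,\ell_H(y_1, y_2) \leq \ell(y_1, y_2) \leq b\,\ell_H(y_1, y_2)$ for every $y_1, y_2 \in \mathcal{Y}$. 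In particular, both $\ell$ and $\ell_H$ are $c$-subadditive by the remark following Definition~2, and $|\text{im}(\mathcal{F})| \leq |\mathcal{Y}| < \infty$, so the hypotheses of Theorem~\ref{thm:onlreal2agn} apply to either loss.

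For sufficiency, suppose $\mathcal{F}$ is online learnable w.r.t.\ $\ell_H$ with an algorithm $\mathcal{A}$ achieving sublinear regret $R(T)$. On any \emph{realizable} stream $(x_1, f(x_1)), \ldots, (x_T, f(x_T))$ with $f \in \mathcal{F}$, the agnostic guarantee collapses to $\mathbb{E}[\sum_t \ell_H(\mathcal{A}(x_t), f(x_t))] \leq R(T)$, and the pointwise bound $\ell \leq b\,\ell_H$ immediately gives $\mathbb{E}[\sum_t \ell(\mathcal{A}(x_t), f(x_t))] \leq b R(T)$, which is still sublinear. Hence $\mathcal{A}$ is a realizable online learner for $\mathcal{F}$ w.r.t.\ $\ell$, and Theorem~\ref{thm:onlreal2agn} converts it into an agnostic online learner for $\mathcal{F}$ w.r.t.\ $\ell$.

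For necessity, the argument is symmetric. Starting from an agnostic online learner for $\mathcal{F}$ w.r.t.\ $\ell$, restrict to realizable streams to obtain cumulative-loss bounds in $\ell$, then apply $\ell_H \leq \ell/a$ pointwise to convert these into a realizable online learner for $\mathcal{F}$ w.r.t.\ $\ell_H$. A second application of Theorem~\ref{thm:onlreal2agn}, this time using the $c$-subadditivity of $\ell_H$, produces an agnostic online learner for $\mathcal{F}$ w.r.t.\ $\ell_H$.

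The step that I expect to deserve the most care is verifying the two-sided pointwise comparison $a\,\ell_H \leq \ell \leq b\,\ell_H$ and the $c$-subadditivity needed to invoke Theorem~\ref{thm:onlreal2agn}; both follow directly from finiteness of $\mathcal{Y}$ combined with identity of indiscernibles, but they are the linchpin of the reduction. Once these are in hand the proof is a short black-box reduction. Unlike the batch analog, the online realizable-to-agnostic conversion inflates the regret by a factor that depends on $|\text{im}(\mathcal{F})|$ and a chosen $\beta \in (0,1)$; this is harmless for the qualitative equivalence asserted in the lemma, since sublinearity in $T$ is preserved.
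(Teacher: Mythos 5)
Your proposal is correct and follows essentially the same route as the paper's proof: establish the two-sided comparison $a\,\ell_H \leq \ell \leq b\,\ell_H$ from finiteness of $\mathcal{Y}$ plus identity of indiscernibles, observe that the agnostic guarantee collapses on realizable streams so an agnostic learner for one loss is a realizable learner for the other, and then invoke Theorem~\ref{thm:onlreal2agn} in each direction. No gaps.
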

\noindent  The proof of Lemma \ref{lem: arbOL} is similar to that of Lemma \ref{lem:hamming_general_batch} with the main difference being the use of Theorem \ref{thm:onlreal2agn} instead of Lemma \ref{realizable_agnostic_equiv}. Since Lemma \ref{lem: arbOL} is our first application of Theorem \ref{thm:onlreal2agn}, we provide the full proof here.

\begin{proof} 
Since $\ell$ and $\ell^{\prime}$ are arbitrary, it suffices to prove only one direction. To that end, suppose $\Fcal$ is online learnable with respect to $\ell$. We now show that $\Fcal$ is online learnable with respect to $\ell^{\prime}$ as well. 

Let $a$ and $b$ be the universal constants such that for all $y_1, y_2 \in \mathcal{Y}$, $a\ell(y_1, y_2) \leq \ell^{\prime}(y_1, y_2) \leq b\ell(y_1, y_2)$. Let $c = \frac{\max_{r \neq t}\ell^{\prime}(r, t)}{\min_{r \neq t}\ell^{\prime}(r, t)}$.  Since $|\text{im}(\mathcal{F})| = 2^K < \infty$ and $\ell^{\prime}$ is a $c$-subadditive, by Theorem \ref{thm:onlreal2agn}, it suffices to give a realizable online learner for $\mathcal{F}$ with respect to $\ell^{\prime}$. Since $\mathcal{F}$ is online learnable with respect to $\ell$, there exists an algorithm $\mathcal{A}$ such that for any  sequence $(x_1, y_1), ..., (x_T, y_T)$, we have 

$$\mathbb{E}\left[\sum_{t=1}^T \ell(\mathcal{A}(x_t), y_t) - \inf_{f \in \mathcal{F}}\sum_{t=1}^T \ell(f(x_t), y_t)\right] \leq R(T, 2^K) $$
where $R(T, 2^K)$ is a sublinear function of $T$. In the realizable setting, we are guaranteed that for any sequence $(x_1, y_1), ..., (x_T, y_T)$ that the online learner may observe, there exists a $f \in \mathcal{F}$ s.t $f(x_t) = y_t$ for all $t \in [T]$. Since $\ell$ satisfies the identity of indiscernibles, we have that for any realizable sequence $(x_1, y_1), ..., (x_T, y_T)$, $\inf_{f \in \mathcal{F}} \sum_{t=1}^T \ell(f(x_t), y_t) = 0$. Thus, we have that $\mathbb{E}\left[\sum_{t=1}^T \ell(\mathcal{A}(x_t), y_t)\right] \leq R(T, 2^K).$ Noting that $\ell(\mathcal{A}(x_t), y_t) \geq \frac{\ell^{\prime}(\mathcal{A}(x_t), y_t)}{b}$ implies that $\mathbb{E}\left[\sum_{t=1}^T \ell^{\prime}(\mathcal{A}(x_t), y_t)\right] \leq bR(T, 2^K)$, showing that $\mathcal{A}$ is also a realizable online learner for $\mathcal{F}$ with respect to $\ell^{\prime}$. For any $\beta \in (0, 1)$, the construction in Theorem \ref{thm:onlreal2agn} can then be used to convert $\mathcal{A}$ into an agnostic online learner for $\mathcal{F}$ with respect to $\ell^{\prime}$ with expected regret bound 
$$
\frac{cbT}{T^{\beta}}\overline{R}(T^{\beta}, 2^K) + M\sqrt{4KT^{1+\beta}}
$$
where $M$  is such that $\ell \leq M$ and $ \overline{R}(T^{\beta}, 2^K)$ is any concave sublinear upperbound of $R(T^{\beta}, 2^K)$. This completes our proof. 

\end{proof}

As an immediate consequence of Lemma \ref{lem: arbOL} and Theorem \ref{thm:OLHam}, we get the following theorem characterizing the online learnability of general multilabel losses. 

\begin{theorem}
\label{thm:onlinecharac}
Let $\ell$ be any multilabel loss function that satisfies the identity of indiscernibles. A function class $\mathcal{F} \subseteq \mathcal{Y}^{\mathcal{X}}$ is online learnable with respect to $\ell$ if and only if each restriction $\mathcal{F}_k \subseteq \mathcal{Y}_k^{\mathcal{X}}$ is online learnable with respect to the 0-1 loss. 
\end{theorem}

\noindent \textbf{Remark.} Since the Littlestone dimension characterizes online learnability for binary classification under the 0-1 loss \citep{ben2009agnostic}, Theorem \ref{thm:onlinecharac} also implies that finiteness of $\text{Ldim}(\mathcal{F}_k)$ for all $k \in [K]$ is a necessary and sufficient condition for online multilabel learnability. 

Moreover, if $\text{Ldim}(\mathcal{F}_k)< \infty$ for all $k \in [K]$, then we have $\text{MCLdim}(\Fcal)< \infty$. This follows from the fact that $\text{MCLdim}(\mathcal{F}) \leq \sum_{k=1}^{K}\, \text{Ldim}(\mathcal{F}_k). $ To see this, note that $\text{MCLdim}(\mathcal{F})$ is the lowerbound on the number of mistakes of any deterministic multiclass learner in the realizable setting \cite[Theorem 17]{DanielyERMprinciple}. On the other hand, one can construct a deterministic realizable learner for $\mathcal{F}$ using $K$ different Standard Optimal Algorithms (SOA) for binary function classes $\mathcal{F}_k$'s. Namely, define an algorithm $\Acal$ such that $\mathcal{\Acal}(x) :=(\text{SOA}(\mathcal{F}_1)(x), \ldots, \text{SOA}(\mathcal{F}_K)(x)) \in \{-1,1\}^K$. Since each $\text{SOA}(\mathcal{F}_k)$ makes at most $\text{Ldim}(\mathcal{F}_k)$ number of mistakes, $\mathcal{A}$ makes no more than $\sum_{k=1}^{K}\, \text{Ldim}(\mathcal{F}_k)$ mistakes. We can use this fact to give an improved version of Theorem \ref{thm:onlreal2agn} for classes $\Fcal$ with $\text{MCLdim}(\Fcal)< \infty$. In particular, when $\Ycal = \{-1,1\}^K$, any $\Fcal \subseteq \Ycal^{\Xcal}$  that is learnable in the realizable setting with respect to $\ell$ is also learnable in the agnostic setting with regret  $O\Bigl(B \sqrt{T \text{MCLdim}(\Fcal) \ln(T)}\Bigl) \leq O\Bigl(B \sqrt{T \sum_{k=1}^K \text{Ldim}(\Fcal_k) \ln(T)}\Bigl)$. Here, $B$ is the maximum value $\ell$ can take. The improved regret bound can be found in the sufficiency proof of Theorem \ref{thm:MCLdimchar} in Appendix \ref{app:MCLdimchar}.

\subsection{Bandit Online Multilabel Classification}
\label{sec:bandit}

We extend the results in the previous subsection to the online setting where the learner only observes \textit{bandit} feedback in each round.  Theorem \ref{thm:banditcharac} gives a characterization of bandit online learnability of a function class $\mathcal{F}$ in terms of the online learnability of each restriction. 

\begin{theorem}
\label{thm:banditcharac}
Let $\ell$ be any loss function that satisfies the identity of indiscernibles.  A function class $\mathcal{F} \subseteq \mathcal{Y}^{\mathcal{X}}$ is bandit online learnable  with respect to $\ell$ if and only if each restriction $\mathcal{F}_k \subseteq \mathcal{Y}_k^{\mathcal{X}}$ is online learnable with respect to the 0-1 loss. 
\end{theorem}

\noindent Similar to the full-feedback setting, Theorem \ref{thm:banditcharac} also gives that the finiteness of $\text{Ldim}(\mathcal{F}_k)$ for all $k \in [K]$ is a necessary and sufficiency condition for bandit online multilabel learnability.
The proof of Theorem \ref{thm:banditcharac} uses the realizable-to-agnostic conversion for \textit{bandit} feedback setting when the label space $\mathcal{Y}$ is finite. The following Theorem makes this argument precise.

\begin{theorem}
\label{thm:banditonlreal2agn}
Let $\mathcal{Y}$ be a finite label space,  $\mathcal{F} \subseteq\mathcal{Y}^{\mathcal{X}}$ a multioutput function class, and $\ell: \mathcal{Y} \times \mathcal{Y} \rightarrow \mathbb{R}_{\geq 0}$ be any $c$-subadditive loss function such that $\ell(\cdot, \cdot) \leq M$. If $\mathcal{A}$ is a realizable online learner for $\mathcal{F}$ with respect to $\ell$ under full-feedback with sub-linear expected regret $R(T, |\mathcal{Y}|)$, then for every $\beta \in (0, 1)$, there exists an online learner for $\mathcal{F}$ with respect to $\ell$ with expected regret 

$$\frac{cT}{T^{\beta}}\overline{R}(T^{\beta}, |\mathcal{Y}|) + eM\sqrt{2T^{1+\beta}|\mathcal{Y}|\ln(|\mathcal{Y}|)}, $$
under \emph{bandit feedback}, where $\overline{R}(T,  |\Ycal|)$ is any concave, sublinear upperbound on $R(T, |\Ycal|)$.
\end{theorem}

\noindent The proofs for Theorem \ref{thm:banditcharac} and Theorem \ref{thm:banditonlreal2agn} are provided in Appendix \ref{appdx:proofs_bandit}.

\subsection{Online Multioutput Regression}
In this section, we characterize the online learnability of multioutput function classes. Similar to the batch setting, we consider, without loss of generality, the case when $\Ycal =[0,1]^K \subset \reals^K$ for  $K \in \naturals$. In addition, we consider the same set of decomposable and non-decomposable loss functions as in the batch setting. Namely, our decomposable loss functions satisfy Assumptions \ref{assumption1} and \ref{assumption2}, and our non-decomposable loss functions are $\ell_p$ norms. Informally, our main result asserts that a multioutput function class $\mathcal{F} \subset \mathcal{Y}^{\mathcal{X}}$ is online learnable if and only if each restriction $\mathcal{F}_k$ is online learnable.

\subsubsection{Characterizing Learnability for Decomposable Losses}
In this subsection, we characterize the online learnability of multioutput function classes with respect to decomposable losses satisfying Assumption \ref{assumption1}. Our main theorem is presented below. 

\begin{theorem} \label{thm:onlinereg_decomp}
    Let $\ell$ be a decomposable loss function satisfying \emph{Assumption \ref{assumption1}}. A multioutput function class $\Fcal \subseteq \Ycal^{\Xcal}$ is online  learnable with respect to  $\ell$ if and only if each $\Fcal_k \subseteq \Ycal_k^{\Xcal}$ is online  learnable with respect to $\psi_k \circ d_1$.
\end{theorem}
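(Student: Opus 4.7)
The plan is to mirror the structure of the batch proof of Theorem \ref{thm:batch_reg_decom}, handling sufficiency by a direct coordinate-wise product construction and necessity by an experts-based reduction that crucially invokes the sub-sampling trick from Theorem \ref{thm:onlreal2agn}. For sufficiency, given online learners $\Acal_k$ for each $\Fcal_k$ w.r.t.\ $\psi_k \circ d_1$ with sublinear expected regrets $R_k(T)$, define $\Acal$ to run all $K$ learners in parallel: at round $t$ predict $(\Acal_1(x_t), \ldots, \Acal_K(x_t))$, and after $y_t$ is revealed update each $\Acal_k$ on $(x_t, y_t^k)$. Summing the $K$ individual regret bounds, using linearity of expectation, the decomposability of $\ell$, and the elementary inequality $\sum_{k=1}^K \inf_{f_k \in \Fcal_k} G_k(f_k) \leq \inf_{f \in \Fcal} \sum_{k=1}^K G_k(f_k)$, we immediately obtain $\expect\!\left[\sum_t \ell(\Acal(x_t), y_t) - \inf_{f \in \Fcal} \sum_t \ell(f(x_t), y_t)\right] \leq \sum_k R_k(T)$, which is sublinear in $T$.

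For necessity, given an online learner $\Acal$ for $\Fcal$ w.r.t.\ $\ell$, I would construct an online learner $\Bcal_1$ for $\Fcal_1$ w.r.t.\ $\psi_1 \circ d_1$ (the other coordinates follow by symmetry). The obstruction, exactly as in the batch setting, is that $\Acal$ needs vector-valued labels while $\Bcal_1$ only observes the scalar $y_t^1$; the fix is to augment $y_t^1$ with a guess for the remaining coordinates drawn from a discretization of $\Fcal_{2:K}$. Discretize each $\Fcal_k$ for $k \geq 2$ at scale $\alpha$ as in the batch proof, and for each $\phi \in \Fcal_{2:K}^{\alpha}$ define an expert $E_{\phi}$ that runs an independent copy of $\Acal$, predicts $\Acal(x_t)_1$ at each round, and updates $\Acal$ on $(x_t, (y_t^1, \phi(x_t)))$. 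Run REWA over these experts using (a rescaled copy of) $\psi_1 \circ d_1$. Letting $f^{\star} = (f_1^{\star}, f_{2:K}^{\star}) \in \Fcal$ be an extension of the in-hindsight optimum $f_1^{\star} \in \Fcal_1$ and comparing $\Bcal_1$ to the expert $E_{\phi^{\star}}$ with $\phi^{\star} = f_{2:K}^{\star, \alpha}$, the regret bound of $\Acal$ on the augmented stream combined with the $L$-Lipschitzness of each $\psi_k$ (which yields $\psi_k \circ d_1(f_k^{\star}(x), f_k^{\star,\alpha}(x)) \leq L\alpha$) produces an overall regret of the form $L(K-1)\alpha T + R(T) + M\sqrt{2T\ln N}$, where $M$ bounds $\psi_1 \circ d_1$ and $N$ is the number of experts.

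The main obstacle is that $|\Fcal_{2:K}^{\alpha}|$ restricted to the observed stream can be as large as $(2/\alpha)^{(K-1)T}$, making $\ln N$ linear in $T$ and the naive REWA bound useless. The resolution is to adopt the sub-sampling construction from Theorem \ref{thm:onlreal2agn}: draw $B \in \{0,1\}^T$ with $B_t \sim \text{Bernoulli}(T^{\beta-1})$ independently for some $\beta \in (0,1)$, and only define experts $E_{B,\phi}$ that update $\Acal$ on the rounds $t$ where $B_t = 1$, with $\phi$ ranging over the at most $(2/\alpha)^{(K-1)|B|}$ discretized completions of those rounds. Repeating the bookkeeping of Theorem \ref{thm:onlreal2agn}, but invoking $L$-Lipschitzness in place of $c$-subadditivity (which need not hold for $\psi_1 \circ d_1$ in general) to relate $\Acal$'s augmented-stream loss to its true first-coordinate loss, produces three terms to balance: a discretization bias $L(K-1)\alpha T$, a $(T/T^{\beta})\overline{R}(T^{\beta})$ term from $\Acal$'s sub-sampled regret (with $\overline{R}$ the concave sublinear envelope from Lemma \ref{lem:woess}), and a REWA term of order $M\sqrt{T^{1+\beta}(K-1)\log(1/\alpha)}$ after Jensen's inequality bounds $\expect[\sqrt{\ln N}]$ via $\expect[|B|] = T^{\beta}$. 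Taking $\alpha = T^{-\gamma}$ for any fixed $\gamma > 0$ and any $\beta \in (0,1)$ renders all three terms sublinear. The most delicate step is this Lipschitz substitution: it suffices here because the augmentation used by $E_{\phi^{\star}}$ already matches $f_{2:K}^{\star}$ up to a uniform $L\alpha$ error, so no triangle inequality on $\psi_1 \circ d_1$ itself is ever required, unlike in Theorem \ref{thm:onlreal2agn} where $c$-subadditivity of the loss plays an essential role.
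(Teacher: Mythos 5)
Your proposal is correct and follows essentially the same route as the paper's proof: sufficiency by running the $K$ coordinate learners in parallel and summing regrets, and necessity by the sub-sampled experts construction of Theorem \ref{thm:onlreal2agn}, with each expert augmenting the scalar labels by a discretized completion from $\Fcal_{2:K}^{\alpha}$ and the comparison made to the expert indexed by $f_{2:K}^{\star,\alpha}$, so that only the $L$-Lipschitzness of the $\psi_k$ (giving a per-round $L\alpha$ discretization bias) rather than subadditivity is needed. Your parameter choice $\alpha = T^{-\gamma}$ differs cosmetically from the paper's $\alpha = 1/(KTL)$ but yields the same sublinear conclusion.
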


\begin{proof}
As usual, we prove Theorem \ref{thm:onlinereg_decomp} in two parts: first sufficiency and then necessity.  The sufficiency proof is similar to that of the proof for Hamming loss in Theorems \ref{thm:multi_label_batch_hamming} and \ref{thm:OLHam}. The necessity direction is more involved, but the main idea is to combine the augmentation technique used in the proof of Theorem \ref{thm:batch_reg_decom} with the algorithmic conversion technique developed in the proof of Theorem \ref{thm:onlreal2agn}. 

\noindent\textbf{Part 1: Sufficiency.}
We first prove that online learnability of each restriction $\mathcal{F}_k$ with respect to $\psi_k \circ d_1$ is sufficient for online learnability of $\mathcal{F}$ with respect to $\ell$. Since $\ell(f(x), y) = \sum_{k=1}^K \psi_k \circ d_1 (f_k(x), y_k)$ is decomposable, we can use the exact same strategy as in Section \ref{sec:onlinehamming} to convert online learners $\mathcal{A}_1$, ..., $\mathcal{A}_K$ for $\mathcal{F}_1, ..., \mathcal{F}_K$ with respect to $\psi_k \circ d_1$ to an online learner $\mathcal{A}$ for $\mathcal{F}$ with respect to $\ell$. More specifically, in each round $t \in [T]$, receive $x_t$, query the predictions $\Acal_1(x_t), ..., \Acal_K(x_t)$, and finally predict the concatenation $\hat{y}_t = (\Acal_1(x_t), ..., \Acal_K(x_t))$. Once the true label $y_t = (y_t^1, ..., y_t^K)$ is revealed, update each online learner $\mathcal{A}_k$ by passing $(x_t, y_t^k)$ for $k \in [K]$. Using the exact same proof as in Section \ref{sec:onlinehamming}, it follows that the expected regret of $\mathcal{A}$ is $\sum_{k=1}^K R_k(T)$ where $R_k(T)$ is the regret of online algorithm $\mathcal{A}_k$. Since $K$ is finite, the regret of $\mathcal{A}$ is sublinear in $T$ when evaluated using $\ell$.

\noindent
\textbf{Part 2: Necessity.}
Similar to the batch setting, we prove the necessity direction of Theorem  \ref{thm:onlinereg_decomp} constructively. That is, given oracle access to an online learner $\mathcal{A}$ for $\mathcal{F}$ with respect to $\ell$, we construct an online learner $\mathcal{Q}$ for $\mathcal{F}_1$ with respect to $\psi_1 \circ d_1$. By symmetry, a similar reduction can be used to construct online learners for each restriction $\mathcal{F}_k$. As mentioned before, we assume an oblivious adversary, and therefore the stream of points to be observed by the online learner, denoted $(x_1, y_1), ..., (x_T, y_T) \in (\mathcal{X} \times [0, 1])^T$, is fixed beforehand. Let $f_1^{\star} = \argmin_{f_1 \in \mathcal{F}_1} \sum_{t=1}^T \psi_k \circ d_1(f_1(x_t), y_t)$ denote the optimal function in hindsight and $f^{\star} \in \mathcal{F}$ its completion. 

Since we are trying to construct an online learner for $\mathcal{F}_1$, the targets $y_1, ..., y_T$ are \textit{scalar-valued}. However, $\mathcal{A}$ is an online learner for $\mathcal{F}$ and therefore can only processes \textit{vector-valued} targets. Thus, we need to figure out how to augment the scalar-valued targets $y_1, ..., y_T$ in a way that allows us to use $\mathcal{A}$ to construct an online learner $\mathcal{Q}$ for $\mathcal{F}_1$. Following a similar strategy as in the proof of Theorem \ref{thm:onlreal2agn}, we can construct a set of Experts that simulate online games with $\mathcal{A}$ by augmenting, in all possible ways, the scalar-valued targets of a \textit{sub-sample} of the stream into vector-valued targets using vectors in $\mathcal{Y}_{2:k}^{\alpha}$, the discretized label space for components 2 through $K$.  In particular, our high-level strategy is to: 
\begin{enumerate}
    \item Randomly \textit{sub-sample} points from the stream
    \item Construct a set of Experts, each of which:
        \begin{enumerate}
            \item Uses an independent copy of $\mathcal{A}$ to make predictions $\hat{y}_t = \mathcal{A}_1(x_t)$
            \item Augments the scalar-valued targets of each labeled instance in the \textit{sub-sampled} stream to vector-valued targets using vectors in the discretized image space $\text{im}(\mathcal{F}_{2:K}^{\alpha})$
            \item Simulates an online game with its independent copy of $\mathcal{A}$ over only the augmented \textit{sub-sampled} stream with vector-valued targets
        \end{enumerate}
    \item Run REWA using the set of experts in Step 2 and the $\psi_1 \circ d_1$ loss function over the \textit{original} stream of points. 
\end{enumerate}

We now formalize this idea. For any bitstring $b \in \{0, 1\}^T$, let $\phi: \{t: b_t = 1\} \rightarrow \text{im}(\mathcal{F}_{2:K}^{\alpha})$ denote a function mapping time points where $b_t = 1$ to vectors in the discretized image space $\text{im}(\mathcal{F}_{2:K}^{\alpha})$. Let $\Phi_b \subseteq (\text{im}(\mathcal{F}_{2:K}^{\alpha}))^{\{t: b_t = 1\}}$ denote all such functions $\phi$. For every $f \in \mathcal{F}$, let $\phi_b^f \in \Phi_b$ be the mapping such that for all $t \in \{t: b_t = 1\}$, $\phi_b^f(t) = f_{2:K}^{\alpha}(x_t)$. Let $|b| = |\{t: b_t = 1\}|$. For every $b \in \{0, 1\}^T$ and $\phi \in \Phi_b$, define an Expert $E_{b, \phi}$. Expert $E_{b, \phi}$, formally presented in Algorithm \ref{alg:expert_decompnecc}, uses $\mathcal{A}$ to make predictions in each round. However, $E_{b, \phi}$ only updates $\mathcal{A}$ on those rounds where $b_t = 1$, using $\phi$ to augment the scalar-valued labeled instance $(x_t, y_t)$ to the vector-valued labeled instance $(x_t, (y_t, \phi(t)))$. For every $b \in \{0, 1\}^T$, let $\mathcal{E}_b = \bigcup_{\phi \in \Phi_b} \{E_{b, \phi}\}$ denote the set of all Experts parameterized by functions $\phi \in \Phi_b$. If $b$ is the all zeros bitstring, then $\mathcal{E}_b$ is empty. Therefore, we actually define $\mathcal{E}_b = \{E_0\} \cup \bigcup_{\phi \in \Phi_b} \{E_{b, \phi}\}$, where $E_0$ is the expert that never updates $\mathcal{A}$ and plays $\mathcal{A}_1(x_t)$ for all $t \in [T]$.  Note that $1 \leq |\mathcal{E}_b| \leq (\frac{2}{\alpha})^{K|b|}$. 

\begin{algorithm}
\caption{Expert($b$, $\phi$)}
\label{alg:expert_decompnecc}
\setcounter{AlgoLine}{0}
\KwIn{Independent copy of Online Learner $\mathcal{A}$ for $\ell$}
\For{$t = 1,...,T$} {
    Receive example $x_t$

    Predict $\tilde{y}_t = \mathcal{A}_1(x_t)$

    Receive $y_t$
    
    \uIf{$b_t = 1$}{        
        Update $\mathcal{A}$ by passing $(x_t, (y_t, \phi(t)))$
    }
}
\end{algorithm}

With this notation in hand, we are now ready to present Algorithm \ref{alg:onlinereg_decompnec}, our main online learner $\mathcal{Q}$ for $\mathcal{F}_1$.

\begin{algorithm}
\caption{Online learner $\mathcal{Q}$ for $\Fcal_1$ with respect to $\psi_1 \circ d_1$}
\label{alg:onlinereg_decompnec}
\setcounter{AlgoLine}{0}
\KwIn{ Parameters $0 < \beta < 1$ and $0 < \alpha < 1$}

Let $B \in \{0, 1\}^T$ such that  $B_t \overset{\text{iid}}{\sim} \text{Bernoulli}(\frac{T^{\beta}}{T})$ 


Construct the set of experts $\mathcal{E}_B = \{E_0\} \cup \bigcup_{\phi \in \Phi_B} \{E_{B, \phi}\}$ according to Algorithm \ref{alg:expert_decompnecc}

Run REWA $\mathcal{P}$ using $\mathcal{E}_B$ and the loss function $\psi_1 \circ d_1$ over the stream $(x_1, y_1), ..., (x_T, y_T)$

\end{algorithm}

Our goal now is to show that $\mathcal{Q}$ enjoys sublinear expected regret. There are three main sources of randomness: the randomness involved in sampling $B$, the internal randomness  of each independent copy of the online learner $\mathcal{A}$, and the internal randomness  of REWA. Let $B, A$ and $P$ denote the random variable associated with these sources of randomness respectively. By construction, $B, A$, and $P$ are independent. 

Using Theorem 21.11 in \cite{ShwartzDavid} and the fact that $A, P$, and $B$ are independent, REWA guarantees  

$$\mathbb{E}\left[\sum_{t=1}^T \psi_1 \circ d_1(\mathcal{P}(x_t), y_t)\right] \leq \mathbb{E}\left[\inf_{E \in \mathcal{E}_B} \sum_{t=1}^T \psi_1 \circ d_1(E(x_t), y_t)\right] + \mathbb{E}\left[\sqrt{2T\ln(|\mathcal{E}_B|)} \right].$$
Thus,  
\begin{align*}
    \mathbb{E}\left[\sum_{t=1}^T \psi_1 \circ d_1(\mathcal{Q}(x_t), y_t) \right]  &= \mathbb{E}\left[\sum_{t=1}^T \psi_1 \circ d_1(\mathcal{P}(x_t), y_t)\right]\\
    &\leq \mathbb{E}\left[\inf_{E \in \mathcal{E}_B} \sum_{t=1}^T \psi_1 \circ d_1(E(x_t), y_t) \right] + \mathbb{E}\left[\sqrt{2T\ln(|\mathcal{E}_B|)}\right]\\
    &\leq \mathbb{E}\left[\sum_{t=1}^T \psi_1 \circ d_1(E_{B, \phi_B^{f^{\star}}}(x_t), y_t) \right] + \mathbb{E}\left[\sqrt{2T\ln(|\mathcal{E}_B|)}\right].
\end{align*}
In the last step, we used the fact that for all $b \in \{0, 1\}^T$ and $f \in \mathcal{F}$, $E_{b, \phi_b^f} \in \mathcal{E}_b$. 

It now suffices to upperbound $\mathbb{E}\left[\sum_{t=1}^T \psi \circ d_1(E_{B, \phi_B^{f^{\star}}}(x_t), y_t) \right]$. We use the same notation used to prove Theorem \ref{thm:onlreal2agn}, but for the sake of completeness, we restate it here. Given an online learner $\mathcal{A}$ for $\ell$, an instance $x \in \mathcal{X}$, and an ordered sequence of labeled examples $L  \in (\mathcal{X} \times [0, 1]^K)^*$, let $\mathcal{A}(x|L)$ be the random variable denoting the prediction of $\mathcal{A}$ on the instance $x$ after running and updating on $L$. For any $b\in \{0, 1\}^T$, $f_{2:K}^{\alpha} \in \mathcal{F}^{\alpha}_{2:K}$, and $t \in [T]$, let $L^f_{b_{< t}} = \{(x_i, (y_i, f_{2:K}^{\alpha}(x_i))): i < t \text{ and } b_i = i\}$ denote the \textit{subsequence} of the sequence of labeled instances $\{(x_i, (y_i, f_{2:K}^{\alpha}(x_i)))\}_{i=1}^{t-1}$ where $b_i = 1$. Using this notation, we can write



\begin{align*}
    \mathbb{E}\left[\sum_{t=1}^T \psi_1 \circ d_1(E_{B, \phi_B^{f^{\star}}}(x_t), y_t) \right] &=  \mathbb{E}\left[\sum_{t=1}^T \psi_1 \circ d_1(\mathcal{A}_1(x_t|L^{f^{\star}}_{B_{< t}}), y_t) \right]\\
    &= \mathbb{E}\left[\sum_{t=1}^T \psi_1 \circ d_1(\mathcal{A}_1(x_t|L^{f^{\star}}_{B_{< t}}), y_t)\frac{\mathbbm{P}\left[B_t = 1 \right]}{\mathbbm{P}\left[B_t = 1 \right]} \right]\\
    &= \frac{T}{T^{\beta}}\sum_{t=1}^T \mathbb{E}\left[\psi_1 \circ d_1(\mathcal{A}_1(x_t|L^{f^{\star}}_{B_{< t}}), y_t)\mathbbm{P}\left[B_t = 1 \right] \right]\\
    &= \frac{T}{T^{\beta}}\sum_{t=1}^T \mathbb{E}\left[\psi_1 \circ d_1(\mathcal{A}_1(x_t|L^{f^{\star}}_{B_{< t}}), y_t)\mathbbm{1}\{B_t = 1 \}\right].
\end{align*}

To see the last equality, note that the prediction $\mathcal{A}(x_t|L_{B_{< t}}^{f^{\star}})$ (and therefore $\mathcal{A}_1(x_t|L_{B_{< t}}^{f^{\star}})$) only depends on bitstring ($B_1, \ldots, B_{t-1}$) and the internal randomness of $A$, both of which are independent of $B_t$. Thus, we have  

\begin{align*}
    \mathbb{E}\left[\psi_1 \circ d_1(\mathcal{A}_1(x_t|L^{f^{\star}}_{B_{< t}}), y_t)\mathbbm{1}\{B_t = 1 \}\right]  &= \mathbb{E}\left[\psi_1 \circ d_1(\mathcal{A}_1(x_t|L^{f^{\star}}_{B_{< t}}), y_t) \right] \, \mathbb{E}\left[ \indicator \{B_t=1\}\right]\\
    &= \mathbb{E}\left[\psi_1 \circ d_1(\mathcal{A}_1(x_t|L^{f^{\star}}_{B_{< t}}), y_t)\right] \mathbbm{P}[B_t =1]
\end{align*}

%
%
as needed. Continuing onwards,

\begin{align*}
    \mathbb{E}\left[\sum_{t=1}^T \psi_1 \circ d_1(E_{B, \phi_B^{f^{\star}}}(x_t), y_t) \right] &= \frac{T}{T^{\beta}}\mathbb{E}\left[\sum_{t=1}^T \psi_1 \circ d_1(\mathcal{A}_1(x_t|L^{f^{\star}}_{B_{< t}}), y_t)\mathbbm{1}\{B_t = 1 \}\right]\\
    &= \frac{T}{T^{\beta}}\mathbb{E}\left[\sum_{t: B_t = 1} \psi_1 \circ d_1(\mathcal{A}_1(x_t|L^{f^{\star}}_{B_{< t}}), y_t)\right]\\
    &\leq \frac{T}{T^{\beta}}\mathbb{E}\left[\sum_{t: B_t = 1} \ell(\mathcal{A}(x_t|L^{f^{\star}}_{B_{< t}}), (y_t, f_{2:K}^{\star, \alpha}(x_t)))\right] \\
     &= \frac{T}{T^{\beta}}\mathbb{E}\left[\mathbb{E}\left[\sum_{t: B_t = 1} \ell(\mathcal{A}(x_t|L^{f^{\star}}_{B_{< t}}), (y_t, f_{2:K}^{\star, \alpha}(x_t))) \bigg|B\right]\right]\\
    &\leq \frac{T}{T^{\beta}}\mathbb{E}\left[\sum_{t: B_t = 1} \ell(f^{\star}(x_t), (y_t, f_{2:K}^{\star, \alpha}(x_t))) + R_{\mathcal{A}}(|B|) \right] \\
    &= \frac{T}{T^{\beta}}\mathbb{E}\left[\sum_{t: B_t = 1} \ell(f^{\star}(x_t), (y_t, f_{2:K}^{\star, \alpha}(x_t)))\right] + \frac{T}{T^{\beta}}\mathbb{E}\left[ R_{\mathcal{A}}(|B|) \right]\\
\end{align*}

The first inequality follows from the definition of $\ell$. The second inequality follows from the fact that $\mathcal{A}$ is an online learner for $\ell$ with regret bound $R_{\mathcal{A}}(T)$ and is updated on the stream labeled by $f_{2:K}^{\star, \alpha}$ only when $B_t = 1$. Now, we can upperbound the first term as follows: 

\begin{align*}
\frac{T}{T^{\beta}}\mathbb{E}\left[\sum_{t: B_t = 1} \ell(f^{\star}(x_t), (y_t, f_{2:K}^{\star, \alpha}(x_t)))\right] &= \frac{T}{T^{\beta}}\mathbb{E}\left[\sum_{t: B_t = 1} \left(\psi_1 \circ d_1(f^{\star}_1(x_t), y_t) + \sum_{k=2}^K \psi_k \circ d_1(f_k^{\star}(x_t), f^{\star, \alpha}_k(x_t)) \right) \right]\\
&\leq \frac{T}{T^{\beta}}\mathbb{E}\left[\sum_{t: B_t = 1} \psi_1 \circ d_1(f^{\star}_1(x_t), y_t) + \sum_{t: B_t=1}KL\alpha \right] \\
&\leq \frac{T}{T^{\beta}}\mathbb{E}\left[\sum_{t= 1}^T \psi_1 \circ d_1(f^{\star}_1(x_t), y_t)\mathbbm{1}\{B_t = 1\}\right] + \frac{T}{T^{\beta}}\mathbb{E}\left[|B|KL\alpha \right] \\
&= \frac{T}{T^{\beta}}\sum_{t= 1}^T \psi_1 \circ d_1(f^{\star}_1(x_t), y_t)\frac{T^\beta}{T}+ \frac{T}{T^{\beta}}T^{\beta}KL\alpha \\
&= \sum_{t= 1}^T \psi_1 \circ d_1(f^{\star}_1(x_t), y_t)+ KTL\alpha.\\ 
\end{align*}

The first inequality follows from the fact that $\psi_k$ is $L$-Lipschitz and $d_1(f_k^{\star}(x_t), f^{\star, \alpha}_k(x_t)) \leq \alpha$.  Putting things together, we find that, 

\begin{align*}
    \mathbb{E}\Bigg[\sum_{t=1}^T &\psi_1 \circ d_1(\mathcal{Q}(x_t), y_t) \Bigg] \\
    &\leq \mathbb{E}\left[\sum_{t=1}^T \psi_1 \circ d_1(E_{B, \phi_B^{f^{\star}}}(x_t), y_t) \right] + \mathbb{E}\left[\sqrt{2T\ln(|\mathcal{E}_B|)}\right]\\
    &\leq  \sum_{t= 1}^T \psi_1 \circ d_1(f^{\star}_1(x_t), y_t)+ KTL\alpha + \frac{T}{T^{\beta}}\mathbb{E}\left[ R_{\mathcal{A}}(|B|) \right] + \mathbb{E}\left[\sqrt{2T\ln(|\mathcal{E}_B|)}\right]\\
    &\leq \inf_{f_1 \in \mathcal{F}_1}\sum_{t= 1}^T \psi_1 \circ d_1(f_1(x_t), y_t)+ KTL\alpha + \frac{T}{T^{\beta}}\mathbb{E}\left[ R_{\mathcal{A}}(|B|) \right] + \mathbb{E}\left[\sqrt{2TK|B|\ln(\frac{2}{\alpha})}\right].\\
\end{align*}
where the last inequality follows from the fact that that $|\mathcal{E}_B| \leq (\frac{2}{\alpha})^{K|B|}$ and the definition of $f^{\star}$. By Jensen's inequality, we further get that, $\mathbb{E}\left[\sqrt{2TK|B|\ln(\frac{2}{\alpha})}\right] \leq \sqrt{2T^{\beta + 1}K\ln(\frac{2}{\alpha})}$, which implies that 

$$\mathbb{E}\left[\sum_{t=1}^T \psi_1 \circ d_1(\mathcal{Q}(x_t), y_t) \right]  \leq \inf_{f_1 \in \mathcal{F}_1}\sum_{t= 1}^T \psi_1 \circ d_1(f_1(x_t), y_t)+ KTL\alpha + \frac{T}{T^{\beta}}\mathbb{E}\left[ R_{\mathcal{A}}(|B|) \right] + \sqrt{2T^{\beta + 1}K\ln(\frac{2}{\alpha})}.$$

Next,  by Lemma \ref{lem:woess}, there exists a concave sublinear function $ \overline{R}_{\Acal}(|B|)$ of $|B|$ that upperbounds $ R_{\Acal}(|B|)$. By Jensen's inequality, we obtain $ \expect[\overline{R}_{\Acal}(|B|)] \leq \overline{R}_{\Acal}(T^{\beta})$, which yields

$$\mathbb{E}\left[\sum_{t=1}^T \psi_1 \circ d_1(\mathcal{Q}(x_t), y_t) \right]  \leq \inf_{f_1 \in \mathcal{F}_1}\sum_{t= 1}^T \psi_1 \circ d_1(f_1(x_t), y_t)+ KTL\alpha + \frac{T}{T^{\beta}}\overline{R}_{\mathcal{A}}(T^{\beta}) + \sqrt{2T^{\beta + 1}K\ln(\frac{2}{\alpha})}.$$

Picking $\alpha = \frac{1}{KTL}$ and $\beta \in (0, 1)$, gives that $\mathcal{Q}$ enjoys sublinear expected regret:

$$\mathbb{E}\left[\sum_{t=1}^T \psi_1 \circ d_1(\mathcal{Q}(x_t), y_t) \right]  - \inf_{f_1 \in \mathcal{F}_1}\sum_{t= 1}^T \psi_1 \circ d_1(f_1(x_t), y_t) \leq  1 + \frac{T}{T^{\beta}}\overline{R}_{\mathcal{A}}(T^{\beta}) + \sqrt{4T^{\beta + 1}K\ln(KTL)}.$$

This completes the proof as we have shown that $\mathcal{Q}$ is an online learner for $\mathcal{F}_1$ with respect to $\psi_1 \circ d_1$.

\end{proof}

\subsubsection{A More General Characterization of Learnability for Decomposable Losses}

Theorem \ref{thm:onlinereg_decomp} characterizes the learnability of multioutput function classes $\mathcal{F}$ with respect to decomposable loss functions $\ell$ in terms of the learnability of $\mathcal{F}_k$ with respect to $\psi_k \circ d_1$. Similar to the batch setting, we can remove $\psi_k$, and characterize the learnability of $\mathcal{F}$ with respect to $\ell$ in terms of the learnability of $\mathcal{F}_k$'s with respect to $d_1$. However, to do so, we need to an place additional assumption on the decomposable loss function $\ell$. 
Theorem \ref{thm:onlinereg_ell2d1} below summarizes the main result of this section. 

\begin{theorem} \label{thm:onlinereg_ell2d1}
    Let $\ell$ be any decomposable loss function satisfying Assumptions \ref{assumption1} and \ref{assumption2}. A multioutput function class $\Fcal \subseteq \Ycal^{\Xcal}$ is online learnable with respect to  $\ell$ if and only if each $\Fcal_k \subseteq \Ycal_k^{\Xcal}$ is online learnable with respect to $d_1$.
\end{theorem}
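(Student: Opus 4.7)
The plan is to deduce Theorem \ref{thm:onlinereg_ell2d1} as an immediate consequence of Theorem \ref{thm:onlinereg_decomp} together with an online analog of Lemma \ref{lem:batch_NFLT}. Specifically, I would establish the following auxiliary claim: for any scalar-valued function class $\Gcal \subseteq [0,1]^{\Xcal}$ and any Lipschitz, monotonic $\psi: \reals_{\geq 0} \to \reals_{\geq 0}$ with $\psi(0) = 0$, $\Gcal$ is online learnable w.r.t.\ $\psi \circ d_1$ if and only if $\Gcal$ is online learnable w.r.t.\ $d_1$. Combining this with Theorem \ref{thm:onlinereg_decomp} — which equates online learnability of $\Fcal$ w.r.t.\ $\ell$ with online learnability of each $\Fcal_k$ w.r.t.\ $\psi_k \circ d_1$ — immediately yields the desired characterization in terms of $d_1$.

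For the sufficiency direction of the auxiliary claim ($d_1$ learnable implies $\psi \circ d_1$ learnable), the argument is short: Lipschitzness of $\psi$ and $\psi(0) = 0$ give $\psi(|z|) \leq L|z|$, so $\psi \circ d_1(\hat{y},y) \leq L \cdot d_1(\hat{y},y)$ pointwise. Any online learner for $\Gcal$ w.r.t.\ $d_1$ with regret $R(T)$ is thus an online learner for $\Gcal$ w.r.t.\ $\psi \circ d_1$ with regret at most $L \cdot R(T)$, which remains sublinear. Alternatively, this can be phrased via Talagrand's contraction lemma applied to the sequential Rademacher complexity of $\psi \circ \Gcal$.

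For the necessity direction, I would adapt the batch proof of Lemma \ref{lem:batch_NFLT} to the sequential setting, working through sequential fat-shattering dimension. Recall that $\Gcal$ is online learnable w.r.t.\ $d_1$ if and only if $\text{fat}^{\text{seq}}_\gamma(\Gcal) < \infty$ for every $\gamma > 0$. Arguing contrapositively, suppose $\text{fat}^{\text{seq}}_\gamma(\Gcal) = \infty$ for some $\gamma > 0$. I would use the witnessing infinite $\gamma$-shattered binary tree to build an adversarial strategy that forces linear regret even under $\psi \circ d_1$, exploiting the fact that monotonicity plus non-degeneracy of $\psi$ forces $\psi(\gamma/2) > 0$ so that separation of function values by at least $\gamma$ translates into loss separation by at least a positive constant depending only on $\psi$ and $\gamma$. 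This gives $\Gcal$ is not online learnable w.r.t.\ $\psi \circ d_1$, completing the contrapositive.

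The main obstacle is the necessity direction. Unlike the Rademacher-based argument for sufficiency, translating online $\psi \circ d_1$ regret bounds back to online $d_1$ regret bounds has no clean pointwise inequality to exploit, because $\psi$ may be arbitrarily slow-growing near zero. The adversarial lower bound construction must therefore be carefully calibrated: the scale $\gamma$ at which $\Gcal$ sequentially shatters must be mapped, via $\psi$, to a scale at which the adversary can accumulate $\Omega(T)$ loss gap. Handling possible non-strict monotonicity of $\psi$ (for instance, flat sub-intervals of $[0,1]$) requires identifying a scale $\gamma > 0$ at which $\psi(\gamma) > 0$ and showing that $\text{fat}^{\text{seq}}_\gamma(\Gcal) = \infty$ at \emph{some} such scale — a reduction one can carry out by partitioning $[0,1]$ into sub-intervals and invoking a pigeonhole argument on the shattering tree, analogous to the scale-discretization trick used in the batch proof of Lemma \ref{lem:batch_NFLT}.
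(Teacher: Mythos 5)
Your top-level architecture is exactly the paper's: Theorem \ref{thm:onlinereg_ell2d1} is obtained from Theorem \ref{thm:onlinereg_decomp} plus a scalar-valued equivalence between $\psi \circ d_1$ and $d_1$ online learnability (the paper's Lemma \ref{lem:onlinereg_scalar}), and your sufficiency argument for that equivalence (pointwise $\psi \circ d_1 \leq L\, d_1$, or sequential contraction) matches the paper's. The gap is in the necessity direction of the auxiliary lemma, which you only sketch, and the sketch as described would fail. You propose to use an infinite $\gamma$-sequentially-shattered tree to build an adversary forcing $\Omega(T)$ regret under $\psi \circ d_1$. The problem is the comparator: $\gamma$-shattering only guarantees that the shattering functions lie at least $\gamma$ on the correct side of the witness $\Rcal_t(\sigma_{<t})$, not at any prescribed value, so whatever labels $y_t$ the adversary announces (e.g.\ $\Rcal_t(\sigma_{<t}) + \sigma_t \gamma$), the best function in $\Gcal$ may itself incur per-round $\psi \circ d_1$ loss as large as $\psi(1)$. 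A large expected learner loss therefore does not translate into large \emph{regret}, and there is no obvious calibration that repairs this, precisely because fat-shattering gives only one-sided separation.

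The paper sidesteps this by arguing in the opposite direction: assuming an agnostic online learner $\Acal$ for $\Gcal$ w.r.t.\ $\psi \circ d_1$, it constructs a \emph{realizable} online learner for the binary class $\Hcal = \{-1,1\}^{X}$ of all sign patterns on the shattered tree by thresholding $\Acal$'s predictions at the witness tree $\Rcal$. A mistake of the thresholded predictor forces $\Acal$'s prediction and the path function $g^{\star}_{\sigma}$ onto opposite sides of the witness, hence at distance at least $\gamma$, hence at $\psi \circ d_1$ loss at least $\psi(\gamma/2)$ against the discretized target; Markov's inequality plus $\Acal$'s regret guarantee (routed through the same expert/sub-sampling machinery as Theorem \ref{thm:onlreal2agn}, since $g^{\star}_{\sigma}$ depends on the entire path) then yields sublinearly many expected mistakes, contradicting the $T/2$ lower bound for a uniformly random root-to-leaf path of a shattered Littlestone tree. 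Realizability of the binary problem is what makes the comparator's loss zero there, which is exactly what your direct construction lacks. Finally, your pigeonhole fix for flat $\psi$ cannot work as stated: if $\text{fat}^{\text{seq}}_{\gamma}(\Gcal)$ is infinite only at scales $\gamma$ with $\psi(\gamma/2) = 0$, there is no admissible scale to select, and the equivalence genuinely requires $\psi$ to be strictly positive away from $0$ (a hypothesis the paper uses implicitly when it divides by $\psi(\gamma/2)$).
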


The main tool needed to prove Theorem \ref{thm:onlinereg_ell2d1} is Lemma \ref{lem:onlinereg_scalar}, which relates the online learnability of a scalar-output function class $\mathcal{H} \subset [0, 1]^{\mathcal{X}}$ with respect to $\psi \circ d_1$ to its online learnability with respect to $d_1$, where $\psi$ is any monotonic, Lipschitz function such that $\psi(0) = 0$. The proof of Lemma \ref{lem:onlinereg_scalar} can be found in Appendix \ref{app:onlinereg_scalar}.

\begin{lemma} \label{lem:onlinereg_scalar}
    Let $\psi: \mathbb{R}_{\geq 0} \rightarrow \mathbb{R}_{\geq 0}$ be any monotonic and Lipschitz function such that $\psi(0) = 0$. A scalar-valued function class $\Gcal \subset [0,1]^{\Xcal}$ is online learnable with respect to $\psi \circ d_1$, if and only if $\mathcal{G}$ is online learnable with respect to $d_1$.
\end{lemma}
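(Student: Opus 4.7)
The plan is to prove the two directions separately. The sufficiency direction (online $d_1$-learnability implies online $\psi \circ d_1$-learnability) is an easy consequence of a sequential Rademacher contraction argument; the necessity direction is the main technical content and parallels the batch analog of Lemma~\ref{lem:batch_NFLT}.

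For sufficiency, online learnability of $\mathcal{G}$ w.r.t.\ $d_1$ is equivalent to $\text{fat}^{\text{seq}}_\gamma(\mathcal{G}) < \infty$ at every scale $\gamma > 0$, which implies $\mathfrak{R}^{\text{seq}}_T(\mathcal{G}) = o(T)$. Since $\psi$ is $L$-Lipschitz with $\psi(0) = 0$ and the absolute value is $1$-Lipschitz, the sequential analog of Talagrand's contraction lemma \citep{rakhlin2015online} yields $\mathfrak{R}^{\text{seq}}_T(\psi \circ d_1 \circ \mathcal{G}) \leq 2L \cdot \mathfrak{R}^{\text{seq}}_T(\mathcal{G}) = o(T)$, so the standard sequential-Rademacher-based online algorithm achieves sublinear regret w.r.t.\ $\psi \circ d_1$.

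For necessity, I argue by contrapositive. If $\mathcal{G}$ is not online learnable w.r.t.\ $d_1$, then $\text{fat}^{\text{seq}}_\gamma(\mathcal{G}) = \infty$ for some $\gamma > 0$ with $\psi(\gamma) > 0$. For every $T$, there is a $\gamma$-fat-shattering tree of depth $T$ whose witnesses $g^\sigma \in \mathcal{G}$ (one per complete branch $\sigma \in \{-1,+1\}^T$) satisfy $\text{sign}(g^\sigma(x_t^{\sigma_{<t}}) - s_t^{\sigma_{<t}}) = \sigma_t$ and $|g^\sigma(x_t^{\sigma_{<t}}) - s_t^{\sigma_{<t}}| \geq \gamma$. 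I consider an oblivious randomized adversary that draws $\sigma$ uniformly and plays $x_t = x_t^{\sigma_{<t}}$, $y_t = g^\sigma(x_t)$, so $g^\sigma$ attains zero cumulative loss. For any deterministic learner, conditioning on the prefix $\sigma_{<t}$ (which the learner can recover from the signs of the past differences $y_i - s_i^{\sigma_{<i}}$), the coordinate $\sigma_t$ remains uniform in $\{-1,+1\}$, so with probability $1/2$ the target $y_t$ lies on the side of $s_t^{\sigma_{<t}}$ opposite to the learner's prediction $\hat{y}_t$, which gives $|\hat{y}_t - y_t| \geq \gamma$. By monotonicity of $\psi$, the expected per-round loss is at least $\psi(\gamma)/2$, producing $\Omega(T)$ expected regret and contradicting online learnability w.r.t.\ $\psi \circ d_1$.

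The main obstacle is choosing an adversarial target $y_t$ that simultaneously makes some witness in $\mathcal{G}$ incur zero loss and forces the learner to incur loss at scale $\gamma$. The choice $y_t = g^\sigma(x_t^{\sigma_{<t}})$ cleanly resolves the witness side but entangles the target with the unknown suffix $\sigma_{\geq t}$; the key step is to argue that conditioning only on $\sigma_{<t}$ leaves enough randomness in $\sigma_t$ alone to dictate on which side of $s_t^{\sigma_{<t}}$ the target lies. A minor point is that the argument requires $\psi(\gamma) > 0$ at the non-learnability scale $\gamma$, which is automatic under the natural non-degeneracy $\psi(z) > 0$ for all $z > 0$.
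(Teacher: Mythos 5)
Your sufficiency direction matches the paper's (sequential contraction applied to the Lipschitz composition $\psi \circ d_1$). The necessity direction, however, has a genuine gap at exactly the point you flag as ``the key step.'' When the adversary sets $y_t = g^{\sigma}(x_t^{\sigma_{<t}})$, the learner's information at round $t$ is the sigma-algebra generated by $(y_1,\dots,y_{t-1})$, not merely the prefix $\sigma_{<t}$. Each past label $y_i = g^{\sigma}(x_i^{\sigma_{<i}})$ is a real number that depends on the \emph{entire} string $\sigma$, since the shattering witness $g^{\sigma}$ is indexed by the full branch. The fat-shattering definition only constrains the sign and magnitude of $g^{\sigma}(x_i^{\sigma_{<i}}) - s_i^{\sigma_{<i}}$; it places no restriction on the exact values. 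In particular, nothing prevents the witnesses from encoding all of $\sigma$ in, say, the binary expansion of $g^{\sigma}(x_1)$. A learner that decodes $\sigma$ from $y_1$ then knows $g^{\sigma}$ exactly and predicts $y_t$ perfectly for all $t \geq 2$, so its cumulative loss is $O(1)$ rather than $\Omega(T)$. Hence the assertion that conditioning on $\sigma_{<t}$ leaves $\sigma_t$ uniform does not bound the learner's conditional expected loss, because the learner conditions on strictly more than $\sigma_{<t}$. The alternative label $y_t = s_t^{\sigma_{<t}} + \sigma_t\gamma$ removes the leakage but destroys the zero-loss comparator, so the tension you yourself identify is real and is not resolved by your argument.

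The paper avoids this entirely by reducing in the opposite direction: assuming $\text{fat}^{\text{seq}}_{\gamma}(\mathcal{G}) = \infty$, it forms the binary class $\mathcal{H} = \{-1,1\}^{X}$ of all sign patterns on the tree's instances, so that $\text{Ldim}(\mathcal{H}) \geq T$ and the clean $T/2$ lower bound for a uniformly random branch applies --- there the revealed labels \emph{are} the bits $\sigma_t$, so no information about the suffix leaks. It then uses the hypothesized online learner $\mathcal{A}$ for $\mathcal{G}$ w.r.t.\ $\psi \circ d_1$ to build a sublinear-regret learner for $\mathcal{H}$: since the real-valued labels $g^{\star,\alpha}_{\sigma}(\cdot)$ needed to update $\mathcal{A}$ are not observable from the $\pm 1$ feedback, it enumerates experts over all discretized labelings of a Bernoulli-subsampled set of rounds (as in Theorem \ref{thm:onlreal2agn}), aggregates with REWA, thresholds $\mathcal{A}$'s prediction at the witness, and uses monotonicity of $\psi$ together with Markov's inequality to convert the $\psi \circ d_1$ regret into a $0$-$1$ mistake bound below $T/2$, giving the contradiction. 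If you wish to retain a direct lower-bound argument, you would need comparable machinery (or a re-randomization of the labels) to control what the learner can infer about the unrevealed coordinates of $\sigma$.
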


 Note that for every $1 \leq p < \infty$ and $x \geq 0$, $\psi(x) = x^p$ is a monotonic increasing, Lipschitz function. Therefore, Lemma \ref{lem:onlinereg_scalar} shows that online learnability with respect to $d_p$ is equivalent to online learnability with respect to $d_1$.  Combining Assumption \ref{assumption2}, Theorem \ref{thm:onlinereg_decomp}, and Lemma \ref{lem:onlinereg_scalar} immediately gives Theorem \ref{thm:onlinereg_ell2d1}. Since the Sequential Fat Shattering dimension characterizes online learnability with respect to the absolute loss, Theorem \ref{thm:onlinereg_ell2d1} further implies that for any decomposable loss satisfying Assumptions \ref{assumption1} and \ref{assumption2}, the finiteness of $\text{fat}_{\gamma}^{\text{seq}}(\mathcal{F}_k)$ for all $k \in [K]$ and $\gamma > 0$ is a sufficient and necessary condition for online multioutput learnability.

\subsubsection{Characterizing Learnability of Non-Decomposable Losses}

In this section, we characterize the online learnability of multioutput function classes $\mathcal{F}$ for a natural family of non-decomposable losses, $\ell_p$ norms for $1 \leq p \leq \infty$. We prove an analogous theorem to Theorem \ref{thm:batch_reg_nondecom}, by relating the online learnability of $\mathcal{F}$ with respect to $\ell_p$ to the online learnability of each $\mathcal{F}_k$ with respect to $d_1$. We note that the proof of only uses the fact that $\ell_p$ norms are equivalent (up to a $K$ dependent constant) to the $\ell_1$ norm. Since any two norms in a finite dimensional space are equivalent, Theorem \ref{thm:onlinereg_ellp} actually holds true for \textit{any} norm in $\mathbb{R}^K$. But we only consider $\ell_p$ norms here due to their practical importance.

\begin{theorem} \label{thm:onlinereg_ellp}
    Let $1 \leq p \leq \infty$. A function class $\Fcal \subseteq \Ycal^{\Xcal}$ is online  learnable with respect to  $\ell_p$ if and only if each $\Fcal_k \subseteq \Ycal_k^{\Xcal}$ is online  learnable with respect to $d_1$.
\end{theorem}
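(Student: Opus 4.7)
The plan is to imitate the proof of Theorem \ref{thm:batch_reg_nondecom} in the online setting, using the online realizable-to-agnostic reduction of Theorem \ref{thm:onlreal2agn} in place of Lemma \ref{realizable_agnostic_equiv}. As in the batch case, the key algebraic fact is that all $\ell_p$-norms on $\reals^K$ are equivalent: $\ell_p \leq \ell_1 \leq K^{1-1/p}\ell_p$ pointwise, and $\ell_p \leq K^{1/p}$ on $\Ycal \times \Ycal$. Combining this with Theorem \ref{thm:onlinereg_decomp} applied with $\psi_k(z)=z$, which gives the equivalence of online learnability of $\Fcal$ w.r.t.\ $\ell_1$ and online learnability of each $\Fcal_k$ w.r.t.\ $d_1$, it suffices to prove that online learnability of $\Fcal$ w.r.t.\ $\ell_p$ is equivalent to online learnability of $\Fcal$ w.r.t.\ $\ell_1$.

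For sufficiency, I would start from an online learner $\mathcal{A}$ for $\Fcal$ w.r.t.\ $\ell_1$ with sublinear regret $R(T)$. Fix a discretization scale $\alpha = \alpha(T)$, let $\Fcal^\alpha$ be the coordinate-wise discretization in \eqref{eq:discretization}, and consider any stream realizable by some $f^{\alpha, \star} \in \Fcal^\alpha$ with corresponding $f^\star \in \Fcal$. Since $\ell_1(f^\star(x_t), y_t) \leq K\alpha$ pointwise, the agnostic $\ell_1$-regret bound of $\mathcal{A}$ combined with $\ell_p \leq \ell_1$ yields $\expect[\sum_t \ell_p(\mathcal{A}(x_t), y_t)] \leq TK\alpha + R(T)$; that is, $\mathcal{A}$ is a realizable online learner for $\Fcal^\alpha$ w.r.t.\ $\ell_p$ with sublinear regret. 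Since $|\text{im}(\Fcal^\alpha)| \leq (2/\alpha)^K$ is finite and $\ell_p$ is $1$-subadditive (triangle inequality), Theorem \ref{thm:onlreal2agn} converts $\mathcal{A}$ into an agnostic online learner $\mathcal{Q}$ for $\Fcal^\alpha$ w.r.t.\ $\ell_p$. Finally, using $\ell_p(f^\alpha(x), y) \leq \ell_p(f(x), y) + K^{1/p}\alpha$ promotes $\mathcal{Q}$'s bound on $\Fcal^\alpha$ into an agnostic bound on $\Fcal$, and choosing $\alpha$ polynomially small in $T$ keeps all regret terms sublinear.

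For necessity, I would start from an online learner $\mathcal{A}$ for $\Fcal$ w.r.t.\ $\ell_p$ with sublinear regret $R(T)$ and run the same discretization argument in reverse. For streams realizable by $\Fcal^\alpha$, the fact that $\ell_p(f^\star, f^{\alpha, \star}) \leq K^{1/p}\alpha$ gives $\expect[\sum_t \ell_p(\mathcal{A}(x_t), y_t)] \leq TK^{1/p}\alpha + R(T)$, and $\ell_1 \leq K^{1-1/p}\ell_p$ promotes this to a realizable online $\ell_1$-bound for $\Fcal^\alpha$ with regret $TK\alpha + K^{1-1/p}R(T)$. Applying Theorem \ref{thm:onlreal2agn} to the finite-image class $\Fcal^\alpha$ with $\ell_1$ yields an agnostic online learner for $\Fcal^\alpha$ w.r.t.\ $\ell_1$, and absorbing the discretization error via $\ell_1(f, f^\alpha) \leq K\alpha$ produces an agnostic online learner for $\Fcal$ w.r.t.\ $\ell_1$. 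Theorem \ref{thm:onlinereg_decomp} with $\psi_k(z)=z$ then yields online learnability of each $\Fcal_k$ w.r.t.\ $d_1$.

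The main technical obstacle is bookkeeping the interplay between the discretization scale $\alpha$ and the horizon $T$, since the regret produced by Theorem \ref{thm:onlreal2agn} depends logarithmically on $|\text{im}(\Fcal^\alpha)| \leq (2/\alpha)^K$, i.e.\ picks up a $\sqrt{T^{1+\beta} K \log(2/\alpha)}$ term, while the discretization error contributes $O(TK\alpha)$. A polynomial schedule such as $\alpha = T^{-\gamma}$ for a suitable $\gamma \in (0,1)$ (combined with a doubling trick if the horizon is unknown) balances these contributions and produces overall sublinear regret, so no new ideas are needed beyond those already used in the batch proof of Theorem \ref{thm:batch_reg_nondecom} together with Theorems \ref{thm:onlreal2agn} and \ref{thm:onlinereg_decomp}.
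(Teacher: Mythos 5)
Your proposal is correct and follows essentially the same route as the paper's proof: reduce to the equivalence of $\ell_p$ and $\ell_1$ online learnability via Theorem \ref{thm:onlinereg_decomp} with $\psi_k(z)=z$, show the $\ell_1$ (resp.\ $\ell_p$) learner is a realizable learner for the discretized class $\Fcal^{\alpha}$ under $\ell_p$ (resp.\ $\ell_1$) using norm equivalence, apply Theorem \ref{thm:onlreal2agn}, and absorb the discretization error with $\alpha$ polynomially small in $T$ (the paper takes $\alpha = \Theta(1/(KT))$). Your constants $K^{1-1/p}$ and $K^{1/p}$ are slightly sharper than the paper's crude $K$, but this changes nothing.
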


 By Theorem \ref{thm:onlinereg_decomp}, $\mathcal{F}$ is online learnable with respect to $\ell_1$ if and only if each restriction $\mathcal{F}_k$ is online learnable with respect to $d_1$. Thus, to prove Theorem \ref{thm:onlinereg_ellp} it suffices to show that $\mathcal{F}$ is online learnable with respect to $\ell_p$ if and only if $\mathcal{F}$ is online learnable with respect to $\ell_1$ for $ p > 1$.  At a high-level, the proof  Theorem \ref{thm:onlinereg_ellp} follows a similar route as the proof of Theorem \ref{thm:batch_reg_nondecom}: convert an agnostic learner for $\Fcal$
into a realizable learner for $\Fcal^{\alpha}$ and then use realizable-to-agnostic conversion for $\Fcal^{\alpha}$. 
\begin{proof} Fix $p > 1$. By the argument above, it suffices to show that $\mathcal{F}$ is online learnable with respect to $\ell_p$ if and only if $\mathcal{F}$ is online learnable with respect to $\ell_1$. We begin by proving sufficiency - if $\mathcal{F}$ is online learnable with respect to $\ell_1$ then $\mathcal{F}$ is online learnable with respect to $\ell_p$.

Let $\mathcal{A}$ be an online learner for $\mathcal{F}$ with respect to $\ell_1$. Our goal is to construct an online learner $\mathcal{Q}$ for $\mathcal{F}$ with respect to $\ell_p$. We assume an oblivious adversary, and therefore the stream of points to be observed by the online learner $\mathcal{Q}$, denoted $(x_1, y_1), ..., (x_T, y_T) \in (\mathcal{X} \times [0, 1]^K)^T$, is fixed beforehand. Let $f^{\star} = \argmin_{f \in \mathcal{F}} \sum_{t=1}^T \ell_p(f(x_t), y_t)$ also denote the optimal function in hindsight with respect to the $\ell_p$ loss. 

Our strategy follows three steps. First, we show that $\mathcal{A}$ is a realizable online learner for $\mathcal{F}^{\alpha}$ with respect to $\ell_p$. Then, since $|\text{im}(\mathcal{F}^{\alpha})| \leq (\frac{2}{\alpha})^K < \infty$ is finite and $\ell_p$ is a $1$- subadditive (by triangle inequality), Theorem \ref{thm:onlreal2agn} allows to convert the realizable online learner $\mathcal{A}$ for $\mathcal{F}^{\alpha}$ with respect to $\ell_p$ into an agnostic online learner $\mathcal{Q}$ for $\mathcal{F}^{\alpha}$ with respect to $\ell_p$. Finally, for an appropriately selected discretization parameter $\alpha$, we show that $\mathcal{Q}$ is also an agnostic online for  $\mathcal{F}$ with respect to $\ell_p$, which completes the proof.  To that end, let $(x_1, f^{\star, \alpha}(x_1)), ..., (x_{T}, f^{\star, \alpha}(x_{T})$ denote a (realizable) sequence of instances labeled by some function $f^{\star, \alpha} \in \mathcal{F}^{\alpha}$. Since $||\cdot||_q \leq  || \cdot ||_p$ for $q > p$, we have that 

$$\mathbb{E}\left[\sum_{t=1}^{T} \ell_p(\mathcal{A}(x_t),f^{\star, \alpha}(x_t)) \right] \leq \mathbb{E}\left[\sum_{t=1}^{T} \ell_1(\mathcal{A}(x_t), f^{\star, \alpha}(x_t)) \right]$$

Since $\mathcal{A}$ is an online learner for $\mathcal{F}$ with respect to $\ell_1$, we get, 

$$\mathbb{E}\left[\sum_{t=1}^{T} \ell_1(\mathcal{A}(x_t), f^{\star, \alpha}(x_t)) \right] \leq \inf_{f\in \mathcal{F}}\sum_{t=1}^{T} \ell_1(f(x_t), f^{\star, \alpha}(x_t)) + R_{\mathcal{A}}(T) \leq \alpha KT + R_{\mathcal{A}}(T)$$
where $R_{\mathcal{A}}(T)$ is the regret of online learner $\mathcal{A}$.  Combining things together, we have that 

$$\mathbb{E}\left[\sum_{t=1}^{T} \ell_p(\mathcal{A}(x_t),f^{\star, \alpha}(x_t)) \right] \leq \alpha KT + R_{\mathcal{A}}(T),$$

showing that $\mathcal{A}$ is a realizable online learner for $\mathcal{F}^{\alpha}$ with respect to $\ell_p$ for a small enough $\alpha$. Now, since $|\text{im}(\mathcal{F}^{\alpha})| \leq (\frac{2}{\alpha})^K < \infty$ is a finite, $\ell_p$ is a $1$- subadditive loss function,  and $\mathcal{A}$ is a realizable online learner, for any $\beta \in (0, 1)$, Theorem \ref{thm:onlreal2agn} gives an agnostic online learner $\mathcal{Q}$ for $\mathcal{F}^{\alpha}$ with respect to $\ell_p$ with the following regret guarantee over the original stream $(x_1, y_1), ..., (x_T, y_T)$: 

$$\mathbb{E}\left[\sum_{t=1}^T \ell_p(\mathcal{Q}(x_t), y_t) \right]  \leq \inf_{f^{\alpha} \in \mathcal{F}^{\alpha}}\sum_{t= 1}^T \ell_p(f^{\alpha}(x_t), y_t) + \frac{T}{T^{\beta}}(\alpha K T^{\beta} + \overline{R}_{\mathcal{A}}(T^{\beta})) + K\sqrt{2T^{\beta + 1}K\ln(\frac{2}{\alpha})},$$
where  $\alpha K T^{\beta}+\overline{R}_{\mathcal{A}}(T^{\beta})$ is any concave sublinear upperbound of $ \alpha K T^{\beta}+R_{\mathcal{A}}(T^{\beta})$. We also use the fact that the function $T \mapsto \alpha K T^{\beta}$  is a concave sublinear function of $T$ and the sum of two concave functions is itself a concave function.
Noting that $\ell_p(f^{\alpha}(x_t), y_t) \leq \ell_p(f(x_t), y_t) + \ell_p(f^{\alpha}(x_t), f(x_t)) \leq \ell_p(f(x_t), y_t) + \alpha K$, gives 

$$\mathbb{E}\left[\sum_{t=1}^T \ell_p(\mathcal{Q}(x_t), y_t) \right]  \leq \inf_{f \in \mathcal{F}}\sum_{t= 1}^T \ell_p(f(x_t), y_t) + \alpha K T + \frac{T}{T^{\beta}}(\alpha K T^{\beta} + \overline{R}_{\mathcal{A}}(T^{\beta})) + K\sqrt{2T^{\beta + 1}K\ln(\frac{2}{\alpha})}.$$

Combining like terms together, we have 

$$\mathbb{E}\left[\sum_{t=1}^T \ell_p(\mathcal{Q}(x_t), y_t) \right]  \leq \inf_{f \in \mathcal{F}}\sum_{t= 1}^T \ell_p(f(x_t), y_t) + 2\alpha K T + \frac{T}{T^{\beta}} \overline{R}_{\mathcal{A}}(T^{\beta}) + K\sqrt{2T^{\beta + 1}K\ln(\frac{2}{\alpha})}.$$

Finally, picking $\alpha = \frac{1}{2KT}$ gives that 

$$\mathbb{E}\left[\sum_{t=1}^T \ell_p(\mathcal{Q}(x_t), y_t) \right] - \inf_{f \in \mathcal{F}}\sum_{t= 1}^T \ell_p(f(x_t), y_t)  \leq  1 + \frac{T}{T^{\beta}} \overline{R}_{\mathcal{A}}(T^{\beta}) + K\sqrt{2T^{\beta + 1}K\ln(4KT)}.$$

Since $\overline{R}_{\mathcal{A}}(T^{\beta})$ is sublinear in $T^{\beta}$ and $\beta \in (0, 1)$, $\mathcal{Q}$ enjoys sublinear expected regret. Thus, we have shown that $\mathcal{Q}$ is also an agnostic online learner for $\mathcal{F}$ with respect to $\ell_p$. 

The reverse direction follows identically and uses the fact that for any $p > 1$, $||\cdot||_p \leq ||\cdot||_1 \leq K||\cdot||_p$. In particular, using the exact same argument, we can show that if $\mathcal{A}$ is an online learner for $\mathcal{F}$ with respect to $\ell_p$, then $\mathcal{A}$ is also a realizable online learner for $\mathcal{F}^{\alpha}$ with respect to $\ell_1$ with expected regret bound: 

$$\mathbb{E}\left[\sum_{t=1}^{T} \ell_1(\mathcal{A}(x_t),f^{\star, \alpha}(x_t)) \right] \leq K\mathbb{E}\left[\sum_{t=1}^{T} \ell_p(\mathcal{A}(x_t),f^{\star, \alpha}(x_t)) \right] \leq \alpha K^2 T + KR_{\mathcal{A}}(T).$$

Using $\mathcal{A}$ as the realizable learner in Theorem \ref{thm:onlreal2agn}, for any $\beta \in (0, 1)$,  picking $\alpha = \frac{1}{(K + K^2)T}$ gives a regret bound:

$$\mathbb{E}\left[\sum_{t=1}^T \ell_1(\mathcal{Q}(x_t), y_t) \right] - \inf_{f \in \mathcal{F}}\sum_{t= 1}^T \ell_1(f(x_t), y_t)  \leq  1 + \frac{KT}{T^{\beta}} \overline{R}_{\mathcal{A}}(T^{\beta}) + K\sqrt{2T^{\beta + 1}K\ln(4K^2T)},$$
where $\overline{R}_{\mathcal{A}}(T^{\beta})$ is any concave sublinear upperbound of $ R_{\mathcal{A}}(T^{\beta})$. Since $\beta \in (0,1)$, $\mathcal{Q}$ is an online learner for $\mathcal{F}$ with respect to $\ell_1$ as needed. This completes the proof of Theorem \ref{thm:onlinereg_ellp}.
\end{proof}

\noindent \textbf{Remark.} As with decomposable losses, Theorem \ref{thm:onlinereg_ellp} also implies that for any $\ell_p$ norm loss, the finiteness of $\text{fat}_{\gamma}^{\text{seq}}(\mathcal{F}_k)$, for all $k \in [K]$ and fixed $\gamma > 0$, is a sufficient and necessary condition for online multioutput learnability.

\section{Discussion}
In this work, we give a characterization of multioutput learnability in four settings: batch classification, online classification, batch regression, and online regression. In all four settings, we show that a multioutput function class is learnable if and only if each restriction is learnable. All of our bounds in this paper scale with $K$, preventing our current techniques from extending to the case when $K$ is infinite. Accordingly, we pose it as an open question to characterize multioutput learnability when $K$ is infinite (e.g. function-space valued regression). Furthermore, we also leave it open to find combinatorial dimensions that provide tight quantitative characterizations of batch and online multioutput learnability.

\section*{Acknowledgements}

We acknowledge the support of NSF via grants IIS-2007055 (AT) and DMS-2413089 (AT, US). VR acknowledges the support of the NSF Graduate Research Fellowship.

\bibliography{sample}

\newpage 
\appendix

\section{Complexity Measures}

\subsection{Complexity Measures for Batch Learning}
\label{appdx:complexity-batch}
In binary classification, the Vapnik-Chervonenkis (VC) dimension of a function class characterizes its learnability. 
\begin{definition}[Vapnik-Chervonenkis Dimension]\label{vc}
A set $S = \{x_1, \ldots, x_d\}$ is shattered by a binary function class $\Hcal \subseteq \{-1,1\}^d$ if for every $\sigma \in \{-1,1\}^d$, there exists a hypothesis $h_{\sigma} \in \Hcal$ such that for all $i \in [d]$, we have $h_{\sigma}(x_i) = \sigma_i$. The VC dimension of $\Hcal$, denoted $\text{VC}(\Hcal)$, is the size of the largest shattered set $S \subseteq \Xcal$. If the size of the shattered set can be arbitrarily large, we say that $\text{VC}(\Hcal) = \infty$. 
\end{definition}
The learnability of a multiclass function class is characterized by its Natarajan dimension. 
 \begin{definition}[Natarajan Dimension]\label{ndim}
A set $S = \{x_1, \ldots, x_d\}$ is shattered by a multiclass function class $\Hcal \subset \Ycal^{\Xcal}$ if there exist two witness functions $f, g: S \to \Ycal$ such that  $f(x_i) \neq  g(x_i) $ for all $i \in [d]$, and for every $\sigma \in \{-1,1\}^d$, there exists a function $h_{\sigma} \in \Hcal$ such that for all $i \in [d]$, we have
\[h_{\sigma}(x_i) = \begin{cases} f(x_i) \, \quad \text{ if } \sigma_i =1 \\ g(x_i) \, \quad \text{ if } \sigma_i =-1  \end{cases}.\]

The Natarajan dimension of $\Hcal$, denoted $\text{Ndim}(\Hcal)$,  is the size of the largest shattered set $S \subseteq\Xcal$. If the size of the shattered set can be arbitrarily large, we say that $\text{Ndim}(\Hcal) = \infty$.
    
\end{definition}
For real-valued regression problems, the learnability is characterized in terms of the fat-shattering dimension of a function class.
\begin{definition}[Fat-Shattering Dimension]
A real-valued function class $\Gcal \subseteq [0,1]^{\Xcal}$ shatters points $S=\{x_1, x_2, \ldots, x_d\}$ at scale $1>\gamma>0$, if there exists witness functions $r:S \to [0,1]$ such that, for every $\sigma \in \{\pm 1\}^{d}$, there exists $g_{\sigma} \in \Gcal$ such that $\forall i \in [d]$, $\sigma_i (g_{\sigma}(x_i) - r(x_i)) \geq \gamma$. The fat-shattering dimension of $\Gcal$ at scale $\gamma$, denoted $\text{fat}_{\gamma}(\Gcal) $, is the size of the largest set that can be $\gamma$-shattered by $\Gcal$. If the size of the shattered set can be arbitrarily large, then we say that $\text{fat}_{\gamma}(\Gcal) = \infty$. 
\end{definition}

We also define a general notion of complexity called Rademacher complexity that provides a sufficient and necessary condition of uniform convergence. Since uniform convergence implies  learnability, we use Rademacher complexity to argue sufficient conditions for learnability. 
\begin{definition}[Empirical Rademacher Complexity]
Let $\mathcal{D}$ be a distribution over $\mathcal{X} \times \mathcal{Y}$. For a bounded loss function $\ell$, define the loss class to be $\ell \circ \Fcal = \{(x, y) \mapsto \ell(f(x), y)\}$.   If $S = \{(x_1, y_1), ..., (x_n, y_n)\}$ be a set of i.i.d samples drawn from $\mathcal{D}$, then the empirical Rademacher complexity of $\mathcal{\ell \circ \Fcal}$ is defined as 
$$\mathfrak{R}_n(\mathcal{\ell \circ \Fcal}) = \mathbb{E}_{\sigma}\left[\sup_{f \in \mathcal{F}} \left(\frac{1}{n}\sum_{i=1}^n \sigma_i \ell(f(x_i), y_i) \right)\right],$$
where $\sigma \in \{\pm 1\}^n$ is a sequence of $n$ i.i.d. Rademacher random variables. 
\end{definition}

\subsection{Complexity Measures for Online Learning}
\label{app:complexonline}
For the complexity measures below, it is useful to define a $\mathcal{Z}$-valued binary tree \citep{rakhlin2015sequential}. A binary tree $\mathcal{T}$ of depth $d$ is $\mathcal{Z}$-valued if each of its internal nodes are labelled by elements of $\mathcal{Z}$. Such a tree can be identified by a sequence $(\mathcal{T}_1, ..., \mathcal{T}_d)$ labelling functions $\mathcal{T}_i: \{\pm1\}^{i-1} \rightarrow \mathcal{Z}$ which provide labels for each internal node. A path of length $d$ is given by a sequence $\sigma = (\sigma_1, ..., \sigma_d) \in \{\pm1\}^d$. Then, $\mathcal{T}_i(\sigma_1, ..., \sigma_{i-1})$ gives the label of node following the path $(\sigma_1, ..., \sigma_{i-1})$ starting from the root, going ``right" if $\sigma_j = +1$ and ``left" if $\sigma_j = -1$. Note that,  $\mathcal{T}_1 \in \mathcal{Z}$ is the label for the root node. For brevity, we slightly abuse notation by letting $\mathcal{T}_i(\sigma_1, ..., \sigma_{i-1}) = \mathcal{T}_i(\sigma_{<i})$, but it is understood that $\mathcal{T}_i$ only depends on the prefix $(\sigma_1, ..., \sigma_{i-1})$. We are now ready to formally define complexity measures in the online setting. 

When $\mathcal{Y} = \{-1, +1\}$ is binary, the Littlestone Dimension (\cite{Littlestone1987LearningQW}) tightly characterizes the online learnability of a function class $\mathcal{H} \subseteq\mathcal{Y}^{\mathcal{X}}$ with respect to the 0-1 loss.

\begin{definition}[Littlestone Dimension]
   Let $\Tcal$ denote a complete binary tree of depth $d$ whose internal nodes are labeled by elements  $ \Xcal$ and two edges from parent to child nodes are labeled by $-1$ and $+1$. The tree is shattered by a binary hypothesis class $\mathcal{G} \subseteq \{-1, +1\}^{\Xcal}$ if for every $\sigma \in \{-1, +1\}^d$, there exists a hypothesis $g_{\sigma} \in \mathcal{G}$ such that the root to leaf path $(x_1, \ldots, x_d)$ obtained by taking left when $\sigma_t =-1$ and right when $\sigma_t =+1$ satisfies $g_{\sigma}(x_t) = \sigma_t$ for all $ 1\leq t \leq d$.  The Littlestone Dimension of $\mathcal{G}$, denoted $\text{Ldim}(\mathcal{G})$, is the maximal depth of the complete binary tree shattered by $\mathcal{G}$. If $\mathcal{G}$ can shatter a tree of arbitrary depth, we say that $\text{Ldim}(\mathcal{G}) = \infty$.      
\end{definition}
  
For finite label spaces $\mathcal{Y}$, the Multiclass Littlestone Dimension \citep{DanielyERMprinciple} tightly characterizes the online learnability of a function class $\mathcal{H} \subseteq\mathcal{Y}^{\mathcal{X}}$ with respect to the 0-1 loss. 

\begin{definition} [Multiclass Littlestone Dimension] Let $\mathcal{T}$ denote a $\mathcal{X}$-valued binary tree of depth $d$  whose edges are labelled by elements from $\mathcal{Y}$, such that the edges from a single parent to its child-nodes are each labeled with a different label. The tree $\mathcal{T}$ is shattered by a function class $\mathcal{H} \subseteq\mathcal{Y}^{\mathcal{X}}$ if, for every path $\sigma \in \{\pm1\}^d$, there is a function $h_{\sigma} \in \mathcal{H}$ such that $h_{\sigma}(\mathcal{T}_i(\sigma_{<i})) = y(\sigma_i)$, where $y(\sigma_i)$ is the label of the edge between nodes $(\mathcal{T}_i(\sigma_{<i}), \mathcal{T}_{i+1}(\sigma_{<i+1}))$. The Multiclass Littlestone Dimension (MCLdim) of $\mathcal{H}$, denoted $\text{MCLdim}(\mathcal{H})$, is the maximal depth of a complete binary tree that is shattered by $\mathcal{H}$. If $\text{MCLdim} = \infty$, then there exists shattered trees of arbitrarily large depth. 
\end{definition}

When $\mathcal{Y}$ is a bounded subset of $\mathbb{R}$, the sequential fat-shattering dimension \citep{rakhlin2015online} at scale $\gamma$ characterizes the learnability of $\mathcal{H} \subseteq[0, 1]^{\mathcal{X}}$ with respect to to the absolute loss $d_1$. 

\begin{definition} [Sequential Fat-Shattering Dimension] \label{def:seqfatshat}

Let $\mathcal{T}$ denote a $\mathcal{X}$-valued binary tree of depth $d$. The tree $\mathcal{T}$ is $\gamma$-shattered by a function class $\mathcal{H} \subseteq [0, 1]^{\mathcal{X}}$ if there exists an $\mathbb{R}$-valued binary tree $\mathcal{R}$ of depth $d$ such that for all $\sigma \in \{\pm 1\}^d$, there exists $h_{\sigma} \in \mathcal{H}$ such that for all $t \in [d]$, 

$$\sigma_t(h_{\sigma}(\mathcal{T}_t(\sigma_{<t})) - \mathcal{R}_t(\sigma_{<t})) \geq \gamma$$

The tree $\mathcal{R}$ is called the witness to shattering. The sequential fat shattering dimension of $\mathcal{H}$ at scale $\gamma$, denoted $\text{fat}^{\text{seq}}_{\gamma}(\mathcal{H})$, is the maximal depth of a complete binary tree that is $\gamma$-shattered by $\mathcal{H}$. If  there exists $\gamma$-shattered trees of arbitrarily large depth, then $\text{fat}^{\text{seq}}_{\gamma}(\mathcal{H}) = \infty$. 
\end{definition}

Beyond both finite and bounded label spaces, the sequential Rademacher complexity \citep{rakhlin2015online} provides a useful tool for giving sufficient conditions for learnability. 

\begin{definition} [Sequential Rademacher Complexity]
Let $\sigma = \{\sigma_i\}_{i=1}^T$ be a sequence of independent Rademacher random variables. Let $\mathcal{T}$ be a $\mathcal{Z}$-valued binary tree of depth $d$. The sequential Rademacher complexity of a function class $\mathcal{H} \subseteq\mathbb{R}^{\mathcal{Z}}$ on $\mathcal{T}$ is defined as
$$\mathfrak{R}_T^{\text{seq}}(\mathcal{H}; \mathcal{T}) = \mathbb{E}_{\sigma \sim \{\pm1\}^{n}}\left[\sup_{h \in \mathcal{H}} \frac{1}{T}\sum_{t=1}^T \sigma_t h(\mathcal{T}_t(\sigma_{<t})) \right].$$
Then, the worst-case sequential Rademacher complexity is defined as $\mathfrak{R}_T^{\text{seq}}(\mathcal{H}) = \sup_{\mathcal{T}}\mathfrak{R}_T^{\text{seq}}(\mathcal{H}; \mathcal{T})$.
\end{definition}

\section{Natarajan Dimension Characterizes Batch Multilabel Learnability}
\label{app:natdimchar}

\noindent A multilabel classification problem where labels (i.e. bitstrings) in $\Ycal$  are of length $K$ can also be viewed as multiclass classification on the target space with $2^K$ labels. 
Given this observation, the Natarajan dimension of the function class $\Fcal$ continues to characterize the multilabel learnability with respect to any loss function $\ell$ satisfying the identity of indiscernibles.
\begin{theorem}[\cite{David_Bianchi}]\label{thm:natarajan}
    Let $\ell$ be any loss function satisfying the identity of indiscernibles. A function class $\Fcal \subseteq\Ycal^{\Xcal}$ is agnostic learnable with respect to $\ell$ in the batch setting if and only if $Ndim(\Fcal) < \infty$. 
\end{theorem}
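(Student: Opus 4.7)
\textbf{Proof plan for Theorem \ref{thm:natarajan}.} The plan is to reduce to the classical characterization of multiclass learnability by the Natarajan dimension for finite label sets (\cite{David_Bianchi}), by viewing $\Fcal \subseteq (\{-1,1\}^K)^{\Xcal}$ as a multiclass function class over the target alphabet $\Ycal$, which has $|\Ycal| = 2^K < \infty$. The classical result states that $\Fcal$, seen as such a multiclass class, is agnostic PAC learnable with respect to the $0$-$1$ loss $\ell_{0\text{-}1}(y_1, y_2) = \indicator\{y_1 \neq y_2\}$ on $\Ycal$ if and only if $\mathrm{Ndim}(\Fcal) < \infty$. Hence it suffices to show that agnostic PAC learnability of $\Fcal$ with respect to the arbitrary loss $\ell$ satisfying the identity of indiscernibles is equivalent to agnostic PAC learnability of $\Fcal$ with respect to $\ell_{0\text{-}1}$.

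The key observation to connect the two losses is that since $|\Ycal| = 2^K$ is finite and $\ell$ satisfies the identity of indiscernibles, $\ell$ takes only finitely many values, and there exist universal positive constants $a = \min_{y_1 \neq y_2}\ell(y_1, y_2) > 0$ and $b = \max_{y_1, y_2}\ell(y_1, y_2)$ such that for all $y_1, y_2 \in \Ycal$, $a\,\ell_{0\text{-}1}(y_1, y_2) \leq \ell(y_1, y_2) \leq b\,\ell_{0\text{-}1}(y_1, y_2)$. Also, as noted after Definition 2, finiteness of $|\Ycal|$ and the identity of indiscernibles imply that $\ell$ is $c$-subadditive with $c = b/a$, and the same clearly holds for $\ell_{0\text{-}1}$.

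With this setup, I would prove both directions by first passing to the realizable setting and then using Lemma \ref{realizable_agnostic_equiv} to go back to agnostic. For necessity, if $\Fcal$ is agnostic learnable w.r.t.\ $\ell$ then it is realizable learnable w.r.t.\ $\ell$; because realizable distributions coincide for $\ell$ and $\ell_{0\text{-}1}$ (both losses vanish exactly when $f(x) = y$) and because $\ell_{0\text{-}1} \leq \ell/a$ pointwise, the same algorithm is a realizable learner w.r.t.\ $\ell_{0\text{-}1}$; applying Lemma \ref{realizable_agnostic_equiv} (valid since $|\mathrm{im}(\Fcal)| \leq 2^K < \infty$ and $\ell_{0\text{-}1}$ is $c$-subadditive) yields an agnostic learner w.r.t.\ $\ell_{0\text{-}1}$, so $\mathrm{Ndim}(\Fcal) < \infty$ by the classical multiclass result. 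For sufficiency, if $\mathrm{Ndim}(\Fcal) < \infty$ then $\Fcal$ is agnostic, hence realizable, learnable w.r.t.\ $\ell_{0\text{-}1}$; the bound $\ell \leq b\,\ell_{0\text{-}1}$ converts this realizable learner into a realizable learner w.r.t.\ $\ell$; and one more application of Lemma \ref{realizable_agnostic_equiv} produces an agnostic learner w.r.t.\ $\ell$.

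The argument involves no real obstacle beyond carefully invoking the existing pieces: the only delicate point is verifying the hypotheses of Lemma \ref{realizable_agnostic_equiv} (finite image, $c$-subadditivity), both of which are immediate since $|\Ycal| = 2^K$ is finite and $\ell$ satisfies the identity of indiscernibles. The structure is essentially identical to the proof of Lemma \ref{lem:hamming_general_batch}, just replacing $\ell_H$ by $\ell_{0\text{-}1}$ on the full label vector and replacing the per-coordinate characterization by the Natarajan-dimension characterization of multiclass learnability for finite label sets.
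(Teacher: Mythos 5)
Your proposal is correct and follows essentially the same route as the paper's proof in Appendix B: view $\Fcal$ as a multiclass problem over $2^K$ labels, use the finiteness of $\Ycal$ and the identity of indiscernibles to sandwich $\ell$ between constant multiples of $\ell_{0\text{-}1}$, pass to the realizable setting, and invoke Lemma \ref{realizable_agnostic_equiv}. The only cosmetic difference is that in the necessity direction the paper stops at realizable $0$-$1$ learnability (which the Natarajan dimension already characterizes) rather than applying Lemma \ref{realizable_agnostic_equiv} a second time, but this does not change the substance of the argument.
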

\noindent The proof in \cite{David_Bianchi} involves arguments based on growth function. Here, we provide proof that uses  realizable and agnostic learnability due to \cite{hopkins22a}. 
\vspace{0.50cm}
\begin{proof}(of sufficiency)
    We first show that the finiteness of Ndim($\Fcal)$ is sufficient for learnability. Suppose Ndim($\Fcal) < \infty$. Then, we know that $\Fcal$ is agnostic learnable with respect to $0$-$1$ loss \citep{David_Bianchi}. Since the target space $\Ycal$ as well as the range space of $\Fcal$ is finite, for every loss $\ell$ satisfying the identity of indiscernibles,  there exists an $a > 0$ such that $a \, \ell(h(x), y) \leq \indicator\{h(x) \neq y\}$ for any $(x,y) \in \Xcal \times \Ycal$ and function $h \in \Ycal^{\Xcal}$.  Let $\Dcal$ be a realizable distribution to $\Fcal$ with respect to $\ell$. Since $\ell(y_1, y_2)=0$ if and only if $\indicator\{y_1 \neq y_2\}=0$, the distribution $\Dcal$ is also realizable with respect to $0$-$1$ loss. Since $\Fcal$ is learnable with respect to $0$-$1$ loss, there exists a learning algorithm $\Acal$ with the following property: for any $\epsilon, \delta > 0$, for a sufficiently large $S \sim \Dcal^n$, the algorithm outputs a predictor $h = \Acal(S)$ such that, with probability $1-\delta$ over $S \sim \Dcal^n$, we have $\expect_{\Dcal}[\indicator\{h(x) \neq y\}] \leq a \, \epsilon. $ Using the inequality stated above pointwise, the predictor $h$ also satisfies $\expect_{\Dcal}[\ell(h(x), y)] \leq \epsilon$. Therefore, $\Acal$ is also a realizable algorithm with respect to $\ell$. Since $\ell$ satisfies the identity of indiscernible, Lemma \ref{realizable_agnostic_equiv} guarantees the existence of agnostic PAC learner $\Bcal$ for $\Fcal$ with respect to $\ell$. 
\end{proof}

\begin{proof} (of necessity)
    Suppose $\Fcal$ is learnable with respect to $\ell$. Since the target space is finite, there must exist a constant $b > 0$ such that $\indicator\{h(x) \neq y\}  \leq b \, \ell(h(x), y)$ for any $(x,y) \in \Xcal \times \Ycal$ and any function $h \in \Ycal^{\Xcal}$. Let $\Dcal$ be a realizable distribution with respect to $\ell$. Due to the 0 alignment property, $\Dcal$ is also realizable with respect to $0$-$1$ loss. Since $\Fcal$ is learnable with respect to $\ell$ loss, there exists a learning algorithm $\Acal$ with the following property: for any $\epsilon, \delta > 0$, for a sufficiently large $S \sim \Dcal^n$, the algorithm outputs a predictor $h = \Acal(S)$ such that, with probability $1-\delta$ over $S \sim \Dcal^n$, we have $\expect_{\Dcal}[\ell(h(x), y)] \leq b \, \epsilon. $ In particular, using the inequality stated above pointwise, we obtain $\expect_{\Dcal}[\indicator\{h(x) \neq y\}] \leq \epsilon$. Therefore, $\Fcal$ is learnable with respect to $0\text{-}1$ loss in the realizable setting. As the finiteness of the Natarajan dimension is necessary for the learnability of $\Fcal$ under the $0$-$1$ loss \citep{Natarajan1989},  we must have $\text{Ndim}(\Fcal) < \infty$.
\end{proof}

\section{Proofs for Batch Multioutput Regression}

\subsection{Proof of Sufficiency in Theorem \ref{thm:batch_reg_decom}}\label{appdx:batch_reg_decom_suff}
\begin{proof}
    We first prove that the agnostic  learnability of each $\Fcal_k$  is sufficient for the agnostic learnability of $\Fcal$. As in the classification setting, the proof here is  based on a reduction. That is, given oracle access to agnostic  learners $\Acal_k$  for each $\Fcal_k$ with respect to $\psi_k \circ d_1$ loss, we construct an agnostic  learner $\Acal$ for $\Fcal$ with respect to loss $\ell$.

 Denote $\Dcal_k$ to be the marginal distribution of $\Dcal$ restricted to $\Xcal \times \Ycal_k$. Let us use $m_{k}(\epsilon, \delta)$  to denote the sample complexity of $\Acal_k$. Then, for all $k \in [K]$, the marginal samples  $S_k = \{(x_i, y_{i}^k)\}_{i=1}^n $  with scalar-valued targets are iid samples form $\Dcal_k$. For each $k \in [K]$, define $g_k = \Acal_k(S_k)$
to be the predictor returned by algorithm $\Acal_k$ when trained on $S_k$. 
 Since $\Acal_k$ is an agnostic  learner for $\Fcal_k$, we have that for sample size $n \geq \max_{k} m_k(\frac{\epsilon}{K}, \frac{\delta}{K}) $, with probability at least $1- \delta/K$ over samples $S_k \sim \Dcal_k^n$, 
\[\expect_{\Dcal_k}[\psi_k \circ d_1(g_k(x), y^k)] \leq \inf_{f_k \in \Fcal_k} \expect_{\Dcal_k}[\psi_k \circ d_1(f_k(x), y^k)] + \frac{\epsilon}{K}.\]
Summing these risk bounds over all $k$ coordinates and union bounding over the success probabilities, we get that with probability at least $1- \delta$ over samples $S \sim \Dcal^n$, 
\[\sum_{k=1}^K \expect_{\Dcal_k}[\psi_k \circ d_1(g_k(x), y^k)] \leq \sum_{k=1}^K\inf_{f_k \in \Fcal_k} \expect_{\Dcal_k}[\psi_k \circ d_1(f_k(x), y^k)] + \epsilon.\]
Using the fact that the sum of infimums over individual coordinates is at most the overall infimum of sums followed by the linearity of expectation, we can write the expression above as
\[\expect_{\Dcal}\left[\sum_{k=1}^K \psi_k \circ d_1(g_k(x), y^k) \right] \leq  \inf_{f \in \Fcal}\expect_{\Dcal}\left[\sum_{k=1}^K \psi_k \circ d_1(f_k(x), y^k)\right] + \epsilon.\]

This shows that the learning rule that runs $\Acal_k$ on marginal samples $S_k$ and concatenates the resulting scalar-valued predictors to get a vector-valued predictor  is an agnostic  learner for $\mathcal{F}$ with respect to loss $\ell$ with sample complexity at most $\max_{k} m_k(\epsilon/K, \delta/K)$. This completes our proof of sufficiency.
\end{proof}

\subsection{Equivalence of $d_1$ and $\psi \circ d_1$ Learnability in Batch Regression }\label{appdx:equiv_d1_psi_d1}
In this section, we provide proof of Lemma \ref{lem:batch_NFLT}, which establishes the equivalence of $d_1$ and $\psi \circ d_1$ learnability in scalar-valued batch regression. 

\begin{proof}(of  Lemma \ref{lem:batch_NFLT})
To prove sufficiency first, let $\Gcal$ be agnostically learnable with respect to $d_1$. This implies that the fat-shattering dimension of $\Gcal$ is finite at every scale \citep{BARTLETT1996} and uniform convergence holds over the loss class $d_1 \circ \Gcal$. Since $\psi$ is a Lipschitz function, a simple application of Talagrand's contraction lemma on Rademacher complexity \citep{Bartlett_Mendelson} implies that uniform convergence holds over the loss class $\psi \circ d_1 \circ \Gcal$ as well. Thus, $\Gcal$ is learnable with respect to $\psi \circ d_1$ via ERM.

Next, we show that if $\Gcal \subseteq [0,1]^{\Xcal}$ is learnable with respect to $\psi \circ d_1$, then $\Gcal$ is learnable with respect to $d_1$. Since the fat-shattering dimension of $\Gcal$ characterizes $d_1$ learnability of $\Gcal$, it suffices to show that $\Gcal$ being learnable with respect to $\psi \circ d_1$ implies  $\text{fat}_{\gamma}(\Gcal) < \infty$ for every $\gamma \in (0,1)$. 

Suppose, for the sake of contradiction, $\Gcal$ is learnable with respect to $\psi \circ d_1$ but there exists a scale $\gamma \in (0,1)$ such that $\text{fat}_{\gamma}(\Gcal) = \infty$. Then, for every $d \in \naturals$, there exists $X = \{x_1, \ldots, x_d\} \subseteq \Xcal$ and a witness function $r: \Xcal \to [0,1]$ such that for every $\sigma \in \{-1,1\}^{d}$, there exists a $g_{\sigma} \in \Gcal$ such that $\sigma_i(g_{\sigma}(x_i) - r(x_i)) \geq \gamma$ for all $i \in [d]$. Define $\Gcal_X = \{g_{\sigma} \in \Gcal \mid \sigma \in \{-1,1\}^d\}$ be the set of functions that shatters $X$. Define $\Hcal = \{-1,1\}^{X}$ to be a set of all functions from $X$ to $\{-1,1\}$. By definition of $\Hcal$, we must have $\text{VC}(\Hcal) = d$. We use an agnostic learner for $\Gcal$ with respect to $\psi \circ d_1$ to construct an agnostic learner for $\Hcal$ whose sample complexity, for large enough $d$, is smaller than the known lower bound for VC classes. Since $\text{fat}_{\gamma}(\Gcal) = \infty$, $d$ can be made arbitrarily large and thus we derive a contradiction.

 Let $\Acal$ be the promised agnostic learner for $\Gcal$ with respect to $\psi \circ d_1$ with sample complexity $m(\epsilon, \delta)$. For all $f \in [0,1]^{X}$, define a threshold function $h_f : X \to \{-1, 1\} $  as $h_f(x) = 2\,\indicator\{f(x) \geq r(x)\} -1 $. Let $\Dcal$ be an arbitrary distribution on $X \times \{-1, 1\}$ and $\Dcal_X$ be its marginal on $X$.  

\begin{algorithm}[H]
\caption{Agnostic  PAC learner for $\Hcal$ }
\label{alg:binary_red}
\setcounter{AlgoLine}{0}
\KwIn{Agnostic  learner $\Acal$ for $\Gcal$, unlabeled samples $S_U \sim \Dcal_X$, and another independent labeled samples $ S_L \sim \Dcal$}

Define $S_{\text{aug}} = \{(S_U, g^{\alpha}(S_U)) \mid g \in \Gcal_X\}$, all possible augmentations of $S_U$ by $\alpha$-discretization of functions in $\Gcal_X$ for $\alpha \leq \gamma/2$.

Run $\Acal$ over all possible augmentations to get $C(S_U) := \left\{\Acal\big(S \big) 
 \mid S \in S_{\text{aug}}\right\}.$

Define  $C_{\pm 1}(S_U) = \{ h_f \mid f \in C(S_U)\}$, a thresholded class of $C(S_U)$. 

Return  the predictor in $C_{\pm 1}(S_U)$ with the lowest empirical  $0\text{-}1$ risk over $S_L$. 
\end{algorithm}

We now show that Algorithm \ref{alg:binary_red} is an agnostic learner for $\Hcal$.  Consider $d \gg S_U + S_L$.  Then, $|C_{\pm 1}(S_U)| = |S_{\text{aug}}| \leq (2/\alpha)^{ |S_U|}$ can be much smaller than $2^d$.  Let $h^{\star} := \argmin_{h \in \Hcal}\expect_{\Dcal}[\indicator\{h(x) \neq y\}]$ be the optimal hypothesis for $\Dcal$. Note that, by definition of shattering, for every $h \in \Hcal$, there exists a $g \in \Gcal_X$ such that $h(x)= h_{g}(x)$ for all $x \in X$. In particular, there must exist $g^{\star} \in \Gcal_X$ such that $h^{\star}(x) = h_{g^{\star}}(x) := 2\,\indicator\{g^{\star}(x) \geq r(x)\} -1 $ for all $x \in X$.  Let $g^{\alpha}$ denote the $\alpha$-discretization of $g$ as defined in Equation \eqref{eq:discretization} for some $\alpha \leq \gamma/2$. Now, consider a sample $(S_U, g^{\star, \alpha}(S_U)) \in S_{\text{aug}}$. Let $\hat{g} = \Acal((S_U, g^{\star, \alpha}(S_U)) )$ be the predictor returned by the algorithm when run on a sample labeled by $g^{\star, \alpha}$. Define $h_{\hat{g}} = 2\,\indicator\{\hat{g}(x) \geq r(x)\}-1$ to be its thresholded function. Then, using the triangle inequality on the indicator function, we have
\begin{equation}\label{lem7:eq1}
\begin{split}
    \expect_{\Dcal}[\indicator\{h_{\hat{g}}(x) \neq y\}] \leq \expect_{\Dcal}[\indicator\{h^{\star}(x)\neq y \}] + \expect_{\Dcal_X}[\indicator\{h_{\hat{g}}(x) \neq h^{\star}(x)\} ].
\end{split}
\end{equation}
Note that $\indicator\{h_{\hat{g}}(x) \neq h^{\star}(x)\} =\indicator\{h_{\hat{g}}(x) \neq h_{g^{\star}}(x)\} \leq \indicator\{|\hat{g}(x) - g^{\star}(x)| \geq \gamma\} $. To see why the last inequality is true, we only have to consider the case where the indicator on the left is $1$, otherwise, the inequality is trivial. Recall that $\indicator\{h_{\hat{g}}(x) \neq h_{g^{\star}}(x)\} =1$ whenever $ \hat{g}(x)$ and $g^{\star}(x)$ lie on the opposite side of witness $r(x)$. Since $g^{\star}$ has to be at least $\gamma$ away from the witness $r(x)$, we obtain $\indicator\{|\hat{g}(x) - g^{\star}(x)| \geq \gamma\} =1$. Next, using the fact that $\alpha \leq \gamma/2$, we have $\indicator\{|\hat{g}(x) - g^{\star}(x)| \geq \gamma\} \leq \indicator\{|\hat{g}(x) - g^{\star, \alpha}(x)| \geq \gamma/2\}$ because discretization can decrease the distance between these functions by at most $\gamma/2$. Furthermore, using monotonicity of $\psi$, we get $ \indicator\{|\hat{g}(x) - g^{\star, \alpha}(x)| \geq \gamma/2\}  \leq \indicator\{\psi(|\hat{g}(x) - g^{\star, \alpha}(x)|) \geq \psi(\gamma/2)\}$ . Combining everything, we get a pointwise inequality 
\[ \indicator\{h_{\hat{g}}(x) \neq h^{\star}(x)\} \leq \indicator\{\psi(|\hat{g}(x) - g^{\star, \alpha}(x)|) \geq \psi(\gamma/2)\} \leq \frac{1}{\psi(\gamma/2)}\psi(|\hat{g}(x) - g^{\star, \alpha}(x)|).\]
Using this inequality gives an upperbound on the risk of $h_{\hat{g}}$, namely
\begin{equation}\label{lem:eq2}
    \expect_{\Dcal_X}[\indicator\{h_{\hat{g}}(x) \neq h^{\star}(x)\} ] \leq \frac{1}{\psi(\gamma/2)} \expect_{\Dcal}[\psi(|\hat{g}(x) - g^{\star, \alpha}(x)|) ].
\end{equation}

Since $\hat{g} = \Acal((S_U, g^{\star, \alpha}(S_U)) )$, we can use the algorithm's guarantee to get a further upperbound on the expectation above. In particular, if $|S_U|\geq  m( \frac{\epsilon \psi(\gamma/2)}{4}, \delta/2)$, then with probability at least $1-\delta/2$ over sampling $S_U \sim \Dcal_X$, we have
\[\expect_{\Dcal_X}[\psi(|\hat{g}(x) - g^{\star, \alpha}(x)|) ] \leq \inf_{g \in \Gcal} \expect_{\Dcal_X}[\psi(|g(x) - g^{\star, \alpha}(x)|)] + \frac{\epsilon \psi(\gamma/2)}{4}.\]
Note that $\inf_{g \in \Gcal} \expect_{\Dcal_X}[\psi(|g(x) - g^{\star, \alpha}(x)|)] \leq \expect_{\Dcal_X}[\psi(|g^{\star}(x) - g^{\star, \alpha}(x)|)] \leq \psi(\alpha)$, where the last step uses the fact that $|g^{\star}(x) - g^{\star, \alpha}(x)| \leq \alpha$ and  $\psi$ is monotonic. Using $L$-Lipschitzness of $\psi$ and the fact that $\psi(0) =0$, we get $\psi(\alpha) \leq L \alpha$. Picking $\alpha = \min(\gamma/2,\frac{\epsilon \psi(\gamma/2)}{4L}  )$, we get $\inf_{g \in \Gcal} \expect_{\Dcal_X}[\psi(|g(x) - g^{\star, \alpha}(x)|)] \leq \frac{\epsilon \psi(\gamma/2)}{4}.$ Plugging this back to the inequality in the display above, we get to $\expect_{\Dcal_X}[\psi(|\hat{g}(x) - g^{\star, \alpha}(x)|) ] \leq \frac{\epsilon \psi(\gamma/2)}{2}$. Using this guarantee on \eqref{lem:eq2}, we obtain
\[ \expect_{\Dcal_X}[\indicator\{h_{\hat{g}}(x) \neq h^{\star}(x)\} ] \leq \frac{\epsilon}{2}.\]
This bound applied to \eqref{lem7:eq1} yields 
\[\expect_{\Dcal}[\indicator\{h_{\hat{g}}(x) \neq y\}] \leq \inf_{h \in \Hcal}\expect_{\Dcal}[\indicator\{h(x)\neq y \}] + \frac{\epsilon}{2}.\]
Thus, we have shown the existence of a predictor $h_{\hat{g}} \in C_{\pm 1}(S_U)$ that achieves agnostic PAC bounds for $\Hcal$. Let $\hat{h}$ be the predictor returned by step $4$ of the algorithm. 
Next, we show that for sufficiently large $S_L$, the predictor $\hat{h}$ also attains agnostic PAC bounds. Recall that by Hoeffding's Inequality and union bound, with probability at least $1-\delta/2$, the empirical risk of every hypothesis in $C_{\pm 1}(S_L)$ on a sample of size $\geq  \frac{8 }{\epsilon^2} \log{\frac{4 |C_{\pm 1}(S_U)|}{\delta}} $ is at most $\epsilon/4$ away from its true error. So, if $|S_L| \geq  \frac{8}{\epsilon^2} \log{\frac{ 4|C_{\pm 1}(S_U)|}{\delta}} $, then with probability at least $1-\delta/2$, the empirical risk of the predictor $h_{\hat{g}}(x)$ is
\[\frac{1}{|S_L|}\sum_{(x,y) \in S_L} \indicator\{h_{\hat{g}}(x) \neq y\} \leq \expect_{\Dcal}[\indicator\{h_{\hat{g}}(x) \neq y\}] + \frac{\epsilon}{4} \leq\inf_{h \in \Hcal}\expect_{\Dcal}[\indicator\{h(x)\neq y \}] + \frac{3\epsilon}{4} ,\]
where the last inequality follows from the risk guarantee of $h_{\hat{g}}$ established above. Since $\hat{h}$ is the empirical risk minimizer over $S_L$, we must have 
\[\frac{1}{|S_L|}\sum_{(x,y) \in S_L} \indicator\{\hat{h}(x) \neq y\} \leq \frac{1}{|S_L|}\sum_{(x,y) \in S_L} \indicator\{h_{\hat{g}}(x) \neq y\} \leq \inf_{h \in \Hcal}\expect_{\Dcal}[\indicator\{h(x)\neq y \}] + \frac{3\epsilon}{4}.\]
Finally, as the population risk of $\hat{h}$ is at most $\epsilon/4$ away from its empirical risk, we have
\[ \expect_{\Dcal}[\indicator\{\hat{h}(x) \neq y\}] \leq \inf_{h \in \Hcal} \expect_{\Dcal}[\indicator\{h(x) \neq y\}] + \epsilon,\]
which is the agnostic PAC guarantee for $\Hcal$. Applying union bounds, the entire process, running algorithm $\Acal$ on the dataset augmented by $g^{\star, \alpha}$ and the ERM in step 4, succeeds with probability $1- \delta$.  This establishes that the Algorithm \ref{alg:binary_red} is an agnostic PAC learner for $\Hcal$.  The sample complexity of Algorithm \ref{alg:binary_red} is the number of samples required for Algorithm $\Acal$ to succeed and the ERM in step $4$ to succeed. Thus, the overall sample complexity of Algorithm \ref{alg:binary_red}, denoted $m_{\Hcal}(\epsilon, \delta)$, can be bounded as 
\begin{equation*}
    \begin{split}
        m_{\Hcal}(\epsilon, \delta) &\leq m_{\Acal}\left(\frac{\epsilon \, \psi(\gamma/2)}{4} , \frac{\delta}{2}\right) + \frac{8}{\epsilon^2} \log{\frac{ 4|C_{\pm 1}(S_U)|}{\delta}} \\
        &\leq m_{\Acal}\left(\frac{\epsilon \, \psi(\gamma/2)}{4} , \frac{\delta}{2}\right) \left( 1+ \frac{8}{\epsilon^2} \log{\left( \frac{2}{\min(\gamma/2, \epsilon \psi(\gamma/2)/4)}\right)}\right)  + \frac{8}{\epsilon^2} \log{\frac{4}{\delta}} \\
    \end{split}
\end{equation*}
where the second inequality follows because  $|C_{\pm 1}(S_U)| = |S_{\text{aug}}| \leq (2/\alpha)^{ |S_U|}$ and we need $|S_U| $ to be of size $m_{\Acal}\left(\frac{\epsilon \, \psi(\gamma/2)}{4} , \frac{\delta}{2}\right) $. We also use the fact that $ \alpha =\min( \frac{\gamma}{2},  \frac{\epsilon\psi(\frac{\gamma}{2})}{4}). $ 

However, it is well known \citep[Theorem 6.8]{ShwartzDavid} that the sample complexity of learning $\Hcal$ in agnostic setting is 
\[C \frac{d+\log(2/\delta)}{\epsilon^2}\]
for some $C > 0$. Thus, we must have $m_{\Hcal}(\epsilon, \delta) \geq C (d+ \log(2/\delta))/\epsilon^2$. However, this is a contradiction because $ d$ can be arbitrarily large but $m_{\Hcal}(\epsilon, \delta)$ must have a finite upper bound for every fixed $\epsilon, \delta$. Therefore, the function class $\Gcal$ cannot be learnable with respect to $\psi \circ d_1$ whenever there exists a scale $\gamma \in (0,1)$ such that $\text{fat}_{\gamma}(\Gcal) = \infty$. \end{proof}

\section{Rademacher Based Proof for  Batch  Regression}\label{appdx:rademacher_batch}

 To show that the learnability of each $\Fcal_k$ with respect to $d_1$ is sufficient for the learnability of $\Fcal$ with respect to $\ell_p$ norms for $p \geq 1$, we use the fact that $\ell_p(f(x), y)$ is a $K$-Lipschitz in its first argument with respect to $\norm{ \cdot }_{\infty} $ norm, that is $|\ell_p(f(x), y) - \ell_{p}(g(x), y)| \leq K \norm{f(x) -g(x)}_{\infty}$, and use the following bound on the Rademacher complexity of the loss class $\ell \circ \Fcal = \{(x, y) \mapsto \ell(f(x), y)\} \, |\,  f \in \Fcal \}$.
\begin{lemma} [\cite{Foster2019contraction}]
\label{lem: contraction-batch}
Let $\mathcal{F} \subseteq\Ycal^{\Xcal}$ be a multioutput function class. For any $\delta \in (0,1)$, there exists a constants $0 < c < 1$ and $C > 0$  such that 
\[\mathfrak{R}_n(\ell_p \circ \Fcal) \leq  K \inf_{\alpha > 0} \left\{4\alpha + \frac{ C}{\sqrt{n}}  \sum_{k=1}^K \int_{\alpha}^1\sqrt{\emph{\text{fat}}_{c\epsilon}(\mathcal{F}_k)\log^{1+\delta}\left(\frac{  \: e\: n}{\epsilon}\right)} d\epsilon\right\}.\]
\end{lemma}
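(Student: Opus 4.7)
The plan is to derive the bound by composing two off-the-shelf tools: a vector contraction inequality for Rademacher complexity (to peel off the outer loss $\ell_p$) and a Dudley-type entropy integral expressing the Rademacher complexity of a bounded scalar class via its fat-shattering dimension at all scales. The statement is attributed to \cite{Foster2019contraction}, so the work is essentially verifying that their framework applies with the stated Lipschitz constant $K$.

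First, I would verify that for every fixed $y \in \Ycal$, the map $v \mapsto \ell_p(v, y)$ is $K$-Lipschitz on $\reals^K$ with respect to $\norm{\cdot}_\infty$. By the reverse triangle inequality for the $\ell_p$ norm and the standard embedding $\norm{u}_p \leq K^{1/p}\norm{u}_\infty \leq K \norm{u}_\infty$ for $u \in \reals^K$, we have
\[
|\ell_p(f(x),y) - \ell_p(g(x),y)| \leq \norm{f(x)-g(x)}_p \leq K\,\norm{f(x)-g(x)}_\infty
\]
pointwise, for all $p \in [1,\infty]$. This is the Lipschitz assumption needed to invoke a vector contraction principle with the correct prefactor $K$.

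Next, I would apply the vector contraction inequality of \cite{Foster2019contraction}, which refines Maurer's vector-contraction lemma and is specifically tailored for losses Lipschitz in the $\ell_\infty$ norm on the output space. Informally, for any $L$-Lipschitz (w.r.t.\ $\norm{\cdot}_\infty$) scalar function $\phi$ and any vector-valued class $\Fcal$, their result gives a bound on $\mathfrak{R}_n(\phi \circ \Fcal)$ in terms of the sum $\sum_{k=1}^K \mathfrak{R}_n(\Fcal_k)$ of the coordinate Rademacher complexities, with an additional poly-logarithmic factor. Applied with $L = K$, this peels off $\ell_p$ and reduces the problem to controlling $\mathfrak{R}_n(\Fcal_k)$ for each coordinate $k \in [K]$.

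Finally, for each coordinate class $\Fcal_k \subseteq [0,1]^{\Xcal}$, I would appeal to the Dudley-type entropy integral bound of \cite{Mendelson2002,rakhlin2015sequential} (batch version of the sequential Rademacher--fat-shattering bound): for some universal constants $0 < c < 1$, $C > 0$, and any $\delta \in (0,1)$,
\[
\mathfrak{R}_n(\Fcal_k) \leq \inf_{\alpha > 0}\left\{4\alpha + \frac{C}{\sqrt{n}} \int_\alpha^1 \sqrt{\text{fat}_{c\epsilon}(\Fcal_k)\,\log^{1+\delta}\!\left(\frac{e\,n}{\epsilon}\right)}\,d\epsilon\right\}.
\]
Summing these coordinatewise bounds (and pulling the infimum outside the sum, which is valid since the leading $4\alpha$ term is the same for each coordinate) and combining with the $K$-Lipschitz contraction step produces exactly the stated bound. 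The main obstacle is the vector contraction step: naively applying Maurer's contraction would cost a factor of $\sqrt{K}$ from the $\ell_2$-to-$\ell_\infty$ conversion, which is insufficient here; the sharper logarithmic scaling of \cite{Foster2019contraction} is what allows the bound to depend on the coordinate fat-shattering dimensions in a scale-sensitive, sum-form way rather than paying a polynomial factor in $K$ at every scale.
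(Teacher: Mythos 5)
Your outline is sound at a high level, but it takes a genuinely different route from the paper's proof and, as written, establishes a variant of the stated inequality rather than the inequality itself. The paper runs a \emph{single} chaining (Dudley) argument on the normalized loss class $\tfrac{1}{K}\,\ell_p \circ \Fcal$ (which is $1$-Lipschitz w.r.t.\ $\norm{\cdot}_\infty$ after dividing by $K$), and the coordinate decomposition happens at the level of $\ell_\infty$ \emph{covering numbers}: an $\epsilon$-cover of each $\Fcal_k$ induces an $\epsilon$-cover of the loss class, so $\log \Ncal_\infty(\tfrac{1}{K}\ell_p\circ\Fcal,\epsilon,n) \leq \sum_k \log \Ncal_\infty(\Fcal_k,\epsilon,n)$ (Lemma 1 of \cite{Foster2019contraction}); the bound of \cite{rudelson2006combinatorics} then converts each covering number into a fat-shattering dimension, and $\sqrt{\sum_k a_k} \leq \sum_k \sqrt{a_k}$ yields exactly the stated sum form with a clean $4\alpha$ term and an $n$-independent constant $C$. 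You instead decompose at the level of \emph{Rademacher complexities}, invoking the final vector-contraction theorem of \cite{Foster2019contraction} as a black box and then applying Dudley per coordinate.

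Two concrete consequences of that choice. First, the black-box contraction theorem carries polylogarithmic factors in $n$ (and a $\sqrt{K}$) multiplying the coordinate Rademacher complexities; after substituting the per-coordinate entropy integral, those factors also multiply the $4\alpha$ term, so you do not recover the statement with an unadorned $4\alpha$ and an absolute $C$ --- you get a strictly weaker (though qualitatively equivalent) bound, which still suffices for the paper's only use of the lemma, namely $\mathfrak{R}_n(\ell_p\circ\Fcal)\to 0$. Second, the lemma being proved \emph{is} the intermediate step inside the proof of the contraction theorem you invoke, so your argument inverts the natural logical order; the direct route is simply to redo that intermediate step. A side remark: your closing claim that Maurer's $\ell_2$-contraction would be insufficient is not right --- since $\norm{u}_p \leq \sqrt{K}\norm{u}_2$, Maurer already gives $\mathfrak{R}_n(\ell_p\circ\Fcal) \leq \sqrt{2K}\sum_k \mathfrak{R}_n(\Fcal_k)$ with no logarithms, which combined with per-coordinate Dudley is perfectly adequate for fixed finite $K$; the $\ell_\infty$ machinery matters for sharp $K$-dependence, not for the qualitative content of this lemma.
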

\noindent The result presented here is in fact the intermediate result in \cite{Foster2019contraction}, and we provide a sketch of how their argument can be adapted to our setting. \begin{proof}
   Note that for $f, g \in \Fcal$, we have $|\ell_p(f(x), y) - \ell_p(g(x), y)| \leq |\ell_p(f(x), g(x))| \leq \ell_1(f(x), g(x)) \leq K \norm{f(x) - g(x)}_{\infty}$. Furthermore, we have that $|f_k(x) -y^k| \leq 1$, so we obtain $|\ell_p(f(x), y)| \leq K$. Define the normalized $\ell_p$ loss as $\bar{\ell}_p(f(x), y) := \ell_{p}(f(x), y)/K$. 
   By standard chaining argument, we know that 
    \[\mathfrak{R}_n(\ell \circ \Fcal) = K\,  \mathfrak{R}_n(\bar{\ell} \circ \Fcal) \leq K\, \inf_{\alpha > 0} \left\{4\alpha + \frac{ 12}{\sqrt{n}}  \int_{\alpha}^1\sqrt{\log \Ncal_2(\bar{\ell}_p \circ \Fcal, \epsilon, n)} d\epsilon\right\}. \]

    Since a cover with $||\cdot||_{\infty}$ norm is also a cover with respect to $||\cdot||_2$ norm, we have that $\log \Ncal_2(\bar{\ell}_p \circ \Fcal, \epsilon, n) \leq \log \Ncal_{\infty}(\bar{\ell} \circ \Fcal, \epsilon, n) $. Since $\bar{\ell}_p(f(x), y)$ is $1-$Lipschitz with respect to $||\cdot||_{\infty}$ norm, following Lemma 1 of \cite{Foster2019contraction}, we obtain $\log \Ncal_{\infty}(\bar{\ell}_p \circ \Fcal, \epsilon, n) \leq \sum_{k=1}^K \log \Ncal_{\infty}(\Fcal_k, \epsilon, n)$. 
    
     A result due to \cite{rudelson2006combinatorics} states that for any $\delta \in (0,1)$, there exists constants $0 < c_k < 1$ and $C_k > 0$ such that 
     \[ \log \Ncal_{\infty}( \Fcal_k,  \epsilon, n) \leq C_k\, \text{fat}_{c_k\epsilon}(\Fcal_k)\, \log^{1+ \delta}{(en/\epsilon)}.\]
     Picking $C = \max_k C_k$ and $c = \min_k c_k$, we obtain the contraction inequality
    \[ \mathfrak{R}_n(\ell \circ \Fcal) \leq K\, \inf_{\alpha > 0} \left\{4\alpha + \frac{ 12\, C }{\sqrt{n}}  \int_{\alpha}^1\sqrt{ \sum_{k=1}^K \, 
 \text{fat}_{c\epsilon}(\mathcal{F}_k)\log^{1+\delta}\left(\frac{ e\: n}{  \: \epsilon}\right)} d\epsilon\right\}. \]
    Using $\sqrt{ \sum_{k=1}^K \, 
 \text{fat}_{c\epsilon}(\mathcal{F}_k)\log^{1+\delta}\left(\frac{ e\: n}{  \: \epsilon}\right)} \leq \sum_{k=1}^K \sqrt{ \text{fat}_{c\epsilon}(\mathcal{F}_k)\log^{1+\delta}\left(\frac{ e\: n}{  \: \epsilon}\right) }$ yields the desired contraction inequality.

\end{proof}

With Lemma \ref{lem: contraction-batch} in our repertoire, the sufficiency proof is a routine uniform convergence argument. 
\begin{proof}(of sufficiency in Theorem \ref{thm:batch_reg_nondecom})
    Suppose each restriction $\Fcal_k$ is learnable with respect to $d_1$. Then, we know that for all $k \in [K]$ and for all $1 > \gamma > 0$, we have $ \text{fat}_{\gamma} (\Fcal_k) < \infty$ \citep{BARTLETT1996}, \cite[Chatper 19]{anthony_bartlett_1999}). Using Lemma \ref{lem: contraction-batch}, for $\delta = 1/2$, we can find constants $c, C$ such that 
    \[\mathfrak{R}_n(\ell_p \circ \Fcal) \leq  K \inf_{\alpha > 0} \left\{4\alpha + \frac{ C}{\sqrt{n}}  \sum_{k=1}^K \int_{\alpha}^1\sqrt{\text{fat}_{c\epsilon}(\mathcal{F}_k)\log^{3/2}\left(\frac{  \: e\: n}{\epsilon}\right)} d\epsilon\right\}.\]
    Fix $\alpha > 0$. The second term inside infimum vanishes  as $n \to \infty$, yielding $\mathfrak{R}_n(\ell \circ \Fcal) \leq 4 \alpha K $. As $\alpha > 0$ is arbitrary, the Rademacher complexity $\mathfrak{R}_n(\ell \circ \Fcal)$ goes to $0$  as $n \to \infty$. This argument can be readily turned into non-asymptotic bounds on $\mathfrak{R}_n(\ell_p \circ \Fcal)$ if the precise form of $\text{fat}_{\gamma}(\Fcal_k)$ as a function of $\gamma$ is known.    Since the empirical Rademacher complexity vanishes,  uniform convergence holds over the loss class $\ell_p \circ \Fcal$ and thus $\Fcal$ is learnable with respect to $\ell_p$ via empirical risk minimization. 
\end{proof}

\section{ Online Multilabel Learnability with respect to Hamming Loss }\label{appdx:proof_OLHam}
In this section, we provide the proof of Theorem \ref{thm:OLHam}. 
\begin{proof} We first prove that the online learnability of each restriction is sufficient for the online learnability of $\ell_H$.

\noindent\textbf{Part 1: Sufficiency.}
 Our proof is based on a reduction: given oracle access to online learners $\{\mathcal{A}_k\}_{k=1}^K$ for $\{\mathcal{F}_k\}_{k=1}^{K}$ with respect to $\ell_{0\text{-}1}$, we construct an online learner $\mathcal{A}$ for $\mathcal{F}$ with respect to $\ell_H$. In fact, similar to the batch setting, the online multilabel learning algorithm $\mathcal{A}$ is simple: in each round $t \in [T]$, receive $x_t$, query the predictions $\Acal_1(x_t), ..., \Acal_K(x_t)$, and finally predict the concatenation $\hat{y}_t = (\Acal_1(x_t), ..., \Acal_K(x_t))$. Once the true label $y_t = (y_t^1, ..., y_t^K)$ is revealed, update each online learner $\mathcal{A}_k$ by passing $(x_t, y_t^k)$ for $k \in [K]$. It suffices to show that the expected regret of $\mathcal{A}$ is sublinear in $T$ with respect to $\ell_H$. By Definition \ref{def:agnOL}, we have that for all $k \in [K]$, 

$$\mathbb{E}\left[\sum_{t=1}^T \mathbbm{1}\{\Acal_k(x_t) \neq y_t^k\} - \inf_{f_k \in \mathcal{F}_k}\sum_{t=1}^T \mathbbm{1}\{f_k(x_t) \neq y_t^k\}\right] \leq R_k(T)$$
where $R_k(T)$ is some sublinear function in $T$. Summing the regret bounds across all $k \in [K]$ splitting up the expectations, and using linearity of expectation, we get that
$\mathbb{E}\left[\sum_{k=1}^K \sum_{t=1}^T \mathbbm{1}\{\Acal_k(x_t) \neq y_t^k\} \right] - \mathbb{E} \left[ \sum_{k=1}^K  \inf_{f_k \in \mathcal{F}_k}\sum_{t=1}^T \mathbbm{1}\{f_k(x_t) \neq y_t^k\}\right] \leq \sum_{k=1}^K R_k(T).$
Noting that $\sum_{k=1}^K  \inf_{f_k \in \mathcal{F}_k}\sum_{t=1}^T \mathbbm{1}\{f_k(x_t) \neq y_t^k\} \leq \inf_{f \in \mathcal{F}}\sum_{k=1}^K  \sum_{t=1}^T \mathbbm{1}\{f_k(x_t) \neq y_t^k\}$, swapping the order of summations, and using the definition of $\ell_H$ we have that, 
$$\mathbb{E}\left[\sum_{t=1}^T \ell_H(\mathcal{A}(x_t), y_t)\right] - \mathbb{E} \left[\inf_{f \in \mathcal{F}}  \sum_{t=1}^T \ell_H(f(x_t), y_t) \right]  \leq \sum_{k=1}^K R_k(T),$$
where $\mathcal{A}(x_t) = (\mathcal{A}_1(x_t), ..., \mathcal{A}_K(x_t))$. This concludes the proof of this direction since $\sum_{k=1}^K R_k(T)$ is still a sublinear function in $T$.

\noindent\textbf{Part 2: Necessity.}
Next we prove that if $\mathcal{F}$ is online learnable with respect to to $\ell_H$, then each $\mathcal{F}_k$ is online learnable with respect to $\ell_{0\text{-}1}$. Namely, given oracle access to an online learner $\mathcal{A}$ for $\mathcal{F}$ with respect to $\ell_H$, we construct an online learner $\mathcal{B}$ for $\mathcal{F}_1$ with respect to $\ell_{0\text{-}1}$. A similar reduction can be used to construct online learners for each restriction $\Fcal_k$.  Similar to the batch setting, the online learning algorithm $\mathcal{B}$ is simple: in each round $t \in [T]$, receive $x_t$, query $\hat{y}_t = \Acal(x_t)$ and predict $\hat{y}^1_t = \Acal_1(x_t)$. Once the true label $y^1_t$ is revealed, update $\Acal$ by passing $(x_t, y_t)$ where $y_t$ = $(y_t^1, \sigma_t^2, ..., \sigma_t^K)$ and $\{\sigma_t^i\}_{i=2}^K$ is an i.i.d sequence of Rademacher random variables.

It suffices to show that the expected regret of $\mathcal{B}$ is sublinear in $T$ with respect to $\ell_{0\text{-}1}$. As previously mentioned, we  assume that the sequence  $(x_1, y_1^1), ..., (x_T, y_T^1)$ is chosen by an oblivious adversary, and thus is not random.  Let $y_t = (y^1_t, \sigma^2_t, ..., \sigma^K_t)$. By Definition \ref{def:agnOL}, we have that,  

$$\mathbb{E}\left[\sum_{t=1}^T \ell_H(\mathcal{A}(x_t), y_t) - \inf_{f \in \mathcal{F}}\sum_{t=1}^T \ell_H(f(x_t), y_t)\right] \leq R(T, 2^K) $$
where the expectation is over both the randomness of $\mathcal{A}(x_t)$ and $(\sigma^2_t, ..., \sigma^K_t)$  and $R(T, 2^K)$ is a sub-linear function of $T$. Splitting up the expectation, using the definition of the Hamming loss, and by the linearity of expectation, we have that

$$\sum_{t=1}^T \sum_{k=1}^K \mathbb{E}\left[\mathbbm{1} \{\Acal_k(x_t) \neq y^k_t\} \right] - \inf_{f \in \mathcal{F}}\sum_{t=1}^T \sum_{k=1}^K \mathbb{E}\left[\mathbbm{1} \{f_k(x_t) \neq y^k_t\} \right] \leq R(T, 2^K).$$

Next, observe that for every $t \in [T]$, for every $k \in \{2, ..., K\}$, the randomness of $y_t^k = \sigma_t^k$ implies  $\mathbb{E}\left[\mathbbm{1} \{\Acal_k(x_t) \neq y^k_t\} \right] = \mathbb{E}\left[\mathbbm{1} \{f(x_t) \neq y^k_t\} \right] = \frac{1}{2}$. Thus, 
$$\sum_{t=1}^T \mathbb{E}\left[\mathbbm{1} \{\Acal_1(x_t) \neq y^1_t\} + \frac{K-1}{2} \right] - \inf_{f \in \mathcal{F}}\sum_{t=1}^T \mathbb{E}\left[\mathbbm{1} \{f_1(x_t) \neq y^1_t\} + \frac{K-1}{2} \right] \leq R(T, 2^K).$$
Canceling constant factors gives,  
$\mathbb{E}\left[\sum_{t=1}^T \mathbbm{1} \{\Acal_1(x_t) \neq y^1_t\} \right] - \inf_{f_1 \in \mathcal{F}_1}\sum_{t=1}^T \mathbbm{1} \{f_1(x_t) \neq y^1_t\} \leq R(T, 2^K),$ showing that $\mathcal{B}$ is an online agnostic learner for $\mathcal{F}_1$ with respect to $\ell_{0\text{-}1}$.
\end{proof}

\section{MCLdim Characterizes Online Multilabel Learnability}
\label{app:MCLdimchar}
In this section, we show that the MCLdim characterizes the online learnability of a multilabel function class $\mathcal{F} \subseteq\mathcal{Y}^{\mathcal{X}}$ with respect to to any loss $\ell$ that satisfies the identity of indiscernibles. Theorem \ref{thm:MCLdimchar} makes this more precise. 

\begin{theorem}
    \label{thm:MCLdimchar}
    Let $\ell$ be any loss function satisfying the identity of indiscernibles. A function class $\mathcal{F} \subseteq\mathcal{Y}^{\mathcal{X}}$ is online learnable with respect to $\ell$ if and only if $\text{MCLdim}(\mathcal{F}) < \infty$.
\end{theorem}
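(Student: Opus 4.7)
The plan is to reduce the multilabel setting to a multiclass setting with $2^K$ labels where the MCLdim characterization is already available, and then to chain that through the equivalences established in the body of the paper. Concretely, view $\mathcal{Y} = \{-1,+1\}^K$ as a finite label set of cardinality $2^K$, and define the multiclass $0\text{-}1$ loss $\ell^{\mathrm{mc}}_{0\text{-}1}(y_1, y_2) = \mathbbm{1}\{y_1 \neq y_2\}$ on this set, treating each element of $\{-1,+1\}^K$ as a distinct class. Since $\ell^{\mathrm{mc}}_{0\text{-}1}$ trivially satisfies the identity of indiscernibles, the work of \cite{DanielyERMprinciple} gives that $\mathcal{F}$ is online learnable w.r.t.\ $\ell^{\mathrm{mc}}_{0\text{-}1}$ if and only if $\mathrm{MCLdim}(\mathcal{F}) < \infty$. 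Thus it suffices to prove that online learnability w.r.t.\ $\ell$ is equivalent to online learnability w.r.t.\ $\ell^{\mathrm{mc}}_{0\text{-}1}$.

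To obtain this equivalence, I would apply Lemma \ref{lem: arbOL} twice. First, applied to the given loss $\ell$, it yields that $\mathcal{F}$ is online learnable w.r.t.\ $\ell$ if and only if $\mathcal{F}$ is online learnable w.r.t.\ the Hamming loss $\ell_H$. Second, applied to $\ell^{\mathrm{mc}}_{0\text{-}1}$, which also satisfies the identity of indiscernibles, it gives that $\mathcal{F}$ is online learnable w.r.t.\ $\ell^{\mathrm{mc}}_{0\text{-}1}$ if and only if $\mathcal{F}$ is online learnable w.r.t.\ $\ell_H$. Chaining these two equivalences through $\ell_H$ gives the desired reduction, and composing with Daniely et al.'s characterization completes the proof.

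The only real check needed is that Lemma \ref{lem: arbOL} genuinely applies to $\ell^{\mathrm{mc}}_{0\text{-}1}$. Inspecting its proof, it relies on (i) identity of indiscernibles, (ii) finiteness of $\mathrm{im}(\mathcal{F})$, and (iii) the sandwich $a\ell_H \leq \ell' \leq b\ell_H$ on $\mathcal{Y} \times \mathcal{Y}$ for some universal constants $a,b>0$, together with the realizable-to-agnostic conversion from Theorem \ref{thm:onlreal2agn}. All three hold here: (i) is immediate, (ii) follows from $|\mathrm{im}(\mathcal{F})| \leq 2^K < \infty$, and (iii) is furnished by the explicit bound $\ell_H/K \leq \ell^{\mathrm{mc}}_{0\text{-}1} \leq \ell_H$, which holds because the Hamming distance between any two distinct vectors in $\{-1,+1\}^K$ lies in $\{1,\dots,K\}$. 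Since these are the only ingredients used, the lemma transfers without modification, and the main obstacle---ensuring the previously proved equivalence extends to the multiclass $0\text{-}1$ loss---dissolves. Hence the two-step chain yields $\mathcal{F}$ online learnable w.r.t.\ $\ell \iff \mathcal{F}$ online learnable w.r.t.\ $\ell^{\mathrm{mc}}_{0\text{-}1} \iff \mathrm{MCLdim}(\mathcal{F}) < \infty$, as desired.
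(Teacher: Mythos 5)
Your proposal is correct, but it takes a more modular route than the paper. You obtain both directions by chaining black-box equivalences: Daniely et al.'s agnostic MCLdim characterization for the $2^K$-class $0\text{-}1$ loss $\ell^{\mathrm{mc}}_{0\text{-}1}$, composed with two applications of Lemma \ref{lem: arbOL} (once for $\ell$, once for $\ell^{\mathrm{mc}}_{0\text{-}1}$, both legitimately satisfying the identity of indiscernibles, so the lemma applies as stated; your explicit sandwich $\ell_H/K \leq \ell^{\mathrm{mc}}_{0\text{-}1} \leq \ell_H$ is not even needed as a separate check, since the lemma's hypothesis is just the identity of indiscernibles). The paper instead argues directly: for sufficiency it builds the expert set from the Multiclass Standard Optimal Algorithm's mistake bound $d = \text{MCLdim}(\mathcal{F})$ and runs REWA on the scaled loss $\ell/B$ over those experts, bypassing $\ell_H$ entirely and yielding the explicit regret $O(B\sqrt{dTK\ln T})$; for necessity it uses the pointwise bound $a\,\ell^{\mathrm{mc}}_{0\text{-}1} \leq \ell$ to turn an agnostic $\ell$-learner into a \emph{realizable} $0\text{-}1$ learner and then only invokes the (weaker) fact that MCLdim characterizes realizable learnability. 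The trade-off is that your chain is shorter and reuses existing machinery, but each pass through Lemma \ref{lem: arbOL} routes through the subsampling conversion of Theorem \ref{thm:onlreal2agn}, degrading the regret from $R(T)$ to roughly $T^{1-\beta}\overline{R}(T^{\beta}) + \sqrt{T^{1+\beta}\ln|\text{im}(\mathcal{F})|}$ at each step, so the resulting rates are substantially worse than the paper's direct construction; for the qualitative learnability equivalence being claimed, however, both arguments are equally valid.
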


\begin{proof} (of sufficiency)
We first show that finiteness of MCLdim is sufficient for online learnability. The proof follows exactly like the proof of Lemma \ref{lem: arbOL}. We include it here again for completeness sake. Let $\ell$  be any loss function satisfying the identity of indiscernibles and $\mathcal{F} \subseteq\mathcal{Y}^{\mathcal{X}}$ be a multilabel function class such that $\text{MCLdim}(\mathcal{F}) = d < \infty$.  Since $\mathcal{F}$ has finite MCLdim, the deterministic Multiclass Standard Optimal Algorithm for $\mathcal{F}$ , hereinafter denoted $\text{MCSOA}(\mathcal{F})$, achieves mistake-bound $d$ in the realizable setting \citep{DanielyERMprinciple}. Therefore, following the same procedure as in \cite{DanielyERMprinciple}, we can construct a finite set of experts $\mathcal{E}$ of size $|\mathcal{E}| = \sum_{j=0}^{d} \binom{T}{j}|\text{im}(\mathcal{F})|^j \leq (2^KT)^d$ such that for any (oblivious) sequence of instances $x_1, ..., x_T$, for any function $f \in \mathcal{F}$, there exists an expert $E_f \in \mathcal{E}$, such that $f(x_t) = E(x_t)$ for all $t \in [T]$. Finally, running the celebrated Randomized Exponential Weights Algorithm (REWA) using $\mathcal{E}$ as the set of experts and the scaled loss function $\frac{\ell}{B} \in [0,1]$  guarantees that for any labelled sequence $(x_1, y_1), ..., (x_T, y_T)$, 
\begin{align*}
\mathbb{E}\left[\sum_{t=1}^T \ell(\hat{y}_t, y_t) - \inf_{E \in \mathcal{E}} \sum_{t=1}^T \ell(E(x_t), y_t)\right] &\leq \mathbb{E}\left[\sum_{t=1}^T \ell(\hat{y}_t, y_t) - \inf_{f \in \mathcal{F}} \sum_{t=1}^T \ell(f(x_t), y_t)\right]\\
&\leq O\left(B\sqrt{T\ln(|\mathcal{E}|)}\right) \leq O\left(B\sqrt{dTK\ln(T)}\right)\\
\end{align*}
where $\hat{y}_t$ is the prediction of REWA in the $t$'th round. Thus, running REWA over the set of experts $\mathcal{E}$ using $\frac{\ell}{B}$ gives an online learner for $\mathcal{F}$ with respect to $\ell$.
\end{proof}

\begin{proof} (of necessity)
    To prove necessity, we need to show that if $\mathcal{F}$ is online learnable with respect to $\ell$, then $\text{MCLdim}(\mathcal{F}) < \infty$. To do so, we show that if $\mathcal{F}$ is online learnable with respect to $\ell$, then $\mathcal{F}$ is online learnable with respect to $\ell_{0\text{-}1}$ in the realizable setting. Let $\mathcal{A}$ be an online learner for $\mathcal{F}$ with respect to $\ell$. Then, by definition, 
    $$\mathbb{E}\left[\sum_{t=1}^T \ell(\mathcal{A}(x_t), y_t) - \inf_{f \in \mathcal{F}}\sum_{t=1}^T \ell(f(x_t), y_t)\right] \leq R(T, 2^K) $$
    where $R(T, 2^K)$ is a sublinear function in $T$. Since $\ell$ satisfies the identity of indiscernibles, in the realizable setting, $\inf_{f \in \mathcal{F}}\sum_{t=1}^T \ell(f(x_t), y_t) = 0$. Therefore, under realizability, 
     $$\mathbb{E}\left[\sum_{t=1}^T \ell(\mathcal{A}(x_t), y_t) \right] \leq R(T, 2^K).$$
     Because there are only a finite number of inputs to $\ell$, there must exist a universal constant $a$ such that $a\ell_{0\text{-}1} \leq \ell$. Substituting in gives that, 

      $$\mathbb{E}\left[\sum_{t=1}^T \ell_{0\text{-}1}(\mathcal{A}(x_t), y_t) \right] \leq \frac{R(T, 2^K)}{a}.$$

      Since $a$ is a universal constant that does not depend on $T$, $\frac{R(T, 2^K)}{a}$ is still a sublinear function in $T$, implying that $\mathcal{A}$ is also a realizable online learner for $\mathcal{F}$ with respect to $\ell_{0\text{-}1}$. This completes the proof as MCLdim characterizes realizable learnability and so we must have $\text{MCLdim}(\mathcal{F}) < \infty$.
\end{proof}

\section{Proofs for Bandit Online Multilabel Classification }\label{appdx:proofs_bandit}
In this section, we provide proofs for the characterization of online multilabel classification under bandit feedback. 

\begin{proof}(of Theorem \ref{thm:banditonlreal2agn})
The proof of Theorem \ref{thm:banditonlreal2agn} is nearly identical to the proof of Theorem \ref{thm:onlreal2agn}. The only difference is that in Algorithm \ref{alg:onlreal2agn}, we now need use the bandit Expert's algorithm EXP4 from \cite{auer2002nonstochastic} instead of REWA. Similar to REWA, based on Theorem 2.3 in \cite{daniely2013price} and the fact that $A, B$ and $P$ are independent, EXP4 guarantees that 

$$\mathbb{E}\left[\sum_{t=1}^T \ell(\mathcal{P}(x_t), y_t)\right] \leq \mathbb{E}\left[\inf_{E \in \mathcal{E}_B} \sum_{t=1}^T \ell(E(x_t), y_t)\right] + eM\mathbb{E}\left[\sqrt{2T|\mathcal{Y}|\ln(|\mathcal{E}_B|)}\right],$$

where $\mathcal{P}(x_t)$ denotes EXP4's prediction in round $t$. The remaining proof for deriving the upper bound

$$\mathbb{E}\left[\inf_{E \in \mathcal{E}_B} \sum_{t=1}^T \ell(E(x_t), y_t)\right]  \leq \inf_{f\in \mathcal{F}}\sum_{t= 1}^T \ell(f(x_t), y_t) + \frac{cT}{T^{\beta}}\overline{R}(T^{\beta}, |\mathcal{Y}|)$$ 
is identical to that in Theorem \ref{thm:onlreal2agn}, so we omit it here. Putting these pieces together gives the stated guarantee. 
\end{proof}

\begin{proof}(of Theorem \ref{thm:banditcharac})
 Let $c = \frac{\max_{r \neq t}\ell(r, t)}{\min_{r \neq t}\ell(r, t)}$. We first show necessity: if $\mathcal{F}$ is bandit online learnable with respect to $\ell$, then each restriction $\mathcal{F}_k$ is online learnable with respect to $\ell_{0\text{-}1}$. This follows trivially from the fact that if $\mathcal{A}$ is a bandit online learner for $\mathcal{F}$, then $\mathcal{A}$ is also an online learner for $\mathcal{F}$ under full-feedback. Thus, by Theorem \ref{thm:onlinecharac}, online learnability of $\mathcal{F}$ with respect to $\ell$ implies online learnability of restriction $\mathcal{F}_k$ with respect to the 0-1 loss. 

We now focus on showing sufficiency: if for all $k \in [K]$, $\mathcal{F}_k$ is online learnable with respect to 0-1 loss, then $\mathcal{F}$ is \textit{bandit} online learnable with respect to loss $\ell$. Since $|\mathcal{Y}| = 2^K < \infty$ and $\ell$ is a $c$-subadditive, by Theorem \ref{thm:banditonlreal2agn}, it suffices to show that there exists a realizable online learner for $\mathcal{F}$ with respect to $\ell$. However, using Theorem \ref{thm:onlinecharac},  online learnability of each restriction $\mathcal{F}_k$ with respect to 0-1 the loss implies (agnostic) online learnability of $\mathcal{F}$ with respect to $\ell$. Since an agnostic online learner is trivially a realizable online learner, the proof is complete. 
\end{proof}

\section{Equivalence of $d_1$ and $\psi \circ d_1$ Online Learnability} \label{app:onlinereg_scalar}

\begin{proof} (of Lemma \ref{lem:onlinereg_scalar})
Since $\psi$ is a Lipschitz function, the proof of sufficiency follows immediately from  Corollary 5 in \cite{rakhlin2015online}, a contraction Lemma for the sequential Rademacher complexity. Thus, we focus on proving necessity - if $\mathcal{G}$ is online learnable with respect to $\psi \circ d_1$, then $\mathcal{G}$ is online learnable with respect to $d_1$.

Since the sequential fat shattering dimension of $\mathcal{G}$ characterizes $d_1$ learnability \citep{rakhlin2015online}, it suffices to show that $\mathcal{G}$ being online learnable with respect to $\psi \circ d_1$ implies $\text{fat}^{\text{seq}}_{\gamma}(\mathcal{G}) < 0$ for every $\gamma \in (0, 1)$. Like in the batch setting, we prove this via contradiction. 

Suppose, for the sake of contradiction, $\mathcal{G}$ is online learnable with respect to $\psi \circ d_1$ but there exists a scale $\gamma \in (0, 1)$ such that $\text{fat}^{\text{seq}}_{\gamma}(\mathcal{G}) = \infty$. Then, for every $T \in \mathbb{N}$, there exists a $\mathcal{X}$-valued binary tree $\mathcal{T}$ and a $[0, 1]$-valued binary witness tree $\mathcal{R}$ both of depth $T$ such that for all $\sigma \in \{-1, 1\}^T$, there exists $g_{\sigma} \in \mathcal{G}$ such that for all $t \in [T]$, $\sigma_t(g_{\sigma}(\mathcal{T}_t(\sigma_{<t})) - \mathcal{R}_t(\sigma_{<t})) \geq \gamma$. Without loss of generality, assume that for any path $\sigma \in \{-1, 1\}^T$, the set of instances $\{\mathcal{T}_t(\sigma_{<t})\}_{t=1}^T$ are distinct. This is true because we can construct a $\gamma$-shattered tree of much bigger depth and prune away repeated instances along a path to get a tree of depth $T$. Define $\mathcal{G}_T = \{g_{\sigma} \in \mathcal{G}| \sigma \in \{-1, 1\}^T\}$ to be the set of functions that shatter $\mathcal{T}$ with witness $\mathcal{R}$. Let $X \subseteq\mathcal{X}$ denote the set of examples that label the internal nodes of $\mathcal{T}$.  Consider the binary hypothesis class $\mathcal{H} = \{-1, 1\}^{X}$ which contains all possible functions from $X$ to $\{-1, 1\}$. By definition of $\mathcal{H}$, we must have $\text{Ldim}(\mathcal{H}) \geq T$. Therefore, $\mathcal{T}$, with left and right edges labeled by $-1$ and $+1$ respectively, is shattered by $\mathcal{H}$. Let $\mathcal{T}_{\pm}$ denote such a tree. Note that for all $t \in [T]$, we have $\mathcal{T}_{t}(\sigma_{<t}) = \mathcal{T}_{\pm,t}(\sigma_{<t})$. Since $\text{Ldim}(\mathcal{H}) \geq T$,  any realizable online learner for $\mathcal{H}$ must make at least $\frac{T}{2}$ mistakes in expectation for an adversary that plays according to a root-to-leaf path in $\mathcal{T}_{\pm}$ chosen \textit{uniformly at random}. However, using an agnostic online learner for $\mathcal{G}$ with respect to $\psi \circ d_1$, we construct a realizable online learner for $\mathcal{H}$ that achieves a \textit{sublinear} regret bound when an adversary plays according to a root-to-leaf path in $\mathcal{T}_{\pm}$ chosen uniformly at random. Since $\text{fat}^{\text{seq}}_{\gamma}(\mathcal{G}) = \infty$, $T$ can be made arbitrarily large, eventually giving us a contradiction. 

To that end,  let $\mathcal{A}$ be an online learner for $\mathcal{G}$ with respect to $\psi \circ d_1$ with regret $R_{\mathcal{A}}(T)$. 
Let $\sigma \sim \{-1, 1\}^T$ denote a sequence of $T$ i.i.d Rademacher random variables and $\{(\mathcal{T}_{\pm,t}(\sigma_{<t}), \sigma_t)\}_{t=1}^T$ the associated sequence of labeled instances determined by traversing $\mathcal{T}_{\pm}$ using $\sigma$. Note that $\{(\mathcal{T}_{\pm,t}(\sigma_{<t}), \sigma_t)\}_{t=1}^T$ is a sequence of labeled instances corresponding to a root-to-leaf path in $\mathcal{T}_{\pm}$ chosen \textit{uniformly at random}. By construction of $\mathcal{H}$, there exists a $h_{\sigma}^{\star} \in \mathcal{H}$ such that $h^{\star}_{\sigma}(\mathcal{T}_{\pm,t}(\sigma_{<t})) = \sigma_t$ for all $t \in [T]$ and therefore the stream is realizable by $\mathcal{H}$. Let $g_{\sigma}^{\star} \in \mathcal{G}$ be the function at the end of the root-to-leaf path corresponding to $\sigma$ in $\mathcal{T}$, the original tree shattered by $\mathcal{G}$. 

We now use $\mathcal{A}$ to construct an  agnostic online learner for $\mathcal{H}$ with sublinear regret on the stream $\{(\mathcal{T}_{\pm,t}(\sigma_{<t}), \sigma_t)\}_{t=1}^T$. Our algorithm is very similar to realizable-to-agnostic conversion in Theorem \ref{thm:onlreal2agn}. Namely, we construct a finite set of experts, each of which uses $\mathcal{A}$ to make predictions, but only updates $\mathcal{A}$ on certain rounds. Finally,  we run REWA using this set of Experts over our stream. For completeness' sake, we provide the full description below. 

For any bitstring $b \in \{0, 1\}^T$, let $\phi: \{t: b_t = 1\} \rightarrow \text{im}(\mathcal{G}^{\alpha})$ denote a function mapping time points where $b_t = 1$ to elements in the discretized image space $\text{im}(\mathcal{G}^{\alpha})$. Let $\Phi_b: (\text{im}(\mathcal{G}^{\alpha}))^{\{t: b_t = 1\}}$ denote all such functions $\phi$. For every $g \in \mathcal{G}$, let $\phi_b^g \in \Phi_b$ be the mapping such that for all $t \in \{t: b_t = 1\}$, $\phi_b^g(t) = g^{\alpha}(\mathcal{T}_{t}(\sigma_{<t}))$. Let $|b| = |\{t: b_t = 1\}|$. For every $b \in \{0, 1\}^T$ and $\phi \in \Phi_b$, define an Expert $E_{b, \phi}$. Expert $E_{b, \phi}$, formally presented in Algorithm  \ref{alg:expert_contrad}, uses $\mathcal{A}$ to make predictions in each round. However, $E_{b, \phi}$ only updates $\mathcal{A}$ on those rounds where $b_t = 1$, using $\phi$ to produce a labeled instance $(\mathcal{T}_{t}(\sigma_{<t}), \phi(t))$. For every $b \in \{0, 1\}^T$, let $\mathcal{E}_b = \bigcup_{\phi \in \Phi_b} \{E_{b, \phi}\}$ denote the set of all Experts parameterized by functions $\phi \in \Phi_b$. If $b$ is the all zeros bitstring, then $\mathcal{E}_b$ is empty. Therefore, we actually define $\mathcal{E}_b = \{E_0\} \cup \bigcup_{\phi \in \Phi_b} \{E_{b, \phi}\}$, where $E_0$ is the expert that never updates $\mathcal{A}$.  Note that $1 \leq |\mathcal{E}_b| \leq (\frac{2}{\alpha})^{|b|}$.

\begin{algorithm}
\caption{Expert($b$, $\phi$)}
\setcounter{AlgoLine}{0}
\label{alg:expert_contrad}
\KwIn{Independent copy of Online Learner $\mathcal{A}$ for $\psi \circ d_1$}
\For{$t = 1,...,T$} {
    Receive example $\mathcal{T}_{\pm,t}(\sigma_{<t})$

    Predict $\hat{y}_t = 2\,\mathbbm{1}\{\mathcal{A}(\mathcal{T}_{\pm,t}(\sigma_{<t}))  \geq \mathcal{R}_t(\sigma_{<t})\} - 1$

    Receive $y_t = \sigma_t$
    
    \uIf{$b_t = 1$}{        
        Update $\mathcal{A}$ by passing $(\mathcal{T}_{\pm,t}(\sigma_{<t}), \phi(t))$
    }
}
\end{algorithm}

With this notation in hand, we are now ready to present Algorithm \ref{alg:onlcontrad}, our main online learner $\mathcal{Q}$ for $\mathcal{H}$ with respect to 0-1 loss. The analysis is similar to the one before, but we include it below for completeness sake. 

\begin{algorithm}
\caption{Online learner $\mathcal{Q}$ for $\Hcal$ with respect to 0-1 loss}
\setcounter{AlgoLine}{0}
\label{alg:onlcontrad}
\KwIn{ Parameters $0 < \beta < 1$ and $0 < \alpha < \frac{\gamma}{2}$}

Let $B \in \{0, 1\}^T$ such that  $B_t \overset{\text{iid}}{\sim} \text{Bernoulli}(\frac{T^{\beta}}{T})$ 


Construct the set of experts $\mathcal{E}_B = \{E_0\} \cup \bigcup_{\phi \in \Phi_B} \{E_{B, \phi}\}$ according to Algorithm \ref{alg:expert_contrad}.

Run REWA $\mathcal{P}$ using $\mathcal{E}_B$ and the 0-1 loss over the stream $(\mathcal{T}_{\pm,1}(\sigma_{<1}), \sigma_1), ... ,(\mathcal{T}_{\pm,T}(\sigma_{<T}), \sigma_T)$

\end{algorithm}

Our goal now is to show that $\mathcal{Q}$ enjoys sublinear expected regret. There are three main sources of randomness: the randomness involved in sampling $B$, the internal randomness  of each independent copy of the online learner $\mathcal{A}$, and the internal randomness  of REWA. Let $B, A$ and $P$ denote the random variable associated with these sources of randomness respectively. By construction, $A, B$, and $P$ are independent. 

Using Theorem 21.11 in \cite{ShwartzDavid} and the fact that $A,B $ and $P$, are independent, REWA guarantees,  
$$\mathbb{E}\left[\sum_{t=1}^T \mathbbm{1}\{\mathcal{P}(\mathcal{T}_{\pm,t}(\sigma_{<t})) \neq \sigma_t \}\right] \leq \mathbb{E}\left[\inf_{E \in \mathcal{E}_B} \sum_{t=1}^T \mathbbm{1}\{E(\mathcal{T}_{\pm,t}(\sigma_{<t})) \neq \sigma_t \}\right] + \mathbb{E}\left[\sqrt{2T\ln(|\mathcal{E}_B|)} \right].$$

 Therefore,  
\begin{align*}
   \mathbb{E}\left[\sum_{t=1}^T \mathbbm{1}\{\mathcal{Q}(\mathcal{T}_{\pm,t}(\sigma_{<t})) \neq \sigma_t \}\right]  &= \mathbb{E}\left[\sum_{t=1}^T \mathbbm{1}\{\mathcal{P}(\mathcal{T}_{\pm,t}(\sigma_{<t})) \neq \sigma_t \}\right]\\
    &\leq \mathbb{E}\left[\inf_{E \in \mathcal{E}_B} \sum_{t=1}^T \mathbbm{1}\{E(\mathcal{T}_{\pm,t}(\sigma_{<t})) \neq \sigma_t \}\right] + \mathbb{E}\left[\sqrt{2T\ln(|\mathcal{E}_B|)}\right]\\
    &\leq \mathbb{E}\left[\sum_{t=1}^T \mathbbm{1}\{E_{B, \phi_B^{g^{\star}_{\sigma}}}(\mathcal{T}_{\pm,t}(\sigma_{<t})) \neq \sigma_t \}\right] + \mathbb{E}\left[\sqrt{2T\ln(|\mathcal{E}_B|)}\right].
\end{align*}
In the last step, we used the fact that for all $b \in \{0, 1\}^T$ and $g \in \mathcal{G}$, $E_{b, \phi_b^g} \in \mathcal{E}_b$. 

It now suffices to upperbound $\mathbb{E}\left[\sum_{t=1}^T \mathbbm{1}\{E_{B, \phi_B^{g^{\star}_{\sigma}}}(\mathcal{T}_{\pm,t}(\sigma_{<t})) \neq \sigma_t \}\right]$. We use the same notation used to prove Theorem \ref{thm:onlreal2agn}, but for the sake of completeness, we restate it here. Given an online learner $\mathcal{A}$ for $\psi \circ d_1$, an instance $x \in \mathcal{X}$, and an ordered sequence of labeled examples $L \in (\mathcal{X} \times [0, 1])^*$, let $\mathcal{A}(x|L)$ be the random variable denoting the prediction of $\mathcal{A}$ on the instance $x$ after running and updating on $L$. For any $b\in \{0, 1\}^T$, $g^{\alpha} \in \mathcal{G}^{\alpha}$, and $t \in [T]$, let $L^g_{b_{< t}} = \{(\mathcal{T}_{\pm,t}(\sigma_{<i}), g^{\alpha}(\mathcal{T}_{\pm,t}(\sigma_{<i}))): i < t \text{ and } b_i = 1\}$ denote the \textit{subsequence} of the sequence of labeled instances $\{(\mathcal{T}_{\pm,t}(\sigma_{<i}), g^{\alpha}(\mathcal{T}_{\pm,t}(\sigma_{<i})))\}_{i=1}^{t-1}$ where $b_i = 1$. Using this notation, we can write
\begin{align*}
\mathbb{E}\Bigg[\sum_{t=1}^T \mathbbm{1} \{ E_{B, \phi_B^{g^{\star}_{\sigma}}}(\mathcal{T}_{\pm,t}(\sigma_{<t})) \neq \sigma_t \}\Bigg] &=  \mathbb{E}\left[\sum_{t=1}^T \mathbbm{1}\Big\{2\,\mathbbm{1}\{\mathcal{A}(\mathcal{T}_{\pm,t}(\sigma_{<t})|L^{g^{\star}_{\sigma}}_{B_{< t}})  \geq \mathcal{R}_t(\sigma_{<t})\} - 1 \neq \sigma_t \Big\} \right]\\
    &= \mathbb{E}\left[\sum_{t=1}^T \mathbbm{1}\Big\{2\,\mathbbm{1}\{\mathcal{A}(\mathcal{T}_{\pm,t}(\sigma_{<t})|L^{g^{\star}_{\sigma}}_{B_{< t}}) \geq \mathcal{R}_t(\sigma_{<t})\} - 1 \neq h^{\star}_{\sigma}(\mathcal{T}_{\pm,t}(\sigma_{<t}))\Big\} \right]\\
    &\leq \mathbb{E}\left[\sum_{t=1}^T \mathbbm{1}\left\{|\mathcal{A}(\mathcal{T}_{\pm,t}(\sigma_{<t})|L^{g^{\star}_{\sigma}}_{B_{< t}}) - g^{\star}_{\sigma}(\mathcal{T}_{\pm,t}(\sigma_{<t}))| \geq \gamma \right\} \right] \\
    &\leq \mathbb{E}\left[\sum_{t=1}^T \mathbbm{1}\left\{|\mathcal{A}(\mathcal{T}_{\pm,t}(\sigma_{<t})|L^{g^{\star}_{\sigma}}_{B_{< t}}) - g^{\star, \alpha}_{\sigma}(\mathcal{T}_{\pm,t}(\sigma_{<t}))| \geq \frac{\gamma}{2} \right\} \right] \\
    &\leq \mathbb{E}\left[\sum_{t=1}^T \mathbbm{1}\left\{\psi \circ d_1(\mathcal{A}(\mathcal{T}_{\pm,t}(\sigma_{<t})|L^{g^{\star}_{\sigma}}_{B_{< t}}), g^{\star, \alpha}_{\sigma}(\mathcal{T}_{\pm,t}(\sigma_{<t}))) \geq \psi(\frac{\gamma}{2})\right\} \right] \\\\
    &\leq \frac{1}{\psi(\frac{\gamma}{2})}\mathbb{E}\left[\sum_{t=1}^T \psi \circ d_1(\mathcal{A}(\mathcal{T}_{\pm,t}(\sigma_{<t})|L^{g^{\star}_{\sigma}}_{B_{< t}}), g^{\star, \alpha}_{\sigma}(\mathcal{T}_{\pm,t}(\sigma_{<t}))) \right]\\
\end{align*}
The first inequality follows from $\gamma$-shattering. Indeed, if $2\mathbbm{1}\{\mathcal{A}(\mathcal{T}_{\pm,t}(\sigma_{<t})|L^{g^{\star}_{\sigma}}_{B_{< t}}) \geq \mathcal{R}_t(\sigma_{<t})\} - 1 \neq h^{\star}_{\sigma}(\mathcal{T}_{\pm,t}(\sigma_{<t}))$, then $\mathcal{A}(\mathcal{T}_{\pm,t}(\sigma_{<t})|L^{g^{\star}_{\sigma}}_{B_{< t}})$ and $g^{\star}_{\sigma}$ must lie on opposite sides of the witness $\mathcal{R}_t(\sigma_{<t})$. The second inequality stems from the choice of $\alpha < \frac{\gamma}{2}$. The third inequality follows from the monotonicity of $\psi$. The last inequality follows from Markov's.  Now, we can continue like before. 
\begin{align*}
\mathbb{E}\Bigg[\sum_{t=1}^T \psi \circ d_1(&\mathcal{A}(\mathcal{T}_{\pm,t}(\sigma_{<t})|L^{g^{\star}}_{B_{< t}}), g^{\star, \alpha}_{\sigma}(\mathcal{T}_{\pm,t}(\sigma_{<t}))) \Bigg] \\
&= \mathbb{E}\left[\sum_{t=1}^T \psi \circ d_1(\mathcal{A}(\mathcal{T}_{\pm,t}(\sigma_{<t})|L^{g^{\star}}_{B_{< t}}), g^{\star, \alpha}_{\sigma}(\mathcal{T}_{\pm,t}(\sigma_{<t}))) \frac{\mathbbm{P}\left[B_t = 1 \right]}{\mathbbm{P}\left[B_t = 1 \right]}\right]\\
&= \frac{T}{T^{\beta}}\mathbb{E}\left[\sum_{t=1}^T \psi \circ d_1(\mathcal{A}(\mathcal{T}_{\pm,t}(\sigma_{<t})|L^{g^{\star}}_{B_{< t}}), g^{\star, \alpha}_{\sigma}(\mathcal{T}_{\pm,t}(\sigma_{<t}))) \mathbbm{1}\{B_t = 1 \}\right]\\
\end{align*}
To see the last equality, note that the prediction $\mathcal{A}(\mathcal{T}_{\pm,t}(\sigma_{<t})|L^{g^{\star}}_{B_{< t}})$ only depends on bitstring ($B_1, \ldots, B_{t-1}$), the string $(\sigma_1, ..., \sigma_{t-1})$, and the internal randomness of $A$, all of which are independent of $B_t$. Thus, we have  
\begin{align*}
    \mathbb{E}\left[\psi \circ d_1(\mathcal{A}(\mathcal{T}_{\pm,t}(\sigma_{<t})|L^{g^{\star}}_{B_{< t}}), g^{\star, \alpha}_{\sigma}(\mathcal{T}_{\pm,t}(\sigma_{<t}))) \mathbbm{1}\{B_t = 1 \}\right]  &= \mathbb{E}\left[\psi_1 \circ d_1(\mathcal{A}_1(x_t|L^{g^{\star}_{\sigma}}_{B_{< t}}), y_t) \right] \, \mathbb{E}\left[ \indicator \{B_t=1\}\right]\\
    &= \mathbb{E}\left[\psi_1 \circ d_1(\mathcal{A}_1(x_t|L^{g^{\star}_{\sigma}}_{B_{< t}}), y_t) \right] \mathbbm{P}[B_t =1]
\end{align*}
\noindent as needed. Continuing onwards, 
\begin{align*}
   \mathbb{E}\Bigg[\sum_{t=1}^T \psi \circ d_1(&\mathcal{A}(\mathcal{T}_{\pm,t}(\sigma_{<t})|L^{g^{\star}_{\sigma}}_{B_{< t}}), g^{\star, \alpha}_{\sigma}(\mathcal{T}_{\pm,t}(\sigma_{<t}))) \Bigg] \\
   &= \frac{T}{T^{\beta}}\mathbb{E}\left[\sum_{t=1}^T \psi \circ d_1(\mathcal{A}(\mathcal{T}_{\pm,t}(\sigma_{<t})|L^{g^{\star}_{\sigma}}_{B_{< t}}), g^{\star, \alpha}_{\sigma}(\mathcal{T}_{\pm,t}(\sigma_{<t}))) \mathbbm{1}\{B_t = 1 \}\right]\\
   &= \frac{T}{T^{\beta}}\mathbb{E}\left[\sum_{t:B_t = 1} \psi \circ d_1(\mathcal{A}(\mathcal{T}_{\pm,t}(\sigma_{<t})|L^{g^{\star}_{\sigma}}_{B_{< t}}), g^{\star, \alpha}_{\sigma}(\mathcal{T}_{\pm,t}(\sigma_{<t})))\right]\\
    &= \frac{T}{T^{\beta}}\mathbb{E}\left[\mathbb{E}\left[\sum_{t: B_t = 1} \psi \circ d_1(\mathcal{A}(\mathcal{T}_{\pm,t}(\sigma_{<t})|L^{g^{\star}_{\sigma}}_{B_{< t}}), g^{\star, \alpha}_{\sigma}(\mathcal{T}_{\pm,t}(\sigma_{<t}))) \bigg|B\right]\right]\\
    &\leq \frac{T}{T^{\beta}}\mathbb{E}\left[\sum_{t: B_t = 1} \psi \circ d_1(g^{\star}_{\sigma}(\mathcal{T}_{\pm,t}(\sigma_{<t})), g^{\star, \alpha}_{\sigma}(\mathcal{T}_{\pm,t}(\sigma_{<t}))) + R_{\mathcal{A}}(|B|) \right]\\
\end{align*}
The last inequality follows from the fact that $\mathcal{A}$ is an online learner for $\psi \circ d_1$ with regret bound $R_{\mathcal{A}}(T)$ and is updated using a stream labeled by $g^{\star, \alpha}$ only when $B_t = 1$. Now, we can upperbound: 
\begin{align*}
\frac{T}{T^{\beta}}\mathbb{E}\left[\sum_{t: B_t = 1} \psi \circ d_1(g^{\star}(\mathcal{T}_{\pm,t}(\sigma_{<t})), g^{\star, \alpha}_{\sigma}(\mathcal{T}_{\pm,t}(\sigma_{<t})))\right] &+ \frac{T}{T^{\beta}}\mathbb{E}\left[ R_{\mathcal{A}}(|B|) \right]\\
&\leq \frac{T}{T^{\beta}}\mathbb{E}\left[\sum_{t: B_t = 1} \psi (\alpha) \right] + \frac{T}{T^{\beta}}\mathbb{E}\left[ R_{\mathcal{A}}(|B|) \right]\\
&\leq \frac{T}{T^{\beta}} \mathbb{E}\left[\alpha L|B|\right] + \frac{T}{T^{\beta}}\mathbb{E}\left[ R_{\mathcal{A}}(|B|) \right]\\
&= \alpha LT + \frac{T}{T^{\beta}}\mathbb{E}\left[ R_{\mathcal{A}}(|B|) \right]
\end{align*}
The first two inequalities follow from the fact that $\psi$ is monotonic, $L$-Lipschitz, $\psi(0) = 0$, and $d_1(g^{\star}(\mathcal{T}_{\pm,t}(\sigma_{<t})), g^{\star, \alpha}_{\sigma}(\mathcal{T}_{\pm,t}(\sigma_{<t})) \leq \alpha$.  Putting things together, we find that
\begin{align*}
    \mathbb{E}\left[\sum_{t=1}^T \mathbbm{1}\{\mathcal{Q}(\mathcal{T}_{\pm,t}(\sigma_{<t})) \neq \sigma_t \}\right]
    &\leq \mathbb{E}\left[\sum_{t=1}^T \mathbbm{1}\{E_{B, \phi_B^{g^{\star}_{\sigma}}}(\mathcal{T}_{\pm,t}(\sigma_{<t})) \neq \sigma_t \}\right] + \mathbb{E}\left[\sqrt{2T\ln(|\mathcal{E}_B|)}\right]\\
    &\leq   \frac{\alpha LT + \frac{T}{T^{\beta}}\mathbb{E}\left[ R_{\mathcal{A}}(|B|) \right]}{\psi(\frac{\gamma}{2})} + \mathbb{E}\left[\sqrt{2T\ln(|\mathcal{E}_B|)}\right]\\
    &\leq \frac{\alpha LT + \frac{T}{T^{\beta}}\mathbb{E}\left[ R_{\mathcal{A}}(|B|) \right]}{\psi(\frac{\gamma}{2})} + \mathbb{E}\left[\sqrt{2T|B|\ln(\frac{2}{\alpha})}\right].\\
\end{align*}
where the last inequality follows from the fact that that $|\mathcal{E}_B| \leq (\frac{2}{\alpha})^{|B|}$. By Jensen's inequality, we further get that, $\mathbb{E}\left[\sqrt{2T|B|\ln(\frac{2}{\alpha})}\right] \leq \sqrt{2T^{\beta + 1}\ln(\frac{2}{\alpha})}$, which implies that 
$$ \mathbb{E}\left[\sum_{t=1}^T \mathbbm{1}\{\mathcal{Q}(\mathcal{T}_{\pm,t}(\sigma_{<t})) \neq \sigma_t \}\right]  \leq \frac{\alpha LT + \frac{T}{T^{\beta}}\mathbb{E}\left[ R_{\mathcal{A}}(|B|) \right]}{\psi(\frac{\gamma}{2})} + \sqrt{2T^{\beta + 1}\ln(\frac{2}{\alpha})}.$$
Next,  by Lemma \ref{lem:woess}, there exists a concave sublinear function $ \overline{R}_{\Acal}(|B|)$ that upperbounds $R_{\Acal}(|B|)$. By Jensen's inequality, we obtain $\expect[\overline{R}_{\Acal}(|B|)] \leq \overline{R}_{\Acal}(T^{\beta})$, which yields
$$\mathbb{E}\left[\sum_{t=1}^T \mathbbm{1}\{\mathcal{Q}(\mathcal{T}_{\pm,t}(\sigma_{<t})) \neq \sigma_t \}\right]  \leq \frac{\alpha LT + \frac{T}{T^{\beta}} \overline{R}_{\mathcal{A}}(T^{\beta})}{\psi(\frac{\gamma}{2})} + \sqrt{2T^{\beta + 1}\ln(\frac{2}{\alpha})}.$$
Picking $\alpha = \frac{1}{LT}$ and $\beta \in (0, 1)$, gives that $\mathcal{Q}$ enjoys sublinear expected regret
$$\mathbb{E}\left[\sum_{t=1}^T \mathbbm{1}\{\mathcal{Q}(\mathcal{T}_{\pm,t}(\sigma_{<t})) \neq \sigma_t \}\right] \leq  \frac{1}{\psi(\frac{\gamma}{2})} + \frac{T}{\psi(\frac{\gamma}{2})T^{\beta}}\overline{R}_{\mathcal{A}}(T^{\beta}) + \sqrt{4T^{\beta + 1}\ln(LT)}.$$
Since $\overline{R}_{\mathcal{A}}(T^{\beta}) $ is sublinear in $T^{\beta}$, $\mathcal{Q}$ is a realizable online learner for $\mathcal{H}$ with \textit{sublinear} regret. Thus, for a sufficiently large $T$, $\mathbb{E}\left[\sum_{t=1}^T \mathbbm{1}\{\mathcal{Q}(\mathcal{T}_{\pm,t}(\sigma_{<t})) \neq \sigma_t \}\right] < \frac{T}{2}$. This is a contradiction because  $\{(\mathcal{T}_{\pm,t}(\sigma_{<t}), \sigma_t)\}_{t=1}^T$ is a realizable sequence of instances corresponding to a root-to-leaf path in $\mathcal{T}_{\pm}$ chosen \textit{uniformly at random} and thus any realizable online learner must suffer expected regret at least $\frac{T}{2}$. Thus, if there exists a scale $\gamma > 0$ such that $\text{fat}^{\text{seq}}_{\gamma}(\mathcal{G}) = \infty$, there cannot exist an online learner for $\mathcal{G}$ with respect to $\psi \circ d_1$.
\end{proof}

\end{document}